\documentclass{article} 
\usepackage{graphicx}
\usepackage{subcaption}
\usepackage{booktabs} 
\usepackage{hyperref}

\usepackage[preprint]{neurips_2026}
\usepackage{hyperref}
\usepackage{placeins}

\usepackage{amsmath}
\usepackage{amssymb}
\usepackage{mathtools}
\usepackage{amsthm}
\usepackage[capitalize,noabbrev]{cleveref}
\usepackage{hhline}
\usepackage{makecell}
\usepackage{algorithm}
\usepackage{algpseudocode}
\usepackage{url}
\usepackage{makecell}
\usepackage{wrapfig}
\usepackage[capitalize,noabbrev]{cleveref}
\usepackage{enumitem}

\usepackage{pifont}
\usepackage{tabu}
\usepackage{multirow}
\usepackage{xcolor,colortbl}
\usepackage{hhline}
\usepackage{ragged2e}

\theoremstyle{definition}
\newtheorem{definition}{Definition}

\newtheorem{lemma}{Lemma}
\newtheorem{proposition}{Proposition}

\title{On the Invariance and Generality\\of Neural Scaling Laws}

\author{%
  Xing Han \\
  Johns Hopkins University\\
  Baltimore, MD 21218, USA \\
  \texttt{xhan56@jhu.edu} \\
  \And
  Liu Ziyin \\
  NTT Research, MIT \\
  Cambridge, MA 02139, USA \\
  \texttt{ziyinl@mit.edu} \\
  \AND
  Suchi Saria \\
  Johns Hopkins University, Bayesian Health \\
  Baltimore, MD 21218, USA \\
  \texttt{ssaria1@jhu.edu} \\
  \And
  Paul Pu Liang \\
  MIT \\
  Cambridge, MA 02139, USA \\
  \texttt{ppliang@mit.edu} \\
}

\begin{document}

\maketitle

\begin{abstract}
Neural scaling laws establish a predictable relationship between model performance and data or compute, offering crucial guidance for resource allocation in new domains and tasks. Yet such laws are most needed precisely where they are hardest to obtain: fitting one for a new model–task pair demands expensive sweeps that typically exhaust the very compute budget the law is meant to economize. This paper poses the research question of how to develop generalizable scaling laws: laws fit once on a well-resourced source domain and reliably transported to new domains where running a full sweep is infeasible, which requires a fundamental understanding of when and why scaling properties change. We address this by identifying the right invariants: scaling laws are preserved under bijective (information-preserving) transformations of the data and modified in predictable, information-theoretically grounded ways under non-bijective transformations that lower its information resolution $\rho$: a single axis along which a law fit in one domain can be transported to another. We validate this across language, vision, and speech, and demonstrate two cross-domain applications: predicting scaling for language models trained on electronic health records from laws fit on general text, and predicting time-series classification scaling under varying levels of noise injection, recovering the data-scaling exponents to within $3\%$ error.
\end{abstract}

\section{Introduction} \label{sec:intro}
The development of foundation models has been governed by the empirical regularities known as Neural Scaling Laws (NSLs). These laws establish a predictable power-law relationship between a model's performance and the physical resources consumed during training, namely parameter count ($N$), dataset size ($D$), and compute budget. First formalized for language modeling \citep{kaplan2020scaling} and subsequently extended to vision \citep{zhai2022scaling}, multimodal learning \citep{aghajanyan2023scaling}, and beyond, these laws typically take the form $L \approx A N^{-\alpha} + B D^{-\beta}$, where $L$ is the pretraining loss and $A,B$ are coefficients, indicating that the pretraining loss decreases predictably as parameters or training data are added.

A key limitation of this framework is that fitting such a law is itself prohibitively expensive: for instance, deriving a scaling law for language models \citep{kaplan2020scaling} requires training a sweep of models spanning orders of magnitude in $N$ and $D$ and evaluating across multiple downstream tasks. This is infeasible in most real-world settings, where data, compute, or task diversity are all limited: yet these are precisely the settings in which practitioners most need to anticipate scaling behavior before committing to large-scale data collection or training runs \cite{zhang2025exploring, frey2023neural, chu2025revisiting, peng2025scaling}. A medical team, for example, may want to predict how a language model will scale on electronic health records without incurring the cost of training large models from scratch, a need that becomes acute when the target domain has limited data availability or when computational constraints preclude extensive experiments \cite{hernandez2021scaling, muennighoff2023scaling, goyal2024scaling, ye2024data}.
Developing generalizable scaling laws requires a fundamental understanding of when and why scaling properties change across data — this would enable practitioners to predict, in a systematic way, how a law fit on natural text will transform when applied to medical text, code, or other downstream domains, without re-running expensive sweeps. While recent work has begun to explore how data quality \citep{subramanyam2025scaling}, volume \citep{goyal2024scaling}, and diversity \citep{chen2025revisiting} affect scaling, these studies remain confined to a single domain and task, and model data degradation solely as a reduction in effective sample size. As a consequence, they predict the same irreducible loss floor regardless of data quality, at odds with information-theoretic limits when task-relevant information is irreversibly destroyed.

In this work, we propose a novel NSL framework grounded in information theory. Our framework characterizes scaling law sensitivity through the lens of data transformations, since any change of domain, modality, or preprocessing pipeline can be decomposed into a sequence of such transformations, making them the natural unit through which to study transferability. We distinguish between \textbf{bijective transformations}, which are information-preserving and (we prove) leave the scaling exponents invariant, and \textbf{non-bijective transformations}, which lower the data's information content and degrade scaling through two mechanisms: \textit{variance inflation}, which raises the sample count needed for equivalent statistical precision, and \textit{optimal loss shift}, which raises the irreducible loss floor itself.
These two mechanisms yield a new scaling law parameterized by the information resolution $\rho(T) = I(X'; Y)/I(X;Y)$, the fraction of task-relevant mutual information that survives a transformation $T$ from clean input $X$ to its transformed counterpart $X'$, with $\rho=1$ recovering the standard Chinchilla form and $\rho<1$ capturing principled, information-theoretically grounded deviations from it. Our experiments demonstrate the accuracy of this law across language, vision, and speech domains under a range of bijective and non-bijective transformations, recovering the data-scaling exponents $\beta$ to within $3\%$ error when transferred across transformation types. We further show two cross-domain applications: predicting scaling for language models trained on electronic health records using a law fit on general text, and predicting how time-series foundation models scale under varying levels of real-world noise injection, in both cases obtaining accurate forecasts without training models at the target scale.

\textbf{Related Work.} Neural scaling laws were first formalized for language \citep{kaplan2020scaling, hoffmann2022training} and extended to vision \citep{zhai2022scaling, alabdulmohsin2023getting}, multimodal \citep{henighan2020scaling, cherti2023reproducible, aghajanyan2023scaling, shukor2025scaling}, and time-series \citep{edwards2024scaling} settings, generally treating data as a monolithic quantity. A complementary line predicts scaling behavior without exhaustive training, via observational capability spaces \citep{ruan2024observational}, intermediate checkpoints \citep{choshen2024hitchhiker}, loss-to-loss transfer \citep{brandfonbrener2024loss}, or domain-weighted mixtures \citep{shukor2025scaling}, but offers no principled account of when scaling exponents should change versus remain invariant. Data-dependent formulations move closer to this question, refining laws to account for noise \citep{bansal2022data}, quality-conditioned filtering \citep{goyal2024scaling, li2024scalingfilter}, diversity and syntheticity \citep{chang2024scaling}, and density-driven sub-scaling \citep{chen2025revisiting}, yet model degradation only as effective-sample-size reduction and predict a quality-independent loss floor. Finally, data transformations - bijective ones underlying normalizing flows and ICA \citep{rezende2015variational, papamakarios2021normalizing, hyvarinen2023nonlinear}, and non-bijective ones such as quantization and dimensionality reduction \citep{cosman2002using, lu2019learning, alqahtani2021deep, lang2024comprehensive} are pervasive in practice but have not been connected to scaling theory. Our framework unifies these threads through information resolution.

\section{NSL Sensitivity to Data Transformation} \label{sec:method}

In this section, we propose that NSLs are fundamentally governed by the \textbf{information resolution} $\rho(T) = I(X'; Y)/I(X;Y)$ of the dataset: the fraction of task-relevant mutual information that a transformation $T: X \mapsto X'$ preserves about the target $Y$. This single quantity links the two transformation classes we study: a transformation is bijective when $\rho(T) = 1$ (all task-relevant information is preserved, regardless of how the input is re-encoded), and non-bijective when $\rho(T) < 1$ (some task-relevant information is irreversibly destroyed). In Section \ref{subsec:invertible}, we show that NSLs are invariant under bijective transformations, since $\rho=1$ leaves the law's data-dependent term unchanged. In Section \ref{subsec:nonbijective}, we characterize how $\rho<1$ modifies NSLs through two distinct mechanisms — variance inflation and Bayes-risk elevation. To support our framework, in Appendix \ref{subsec:info_resolution}, we discuss how $\rho$ can be estimated in practice and used to predict scaling behavior across domains.

\subsection{NSLs are Robust to Bijective Transformations} \label{subsec:invertible}

Real-world model development routinely subjects data to transformations: tokenizers re-encode text, frequency-domain conversions re-represent audio, color-space changes re-parameterize images, and normalizing flows re-distribute features. Each one alters the surface form of the input while, in many cases, preserving every bit of information relevant to the downstream task. We formalize this class first, since it provides the natural baseline against which the more interesting non-bijective case is compared: any change in scaling behavior under a bijective transformation must come from optimization dynamics, not from a change in what the data fundamentally encodes.

\begin{definition}[Bijective Transformation]
\label{def:bijective}
A transformation $T: \mathcal{X} \to \mathcal{X}'$ is \emph{bijective} if it satisfies:
\begin{enumerate}
    \item \textbf{Injectivity}: For all $x_1, x_2 \in \mathcal{X}$, $T(x_1) = T(x_2)$ implies $x_1 = x_2$.
    \item \textbf{Surjectivity}: For every $x' \in \mathcal{X}'$, there exists $x \in \mathcal{X}$ such that $T(x) = x'$.
\end{enumerate}
Equivalently, $T$ is bijective if and only if there exists an inverse mapping $T^{-1}: \mathcal{X}' \to \mathcal{X}$ such that $T^{-1}(T(x)) = x$ for all $x \in \mathcal{X}$ and $T(T^{-1}(x')) = x'$ for all $x' \in \mathcal{X}'$.
\end{definition}
We consider two classes of bijective transformations.

\textbf{Linear Transformations.} Given a dataset with representations in $\mathbb{R}^d$, a linear transformation is defined as:
$T_{\text{linear}}: \mathbb{R}^d \to \mathbb{R}^d, ~ x \mapsto Wx ~$
where $W \in \mathbb{R}^{d \times d}$ is a full-rank matrix with $\det(W) \neq 0$. The bijectivity follows from the existence of the inverse transformation $T^{-1}_{\text{linear}}(x') = W^{-1}x'$.

\textbf{Nonlinear Bijective Transformations.} A nonlinear transformation $T_{\text{nonlinear}}: \mathbb{R}^d \to \mathbb{R}^d$ is bijective if it is a diffeomorphism: a smooth bijection with a smooth inverse. Such transformations can be constructed through compositions of bijective layers:
$
T_{\text{nonlinear}} = f_L \circ f_{L-1} \circ \cdots \circ f_1
$
where each $f_\ell$ is an bijective function. The composition of bijective functions is bijective, since $(f_L \circ \cdots \circ f_1)^{-1} = f_1^{-1} \circ \cdots \circ f_L^{-1}$.
A transformation $T$ is information-preserving with respect to a random variable $X$ if the entropy is conserved: $H(T(X)) = H(X)$. Both discrete and continuous random variables under diffeomorphisms satisfy this property \citep{nair2006entropy, parra1996statistical}. We now provide theoretical grounding for why NSLs remain invariant under bijective transformations.

\begin{lemma}[Mutual Information Invariance]
\label{thm:mi_invariance}
Let $X$ be the input data and $Y$ be the target labels. For any bijective transformation $T$, the mutual information is preserved:
$I(T(X); Y) = I(X; Y).$
\end{lemma}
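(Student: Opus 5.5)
The cleanest route is to avoid entropy bookkeeping of $T(X)$ directly, since for continuous $X$ differential entropy is \emph{not} invariant under diffeomorphisms (it shifts by $\mathbb{E}\log|\det J_T|$). Instead I will prove $I(T(X);Y)=I(X;Y)$ by two applications of the data processing inequality. Because $Y \to X \to T(X)$ is a Markov chain ($T(X)$ is a deterministic function of $X$), data processing gives $I(T(X);Y)\le I(X;Y)$. Because $T$ is bijective, Definition~\ref{def:bijective} supplies $T^{-1}$ with $X=T^{-1}(T(X))$. Hence $Y\to T(X)\to X$ is also a Markov chain, and data processing gives $I(X;Y)\le I(T(X);Y)$. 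The two inequalities together yield equality.

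To make this rigorous in both the discrete and continuous settings, I would state the measurability requirement explicitly: $T$ and $T^{-1}$ must be measurable. This holds automatically for the full-rank linear maps and diffeomorphisms considered above, since they are continuous with continuous inverse. Given this, $\sigma(T(X))=\sigma(X)$, and since mutual information depends only on the joint law of the pair up to measurable relabeling of each coordinate, the equality also follows directly from the Gelfand--Yaglom/Dobrushin definition of $I$ as a supremum over finite partitions. Partitions of $\mathcal{X}'$ pull back under $T$ to partitions of $\mathcal{X}$, and partitions of $\mathcal{X}$ pull back under $T^{-1}$ to partitions of $\mathcal{X}'$. The two suprema therefore range over the same family of quantized joint distributions.

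As a sanity check in the smooth case, I would also verify the identity by densities. With $X'=T(X)$, the change of variables gives $p_{X'}(x')=p_X(T^{-1}x')\,|\det J_{T^{-1}}(x')|$, and the same holds conditionally on $Y=y$. The Jacobian factor therefore cancels in the ratio $p_{X'|Y}/p_{X'}$, so $I(X';Y)=\mathbb{E}\log\frac{p_{X'|Y}(X'|Y)}{p_{X'}(X')}=\mathbb{E}\log\frac{p_{X|Y}(X|Y)}{p_X(X)}=I(X;Y)$. Equivalently, $h(X')$ and $h(X'|Y)$ shift by the same term $\mathbb{E}\log|\det J_T(X)|$, which cancels in the difference.

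The main obstacle is conceptual rather than computational: the paper's preceding remark that $H(T(X))=H(X)$ under diffeomorphisms holds only for discrete entropy, not for differential entropy. The proof must therefore not route through invariance of $h(X)$ itself, only through invariance of the \emph{difference} $h(X)-h(X|Y)$. The data-processing argument is the one to present, because it sidesteps this issue and needs only measurability of $T^{-1}$.
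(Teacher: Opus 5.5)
Your proof is correct and is essentially the paper's argument: the paper also applies the data processing inequality once to $T$ and once to $T^{-1}$ (via $X = T^{-1}(T(X))$), then combines the two inequalities. Your additions are sound but not in the paper: the measurability caveat, the partition-based check, and the Jacobian-cancellation verification. Your remark that differential entropy is not itself invariant under diffeomorphisms correctly flags an overstatement in the paper's preceding text, not a problem with the lemma.
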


\begin{proof}
By the data processing inequality, for any $T$, we have $I(T(X); Y) \leq I(X; Y)$. Since $T$ is bijective, we can also apply the transformation $T^{-1}$ to obtain $I(X; Y) = I(T^{-1}(T(X)); Y) \leq I(T(X); Y)$. Combining both inequalities yields $I(T(X); Y) = I(X; Y)$.
\end{proof}
By mutual information definition: $I(X; Y) = H(Y) - H(Y|X)$, since $H(Y)$ depends only on the target distribution and $I(T(X); Y) = I(X; Y)$ by Lemma~\ref{thm:mi_invariance}, we also have $H(Y|T(X)) = H(Y|X)$.

\textbf{Connection to Model Performance.} The Chinchilla Scaling Law \citep{hoffmann2022training} specifies that
\begin{equation}
    L(N, D) = \frac{A}{N^{\alpha}} + \frac{B}{D^{\beta}} + E,
    \label{eq:Chinchilla_law}
\end{equation}
where $N$ is the number of training parameters, $D$ is the number of training tokens, $\alpha/\beta$ is the exponent for model/data scaling, and $E$ is the irreducible loss.
Lemma~\ref{thm:mi_invariance} has direct implications for several learning-theoretic quantities below that govern NSL behavior. 

\begin{proposition}[Bayes Risk Invariance]
\label{prop:bayes_risk}
Let $T: \mathcal{X} \to \mathcal{X}'$ be a bijective transformation. The Bayes-optimal risk satisfies:
\begin{equation}
R^*_{T(X)} := \inf_{f: \mathcal{X}' \to \mathcal{Y}} \mathbb{E}[\ell(f(T(X)), Y)] = \inf_{g: \mathcal{X} \to \mathcal{Y}} \mathbb{E}[\ell(g(X), Y)] =: R^*_X
\end{equation}
That is, the irreducible loss $E$ in the scaling law remains unchanged under bijective transformations.
\end{proposition}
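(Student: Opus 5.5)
The plan is a direct bijection between the two function classes over which the infima are taken. Composition with $T$ or $T^{-1}$ maps predictors on one input space to predictors on the other without changing the risk, so each infimum bounds the other.

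\emph{First inequality, $R^*_X \le R^*_{T(X)}$.} Fix any $f:\mathcal{X}'\to\mathcal{Y}$ and define $g := f\circ T:\mathcal{X}\to\mathcal{Y}$. Pointwise $g(X) = f(T(X))$, so $\mathbb{E}[\ell(g(X),Y)] = \mathbb{E}[\ell(f(T(X)),Y)]$. Hence $R^*_X \le \mathbb{E}[\ell(f(T(X)),Y)]$ for every $f$. Taking the infimum over $f$ gives the inequality.

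\emph{Second inequality, $R^*_{T(X)} \le R^*_X$.} Fix any $g:\mathcal{X}\to\mathcal{Y}$ and set $f := g\circ T^{-1}$, using the inverse supplied by Definition~\ref{def:bijective}. Since $T^{-1}(T(x)) = x$ for all $x$, we have $f(T(X)) = g(X)$ pointwise, so the two expected losses coincide. Taking the infimum over $g$ gives the reverse inequality. Combining the two yields $R^*_{T(X)} = R^*_X$. In fact the maps $f\mapsto f\circ T$ and $g\mapsto g\circ T^{-1}$ are mutually inverse bijections between the admissible predictor classes, and both preserve risk.

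\emph{Main obstacle: measurability.} The infima should range over measurable predictors, so $f\circ T$ and $g\circ T^{-1}$ must be measurable. Composition preserves measurability when $T$ and $T^{-1}$ are both measurable. This holds for the classes considered here: full-rank linear maps and diffeomorphisms are homeomorphisms, hence Borel bimeasurable. In the discrete case it is trivial. For an abstract bijection one would either assume bimeasurability, or invoke the Lusin--Souslin theorem: a Borel-measurable injection between Polish spaces has a Borel inverse on its image. I will state this standing assumption explicitly.

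\emph{Link to the scaling law.} To connect to the scaling law, note that the irreducible term $E$ in \eqref{eq:Chinchilla_law} is the large-$N,D$ limit of the achievable loss, i.e.\ $R^*$. The equality above therefore gives the claimed invariance of $E$. For log loss this is consistent with Lemma~\ref{thm:mi_invariance}: there $R^* = H(Y\mid X)$, and we already have $H(Y\mid T(X)) = H(Y\mid X)$.
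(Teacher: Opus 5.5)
Your proposal is correct and follows essentially the same route as the paper: the paper also argues that $f \mapsto f\circ T^{-1}$ is a risk-preserving bijection between the predictor classes and then takes the infimum on both sides. Your split into two explicit inequalities and your bimeasurability assumption (via homeomorphism or Lusin--Souslin) make rigorous what the paper leaves implicit; they do not change the argument.
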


\begin{proposition}[Sample Complexity Preservation]
\label{prop:sample_complexity}
The sample complexity for achieving $\epsilon$-excess risk is preserved:
$
n_\epsilon(T(X), Y) = n_\epsilon(X, Y),
$
where $n_\epsilon$ denotes the minimum number of samples required to achieve excess risk at most $\epsilon$ with high probability. Consequently, the data scaling exponent $\beta$ remains invariant.
\end{proposition}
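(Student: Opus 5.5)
The approach is a reduction argument: any learning algorithm for the transformed problem $(T(X),Y)$ becomes one for the original problem $(X,Y)$, and vice versa, with identical risks on every sample. Preservation of $n_\epsilon$ then follows from symmetry. The one caveat is that $n_\epsilon$ must be understood as the minimax sample complexity over \emph{all} measurable learning rules, or at least over a hypothesis class closed under composition with $T$ and $T^{-1}$.

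\textbf{Step 1: excess risk is preserved by composition.} For a predictor $f:\mathcal{X}'\to\mathcal{Y}$, set $g=f\circ T$. Then $\mathbb{E}[\ell(f(T(X)),Y)]=\mathbb{E}[\ell(g(X),Y)]$ pointwise in the joint law. Proposition~\ref{prop:bayes_risk} gives $R^*_{T(X)}=R^*_X$, so the excess risks agree: $\mathcal{E}_{T(X)}(f)=\mathcal{E}_X(f\circ T)$. Symmetrically, $\mathcal{E}_X(g)=\mathcal{E}_{T(X)}(g\circ T^{-1})$ for any $g:\mathcal{X}\to\mathcal{Y}$.

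\textbf{Step 2: transport algorithms.} Let $\mathcal{A}'$ be an algorithm that, from $n$ i.i.d.\ samples $\{(T(x_i),y_i)\}$, outputs $\hat f$ with $\mathcal{E}_{T(X)}(\hat f)\le\epsilon$ with probability at least $1-\delta$. Define $\mathcal{A}$ on original samples $\{(x_i,y_i)\}$ by applying $T$ to each input, running $\mathcal{A}'$, and returning $\hat f\circ T$. Because $(T(x_i),y_i)$ are exactly i.i.d.\ draws from the law of $(T(X),Y)$, Step 1 shows $\mathcal{A}$ attains the same $(\epsilon,\delta)$ guarantee with $n$ samples. Hence $n_\epsilon(X,Y)\le n_\epsilon(T(X),Y)$. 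Running the same construction with $T^{-1}$ gives the reverse inequality, and so equality holds. This mirrors the two-sided data-processing argument of Lemma~\ref{thm:mi_invariance}.

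\textbf{Step 3: invariance of $\beta$.} In the Chinchilla form \eqref{eq:Chinchilla_law}, take $N\to\infty$ or fix the model. The data term $B D^{-\beta}$ is the excess risk achievable with $D$ samples. Inverting it gives $n_\epsilon\asymp (B/\epsilon)^{1/\beta}$. Since the map $\epsilon\mapsto n_\epsilon$ is identical for both problems, the excess-risk-versus-$D$ curves coincide. Therefore $\beta$ is the same, and $B$ is the same as well. Together with $E$ being unchanged by Proposition~\ref{prop:bayes_risk}, this gives invariance of the data-dependent part of the law.

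\textbf{Main obstacle.} The key difficulty is Step 2 when $n_\epsilon$ refers to a fixed model class $\mathcal{F}$, such as a given network architecture trained by SGD, rather than to all algorithms. In that case $\mathcal{F}\circ T$ need not equal $\mathcal{F}$, and the equality can fail through approximation and optimization effects. I would handle this by stating the result for the information-theoretic or minimax sample complexity, or under the assumption that $\mathcal{F}$ is closed under composition with $T^{\pm1}$. One example is a linear $T$ absorbed into the first layer of the network. Residual discrepancies in practice would then be attributed to optimization dynamics, as the paper anticipates.
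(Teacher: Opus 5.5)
Your proof is correct, and it takes a genuinely different route from the paper's. The paper pairs two ingredients. The first is a lower bound $n \geq \Omega(I(X;Y)/\epsilon^2)$, made identical for both problems by Lemma~\ref{thm:mi_invariance}. The second is a one-directional simulation: run an algorithm for $X$ on $T(X)$-data after applying $T^{-1}$. That simulation yields only $n_\epsilon(T(X),Y)\le n_\epsilon(X,Y)$, and a shared lower bound does not supply the reverse inequality, so read literally the paper never establishes equality.

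Your symmetric reduction closes exactly that gap. It maps samples through $T^{\pm1}$, transports the output predictor by composition, and invokes Proposition~\ref{prop:bayes_risk} so that excess risks, not just raw risks, match. It needs no information-theoretic bound at all. What the paper's route buys is continuity with its narrative: it casts sample complexity as governed by $I(X;Y)$, the quantity whose reduction later motivates the $\rho(T)^{-\nu}$ term, though the bound it cites is heuristic in this role. Your explicit caveat, that $n_\epsilon$ must range over all learners or over a class closed under composition with $T^{\pm1}$, is also something the paper leaves implicit.

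One overreach: Step 3 concludes that $B$ is preserved as well. That holds only under your identification of $BD^{-\beta}$ with the optimal learner's excess-risk curve. Lemma~\ref{lemma:scaling_invariance} deliberately allows $B'\neq B$. For a fixed architecture trained by SGD, which is what a Chinchilla fit actually measures, equality of $n_\epsilon$ pins down neither the constant nor, strictly, the exponent. This is the same step the paper glosses with ``thus $\beta$ is invariant.'' Since the statement only concerns $\beta$, drop the claim about $B$.
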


\begin{proposition}[Model Capacity Preservation]
\label{prop:model_capacity}
The model capacity to approximate the optimal predictor is preserved, leaving the parameter scaling exponent $\alpha$ invariant.
\end{proposition}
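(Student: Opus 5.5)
The plan is to formalize ``model capacity'' as the approximation error achievable by a parameterized family, and then show that the approximation error curve in $N$ is the same for the original and transformed inputs up to a constant overhead. Concretely, let $\mathcal{F}_N$ be the hypothesis class of networks with $N$ parameters and define
\begin{equation*}
\mathcal{A}_X(N) := \inf_{g \in \mathcal{F}_N} \mathbb{E}[\ell(g(X),Y)] - R^*_X,
\end{equation*}
with $\mathcal{A}_{T(X)}(N)$ defined analogously on inputs $T(X)$. In the Chinchilla form (\ref{eq:Chinchilla_law}), the term $A N^{-\alpha}$ is identified with $\mathcal{A}(N)$ in the infinite-data limit. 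By Proposition~\ref{prop:bayes_risk} the subtracted baseline is identical, $R^*_{T(X)} = R^*_X$, so it suffices to compare the infima of the achieved risks.

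The key step is a two-sided simulation argument mirroring the proof of Lemma~\ref{thm:mi_invariance}.

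\textbf{First direction.} If $g \in \mathcal{F}_N$ acts on $X$, then $f = g \circ T^{-1}$ acts on $T(X)$ and has exactly the same risk, since $f(T(X)) = g(X)$ pointwise.

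\textbf{Second direction.} Any $f$ on $T(X)$ gives $g = f\circ T$ on $X$ with equal risk.

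So over unrestricted function classes the approximation problems are identical. To stay within a parameter budget, I will assume $T$ and $T^{-1}$ can each be represented (or $\delta$-approximated in the relevant norm) by a fixed network using $c_T$ parameters, independent of $N$. For linear $T$ this holds exactly: a $d\times d$ layer $W^{-1}$ can be absorbed into the first layer of $g$, so $c_T = 0$. For diffeomorphisms it holds on compact domains by universal approximation, with Lipschitz continuity of $\ell$ and of $g$ controlling the error. This gives the sandwich
\begin{equation*}
\mathcal{A}_X(N + c_T) \le \mathcal{A}_{T(X)}(N) \le \mathcal{A}_X(N - c_T)
\end{equation*}
up to a $\delta$ slack that can be driven to zero.

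Finally, I will plug in $\mathcal{A}_X(N) = A N^{-\alpha}$ and note that $(N\pm c_T)^{-\alpha} = N^{-\alpha}(1+O(c_T/N))$. Hence
\begin{equation*}
\lim_{N\to\infty} \frac{\log \mathcal{A}_{T(X)}(N)}{\log N} = -\alpha,
\end{equation*}
so the exponent is invariant, and only the coefficient $A$ can change, by a factor tending to one.

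The main obstacle is the bounded-overhead assumption for nonlinear $T$. A diffeomorphism with large Lipschitz constant or high complexity may need a substantial but $N$-independent number of parameters to emulate. It may also distort the function class by composition, since $g\circ T^{-1}$ may be harder to approximate than $g$ at a fixed $N$. I would first handle the linear case exactly and state the nonlinear case under the explicit hypothesis that $T^{\pm1}$ are uniformly approximable with constant cost. I would then argue that constant additive overhead cannot alter a power-law exponent, which is the asymptotic sense in which the proposition holds. Effects from optimization (e.g.\ conditioning of $W$) will be explicitly excluded, consistent with the paper's remark that any deviations under bijections stem from optimization dynamics.
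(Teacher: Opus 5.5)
Your linear case matches the paper's argument (absorb $W^{-1}$ into the first layer, $W_1 \to W_1 W^{-1}$, with no extra parameters). For nonlinear $T$ you take a genuinely different route.

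The paper argues at the level of function classes. Since $T^{-1}$ is a smooth diffeomorphism, $g^* = f^*\circ T^{-1}$ has the same smoothness $s$ as $f^*$. Yarotsky-type rates $\Theta(N^{-s/d})$ depend only on $s$ and $d$, so $\alpha$ cannot change. You argue at the level of architectures: emulate $T^{\pm 1}$ with a fixed subnetwork and sandwich the two approximation-error curves, $\mathcal{A}_X(N+c_T)\le \mathcal{A}_{T(X)}(N)\le \mathcal{A}_X(N-c_T)$.

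Your route buys three things. It compares the error curves actually achieved on the specific target, so it preserves whatever power law $\mathcal{A}_X$ exhibits. The paper instead identifies $\alpha$ with the worst-case rate over a Sobolev ball, which is all the cited $\Theta(N^{-s/d})$ result provides. Your route is also stated in excess risk rather than sup-norm, and it even gives $A'/A\to 1$.

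The cost is a hypothesis strictly stronger than smoothness. Your remark that bounded overhead ``holds on compact domains by universal approximation'' is not right. Universal approximation yields, for each $\delta$, a network of size $c_T(\delta)$ with $c_T(\delta)\to\infty$ as $\delta\to 0$. A fixed $c_T$ together with vanishing slack therefore essentially means $T^{\pm1}$ are exactly representable. In addition, the propagated error is of order $\mathrm{Lip}(g)\,\delta$, and the Lipschitz constant of a near-optimal $N$-parameter network need not stay bounded as $N$ grows.

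To cover a general diffeomorphism, you would need to:
\begin{itemize}
\item choose $\delta = o(N^{-\alpha})$,
\item control the Lipschitz constants of the near-optimal networks, and
\item require $c_T(\delta)=o(N)$.
\end{itemize}
With $T^{-1}\in C^{s'}$ and $\alpha = s/d$, this is roughly $s' > s$. That is essentially the smoothness condition the paper's proof uses implicitly. Under your explicitly stated hypothesis that $T^{\pm1}$ are implemented exactly (or with negligible error) by fixed subnetworks, your argument is sound.
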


\begin{lemma}[NSL Invariance under Bijective Transformations]
\label{lemma:scaling_invariance}
Let $\mathcal{D} = \{(x_i, y_i)\}_{i=1}^n$ be a dataset drawn from distribution $P_{XY}$. Define the transformed dataset $\mathcal{D}' = \{(T(x_i), y_i)\}_{i=1}^n$. If the test loss on $\mathcal{D}$ follows the scaling law in Eq.\eqref{eq:Chinchilla_law}, then the test loss on $\mathcal{D}'$ follows: 
$
L'(N, D) = \frac{A'}{N^{\alpha}} + \frac{B'}{D^{\beta}} + E
$
with identical exponents $(\alpha, \beta)$ and irreducible loss $E$.
\end{lemma}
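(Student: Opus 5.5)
The plan is to assemble Lemma~\ref{lemma:scaling_invariance} from the three preceding propositions, one per term of Eq.~\eqref{eq:Chinchilla_law}. The key observation is that the scaling-law decomposition splits the test loss into three parts, each governed by a different invariant: the irreducible term $E$ equals the Bayes risk $R^*_X$, the data term $B D^{-\beta}$ is the estimation error, and the parameter term $A N^{-\alpha}$ is the approximation error. I would write the excess loss on $\mathcal{D}'$ as
\[
L'(N,D) - R^*_{T(X)} = \mathcal{E}_{\mathrm{approx}}'(N) + \mathcal{E}_{\mathrm{est}}'(D),
\]
and then show that each component keeps its functional form and exponent after applying $T$.

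First, the irreducible term follows from Proposition~\ref{prop:bayes_risk}. For any predictor $g$ on $\mathcal{X}$, the predictor $f = g\circ T^{-1}$ attains the same risk on $T(X)$, and conversely $g = f\circ T$. This gives $R^*_{T(X)} = R^*_X = E$, so the constant is literally unchanged.

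Second, the data term follows from Proposition~\ref{prop:sample_complexity}. Since $n_\epsilon(T(X),Y) = n_\epsilon(X,Y)$ for every $\epsilon$, inverting the relation $\epsilon \approx B n^{-\beta}$ shows that the estimation error as a function of $D$ decays at the same rate $D^{-\beta}$. The prefactor may change to $B'$, because finite-sample constants, such as the conditioning of $W$ in the linear case, can enter the learner's behaviour even though the rate does not.

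Third, the parameter term follows from Proposition~\ref{prop:model_capacity}. The composition $f\circ T$ with $f$ in the model class yields an equivalent approximating family. The approximation error therefore scales as $A' N^{-\alpha}$ with the same $\alpha$, and again only the constant may change, for example if representing $T^{-1}$ costs a bounded number of extra parameters. Summing the three pieces gives the claimed form.

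The main obstacle is the gap between the invariance of quantities like $n_\epsilon$ and the optimal predictor, which holds over \emph{all} measurable functions, and the behaviour of a fixed neural architecture with a fixed optimizer. A given network class $\mathcal{F}_N$ is not closed under precomposition with $T^{-1}$. My first attempt would be to assume that $T^{-1}$ is realizable, or approximable to error $o(N^{-\alpha})$, by a network of size $c\cdot N$ or $N + O(1)$. Then $\mathcal{F}_{N}\circ T^{-1} \supseteq \mathcal{F}_{N/c}$ up to negligible error, which sandwiches $L'(N,D)$ between rescaled copies of $L(N,D)$. Rescaling $N$ by a constant changes only $A$, never $\alpha$, and rescaling $D$ changes only $B$, never $\beta$. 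This is exactly why the lemma allows $A', B'$ to differ while fixing $(\alpha,\beta,E)$, and it is the step where this regularity assumption on $T$ has to be stated explicitly.
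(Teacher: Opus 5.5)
Your proposal takes the paper's route. The paper proves this lemma only by combining Propositions~\ref{prop:bayes_risk}--\ref{prop:model_capacity} through the risk-preserving correspondence $f\mapsto f\circ T^{-1}$ (Bayes risk fixes $E$, sample complexity fixes $\beta$, approximation rate fixes $\alpha$), which is exactly your three-step assembly. Your final paragraph goes beyond the paper, and if you keep it, note that each realizability assumption gives only one side of the sandwich (realizability of $T^{-1}$ yields $\mathcal{F}_{N/c}\circ T^{-1}\subseteq\mathcal{F}_N$, an upper bound on $L'$, whereas the inclusion you wrote requires realizability of $T$), so you need both; class inclusion also still does not control what the optimizer finds (for $T(x)=Wx$, gradient descent on transformed inputs is gradient descent on the original problem preconditioned by $W^\top W$).
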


The core intuition is that bijectivity allows lossless translation between predictor spaces: any predictor $f$ on $X$ maps to $g=f\circ T^{-1}$ on $T(X)$ with identical risk, preserving Bayes risk ($E$), sample complexity ($\beta$), and approximation rates ($\alpha$). Lemma \ref{lemma:scaling_invariance} combines these invariances directly.
Proofs of Proposition \ref{prop:bayes_risk} - \ref{prop:model_capacity} can be found in Appendix \ref{app:invariant_proofs}. 

\subsection{NSLs are Dependent on Information Resolution} \label{subsec:nonbijective}

\begin{table*}[t]
\centering
\caption{Comparison of our framework with two recent scaling law formulations under data degradation: Quality-Aware scaling laws \citep{subramanyam2025scaling} and Sub-Optimal \citep{chen2025revisiting} scaling laws.}
\label{tab:comparison}
\setlength{\tabcolsep}{1.5pt}
\scalebox{0.79}{
\begin{tabular}{lccc}
\toprule
\textbf{Aspect} & \textbf{Quality-Aware} & \textbf{Sub-Optimal} & \textbf{Ours (Info-Resolution)} \\
\midrule
Formulation & $\frac{A}{N^\alpha} + \frac{B}{D^\beta Q^\gamma} + E$ & $\frac{\lambda_N R_N}{N^{\alpha_N}} + \frac{\lambda_D R_D}{D^{\alpha_D}} + E$ & $\frac{A}{N^\alpha} + \frac{B}{D^\beta} \cdot \rho(T)^{-\nu} + E + \kappa(1 - \rho(T))^\mu$ \\[6pt]
Mechanisms & Single (effective data) & Single (decay factors) & Dual (inflation + shift) \\[3pt]
Irreducible loss $E$ & Constant & Constant & $\rho$-dependent: $E + \kappa(1 - \rho)^\mu$ \\[3pt]
Degradation parameter & Data quality $Q$ & Data density $\rho_{\text{density}}$ & Transformation property $\rho(T)$ \\[3pt]
Interpretation & Fewer useful samples & Redundancy $\to$ diminishing returns & Variance inflation + optimal loss shift \\[3pt]
Asymptotic ($D \to \infty$) & $L \to E$ & $L \to E$ & $L \to E + \kappa(1 - \rho(T))^\mu$ \\
\bottomrule
\end{tabular}
}
\end{table*}

In practice, many of the transformations applied to training data are not bijective. Quantization collapses continuous values onto a finite grid \citep{lu2019learning}; low-rank projection discards directions in feature space \citep{markovsky2012low}; aggressive deduplication, summarization, or filtering removes content entirely \citep{gionis2007clustering}; and stochastic corruption \citep{subramanyam2025scaling} injects ambiguity into the input-output relationship. Each of these operations can irreversibly destroy task-relevant information about $Y$, driving $\rho(T)<1$ and altering the scaling law in ways that bijective re-encodings cannot. We now define this class formally and characterize how it modifies scaling behavior.

\begin{definition}[Non-Bijective Transformation]
\label{def:non_bijective}
A transformation $T: \mathcal{X} \to \mathcal{X}'$ is \emph{non-bijective} if and only if at least one of the following holds:
\begin{enumerate}
    \item \textbf{Non-injectivity}: There exist distinct $x_1, x_2 \in \mathcal{X}$ such that $T(x_1) = T(x_2)$.
    \item \textbf{Non-surjectivity}: There exists $x' \in \mathcal{X}'$ such that $x' \notin \text{range}(T)$.
\end{enumerate}
\end{definition}

We define the \textit{Information Resolution} for non-bijective transformations as
$
\rho(T) := \frac{I(X'; Y)}{I(X; Y)} \in (0, 1],
$
to measure the fraction of task-relevant information preserved by $T$. The Data Processing Inequality guarantees $\rho(T) \leq 1$ with strict inequality for non-invertible $T$.
Our key insight here is that such transformations degrade model performance through two distinct mechanisms: (1) \textbf{Variance Inflation:} information reduction increases the variance of parameter estimates, requiring more samples to achieve equivalent statistical precision.
(2) \textbf{Optimal Loss Shift:} discarded information creates an irreducible performance gap: a hard limit that no amount of data can overcome. Specifically, for (1), information loss inflates the variance of learning. For unbiased estimators under corrupted data, variance scales inversely with usable information:
$
\text{Var}(\hat{\theta}_{\text{corrupted}}) \propto \frac{1}{D \cdot \rho(T)^\nu},
$
where $\nu > 0$ depends on the corruption type, which controls how severely information loss penalizes statistical efficiency: larger $\nu$ means variance grows faster as $\rho(T)$ decreases. In our empirical evaluations, $\nu$ falls within the range $(0, 1)$ (e.g., $\nu \approx 0.19$ for next-token prediction tasks), reflecting the influence of the redundancy and noise present in real-world data.
Recent work \cite{subramanyam2025scaling} has discovered similar mechanisms. However, it doesn't capture mechanism (2), where the optimal pretraining loss on corrupted data is strictly worse than on clean data.

\begin{lemma}[Information Ceiling Bound]
\label{lemma:ceiling}
Let $L^*_X$ and $L^*_{X'}$ denote optimal pretraining losses for predicting $Y$ from $X$ and $X' = T(X)$ respectively. Then:
\begin{equation}
L^*_{X'} - L^*_X \geq \Phi\left(I(X;Y) - I(X';Y)\right) = \Phi\left(I(X;Y)(1 - \rho(T))\right)
\end{equation}
where $\Phi: \mathbb{R}^+ \to \mathbb{R}^+$ is a monotonically increasing function with $\Phi(0) = 0$.
\end{lemma}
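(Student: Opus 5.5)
Since $\Phi$ is only asserted to exist, I will take the loss to be the log-loss (cross-entropy), the natural pretraining objective, and set $\Phi$ to the identity. Under log-loss the Bayes-optimal predictor from an input $Z$ is the true conditional $P_{Y\mid Z}$, and the optimal risk equals the conditional entropy: $L^*_Z = \inf_q \mathbb{E}[-\log q(Y\mid Z)] = H(Y\mid Z)$. To justify this I will use Gibbs' inequality: for any predictor $q$, the excess $\mathbb{E}[-\log q(Y\mid Z)] - H(Y\mid Z) = \mathbb{E}_Z\,\mathrm{KL}(P_{Y\mid Z}\,\|\,q(\cdot\mid Z)) \ge 0$, with equality at $q = P_{Y\mid Z}$. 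This is the same reduction to conditional entropies that gave $H(Y\mid T(X)) = H(Y\mid X)$ after Lemma~\ref{thm:mi_invariance}.

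Applying this to $Z = X$ and $Z = X' = T(X)$ gives
\begin{equation*}
L^*_{X'} - L^*_X = H(Y\mid X') - H(Y\mid X) = \bigl(H(Y) - I(X';Y)\bigr) - \bigl(H(Y) - I(X;Y)\bigr) = I(X;Y) - I(X';Y).
\end{equation*}
By the definition of $\rho(T)$, the right-hand side equals $I(X;Y)(1-\rho(T))$. It is nonnegative by the data processing inequality already used in Lemma~\ref{thm:mi_invariance}, since $Y \to X \to X'$ is a Markov chain. So the bound holds with equality for $\Phi(u) = u$, which is monotonically increasing with $\Phi(0)=0$. As a consistency check, when $T$ is bijective Lemma~\ref{thm:mi_invariance} gives $\rho = 1$, hence a gap of $0$, which recovers Proposition~\ref{prop:bayes_risk}.

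The main obstacle is extending the result to losses other than log-loss, which the lemma's generic $\Phi$ suggests is intended. For $0$-$1$ loss I would relate Bayes error to conditional entropy through Fano's inequality as a lower bound and a Hellman--Raviv type bound as an upper bound. The catch is that these give only two-sided envelopes $\phi_1(H(Y\mid Z)) \le R^*_Z \le \phi_2(H(Y\mid Z))$, and a difference of risks is not controlled by a difference of entropies unless the envelopes coincide. In fact, an information gap can yield zero excess $0$-$1$ risk: if $T$ destroys only confidence information but preserves the argmax, the Bayes error is unchanged. For such losses the honest statement is therefore the weaker one with $\Phi \ge 0$ and $\Phi(0) = 0$, and strict monotonicity cannot hold in general.

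I would therefore state the result for log-loss, which is the pretraining loss in Eq.~\eqref{eq:Chinchilla_law}, and remark that the general version holds only for strictly proper losses. For a strictly proper loss, the excess risk is a Bregman divergence between $P_{Y\mid X}$ and $P_{Y\mid X'}$. Lower-bounding it in terms of the mutual-information gap would go through Pinsker-type inequalities, and that step is where I expect the real difficulty to lie.
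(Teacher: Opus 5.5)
Your argument is correct and is essentially the paper's Step~1: for log-loss $L^*_Z = H(Y\mid Z)$, so the gap equals $I(X;Y)-I(X';Y) = I(X;Y)(1-\rho(T))$ with $\Phi$ the identity, and your Gibbs/KL step supplies the justification of $L^*_Z = H(Y\mid Z)$ that the paper simply asserts. Where you part ways with the paper you are right: its extension to general losses is a one-line appeal to Fano's inequality, which only lower-bounds each Bayes risk separately and cannot control their difference, and your argmax-preserving merge shows that no strictly increasing $\Phi$ exists for $0$-$1$ loss, so restricting the lemma to log-loss is the defensible reading (for other strictly proper losses, what you would actually need is an \emph{upper} bound on the information gap in terms of the excess risk, e.g.\ via uniform continuity of entropy on a finite label set; Pinsker's inequality bounds KL from below and so goes the wrong way).
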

The proof of Lemma \ref{lemma:ceiling} can be found at Appendix \ref{app:invariant_proofs}. It states that when applying a non-bijective transformation, the best achievable loss on the transformed data is strictly worse than on the original data, which creates an information ceiling bound dependent on $\rho$ even as $D\rightarrow \infty$. We now propose an updated scaling law that incorporates both mechanisms.

\begin{proposition}[Information-Resolution Scaling Law]
\label{prop:info_capacity}
Let $T$ be a non-bijective transformation with information resolution $\rho(T) = I(T(X);Y)/I(X;Y) \in (0,1]$. The expected test loss follows:
\begin{equation}
L(N, D, \rho) = \frac{A}{N^\alpha} + \underbrace{\frac{B}{D^\beta} \cdot \rho(T)^{-\nu}}_{\text{Variance Inflation} ~\phi_{\text{VI}}} + E + \underbrace{\kappa(1 - \rho(T))^\mu}_{\text{Optimal Loss Shift}~\phi_{\text{LS}}},
\label{eq:info_capacity}
\end{equation}
where $\nu > 0$ governs the variance inflation term $\phi_{\text{VI}}=(B/D^\beta) \cdot \rho(T)^{-\nu}$, which means when $\rho(T) < 1$, more samples are needed to achieve the same estimation precision; $E$ is the original irreducible loss on clean data; $\phi_{\text{LS}}=\kappa(1-\rho(T))^\mu$ is the shift of optimal loss, where $\kappa > 0$ is the scaling coefficient, and $\mu \in (0,1]$ governs how quickly the optimal loss shift can be reached. If the transformation is deterministic, then $\mu=1$ according to Lemma \ref{lemma:ceiling}; for stochastic corruptions, $\mu < 1$ because averaging over samples could partially recover obscured information.
\end{proposition}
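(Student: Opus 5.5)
The plan is to derive Eq.~\eqref{eq:info_capacity} as a decomposition of the expected test loss into approximation, estimation, and Bayes-risk terms, and to treat each term separately. For a model with $N$ parameters trained on $D$ samples of $(X',Y)$ with $X'=T(X)$, write
\begin{equation*}
L(N,D,\rho) = \underbrace{\big(L_{\text{approx}}(N)\big)}_{\text{capacity}} + \underbrace{\big(L_{\text{est}}(D)\big)}_{\text{finite sample}} + L^*_{X'},
\end{equation*}
where $L^*_{X'}$ is the Bayes-optimal loss on transformed inputs. I will then show three things: the capacity term keeps the form $A/N^\alpha$; the estimation term is the clean $B/D^\beta$ inflated by $\rho^{-\nu}$; and $L^*_{X'} = E + \kappa(1-\rho)^\mu$.

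\textbf{Capacity term.} I will follow the argument of Proposition~\ref{prop:model_capacity}. The target predictor on $X'$ is $\mathbb{E}[Y\mid X']$, a coarsening of $\mathbb{E}[Y\mid X]$. I will assume that coarsening does not increase its complexity in the relevant approximation class, so the approximation rate $A/N^\alpha$ carries over with the same $\alpha$. This is a modeling assumption, stated as such, rather than something I can prove in general.

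\textbf{Estimation term.} I will use the variance-inflation premise stated just before Lemma~\ref{lemma:ceiling}, $\mathrm{Var}(\hat\theta)\propto 1/(D\rho^\nu)$. Equivalently, $D$ corrupted samples carry the statistical precision of an effective sample size $D_{\text{eff}}=D\rho^{\nu'}$ clean samples. Substituting $D_{\text{eff}}$ into the clean data term $B/D^\beta$ (using Proposition~\ref{prop:sample_complexity} to justify that the clean exponent $\beta$ governs the rate) gives $B D^{-\beta}\rho^{-\nu'\beta}$. Absorbing $\nu'\beta$ into a single exponent $\nu>0$ yields $\phi_{\text{VI}}$. At $\rho=1$ this reduces to the Chinchilla term, consistent with Lemma~\ref{lemma:scaling_invariance}.

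\textbf{Bayes-risk term.} I will invoke Lemma~\ref{lemma:ceiling}, which gives $L^*_{X'}-E\ge\Phi(I(X;Y)(1-\rho))$.

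For deterministic $T$ under log-loss, I will sharpen this to an equality. The excess Bayes log-loss is exactly
\begin{equation*}
H(Y\mid X')-H(Y\mid X)=I(X;Y)-I(X';Y)=I(X;Y)(1-\rho),
\end{equation*}
so $\Phi$ is linear, giving $\kappa=I(X;Y)$ and $\mu=1$. For other losses I will take a first-order expansion of $\Phi$ at $0$, which again gives $\mu=1$.

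For stochastic corruptions, the point is that $\rho$ is computed per sample, while a model trained on many corrupted samples sharing structure can partially denoise. The realized loss shift is then concave in the information deficit, which I will model as a power law with $\mu\in(0,1]$. Monotonicity and $\Phi(0)=0$ from Lemma~\ref{lemma:ceiling} pin down the boundary behavior.

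\textbf{Assembly and main obstacle.} Adding the three terms gives Eq.~\eqref{eq:info_capacity}. The main obstacle is the estimation step: the power law $\rho^{-\nu}$ is not derivable in full generality, since variance depends on the Fisher information rather than on mutual information. I would first try to justify it in a Gaussian-channel model, where Fisher information and mutual information are linked (for example through I-MMSE relations). I would then state the general case as an assumption with an empirically fitted $\nu$.
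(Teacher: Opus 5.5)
Your three-term decomposition follows the same structure as the paper's proof: approximation error $A/N^\alpha$, estimation error inflated by a power of $\rho$, and Bayes risk $E+\kappa(1-\rho)^\mu$ via Lemma~\ref{lemma:ceiling}. On two points you are more careful than the paper. First, the paper gets $\mathrm{Var}\propto 1/(D\rho^{\nu})$ from Cram\'er--Rao by asserting that Fisher information is proportional to $I(X';Y)$. It then passes directly to an estimation term $(B/D^\beta)\rho^{-\nu}$ with the same $\nu$. Your effective-sample-size substitution shows that the loss-level exponent is the variance-level exponent multiplied by $\beta$. This matters because the values the paper quotes ($\nu=1$ for additive noise, $\nu=2$ for label noise) are derived at the level of the variance. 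Second, you correctly state the Fisher-information/mutual-information link as an assumption rather than a fact.

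The step that does not go through is your justification of $\mu<1$ for stochastic corruptions. Your computation $H(Y\mid X')-H(Y\mid X)=I(X;Y)-I(X';Y)$ never uses determinism of $T$. It holds for any $X'$, including $X'=X+\epsilon$ with independent noise. So under log-loss the Bayes-risk shift is exactly $I(X;Y)(1-\rho)$ for every channel, i.e.\ $\mu=1$ regardless. Averaging over many training samples can only shrink the estimation term. No predictor, however trained, achieves expected log-loss below $H(Y\mid X')$ on a fresh corrupted test input, so denoising from shared structure cannot make the shift concave in $1-\rho$.

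Your first-order expansion of $\Phi$ for other losses has a similar problem. Lemma~\ref{lemma:ceiling} gives only a lower bound, and for non-log losses the excess Bayes risk is not a function of the information deficit alone. So the expansion does not determine the shift.

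The paper does not resolve these points either. Its proof uses the inequality of Lemma~\ref{lemma:ceiling} as an equality and simply posits $\Phi(\cdot)=\kappa(\cdot)^\mu$ ``for appropriate $\kappa,\mu$''. A sound version of your argument should derive $\kappa=I(X;Y)$, $\mu=1$ only for log-loss. It should present the power-law shift with $\mu<1$ as a fitted modeling assumption, not as a consequence of denoising.
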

Table~\ref{tab:comparison} contrasts our formulation with other scaling law formulations under data degradation. Specifically, our formulation satisfies two boundary conditions: (1) when $\rho = 1$ (bijective transformation), it reduces to the standard Chinchilla law; (2) when $\rho \to 0$, the ceiling term dominates and $L \to A/N^\alpha + E + \kappa$. A critical distinction is that, as the amount of data increases, prior quality-aware scaling laws converge to the same loss $E$ regardless of quality, while our formulation predicts a \textit{$\rho$-dependent loss}. This aligns with information-theoretic limits: no amount of degraded data can recover information that was discarded.

\begin{figure*}[t]
    \begin{minipage}{\textwidth}
    \centering
    \begin{tabular}{@{\hspace{-3.8ex}} c @{\hspace{-2.4ex}} c @{\hspace{-1.5ex}} c @{\hspace{-1.5ex}}}
        \begin{tabular}{c}
        \includegraphics[width=.31\textwidth]{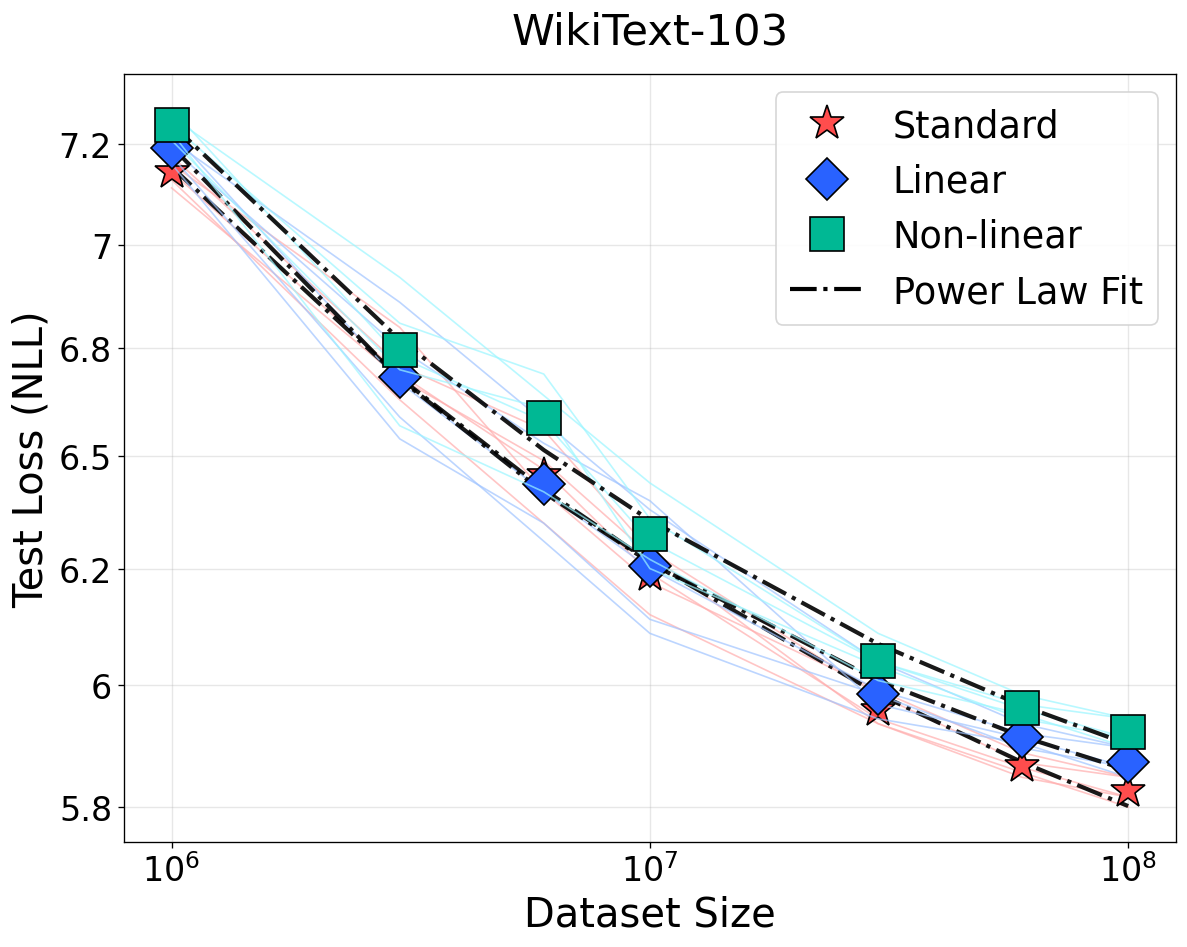}
        \\
        {\small{(a) Bijective}}
        \end{tabular} & 
        \begin{tabular}{c}
        \includegraphics[width=.31\textwidth]{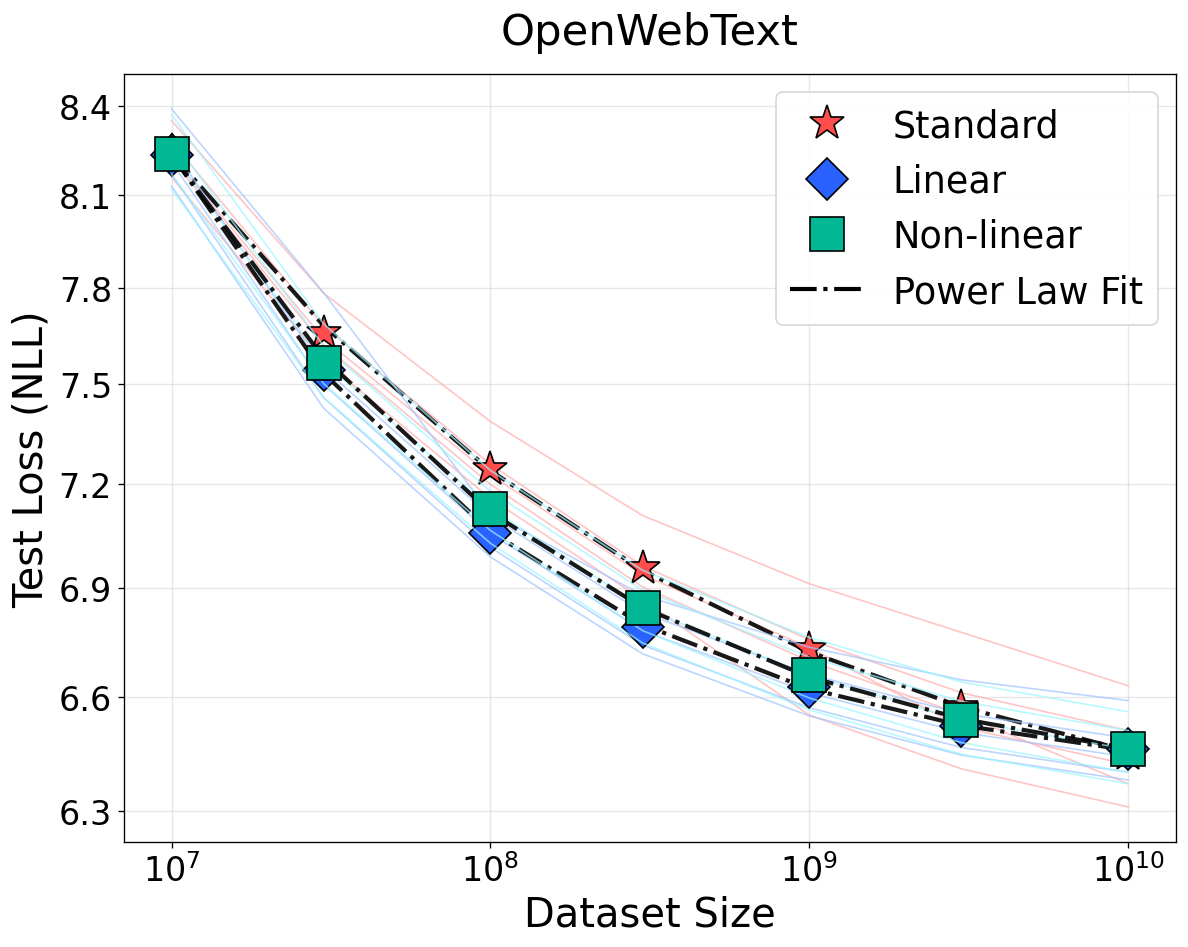} 
        \\
        {\small{(b) Bijective}}
        \end{tabular} &
        \begin{tabular}{c}
        \includegraphics[width=.31\textwidth]{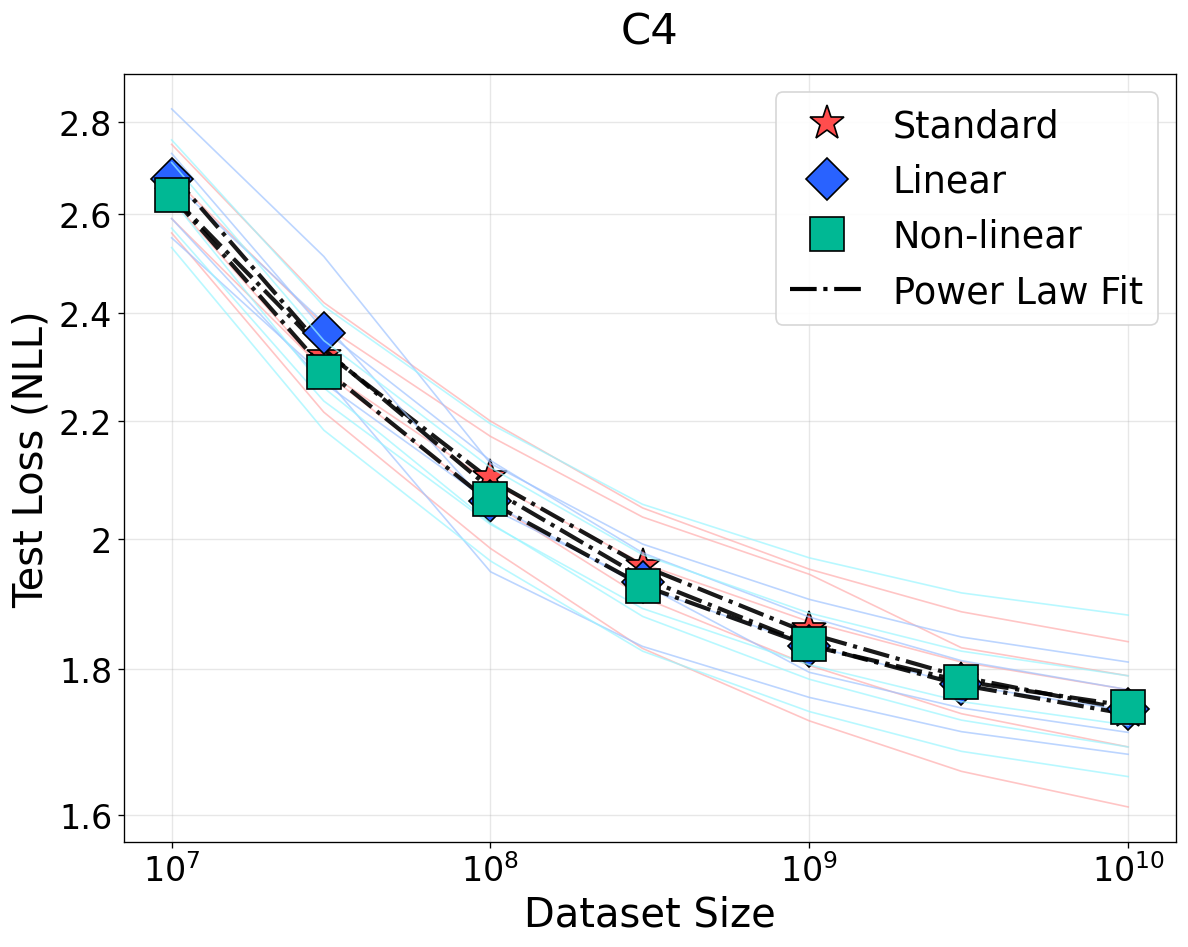} 
        \\
        {\small{(c) Bijective}}
        \end{tabular} \\
        \begin{tabular}{c}
        \includegraphics[width=.31\textwidth]{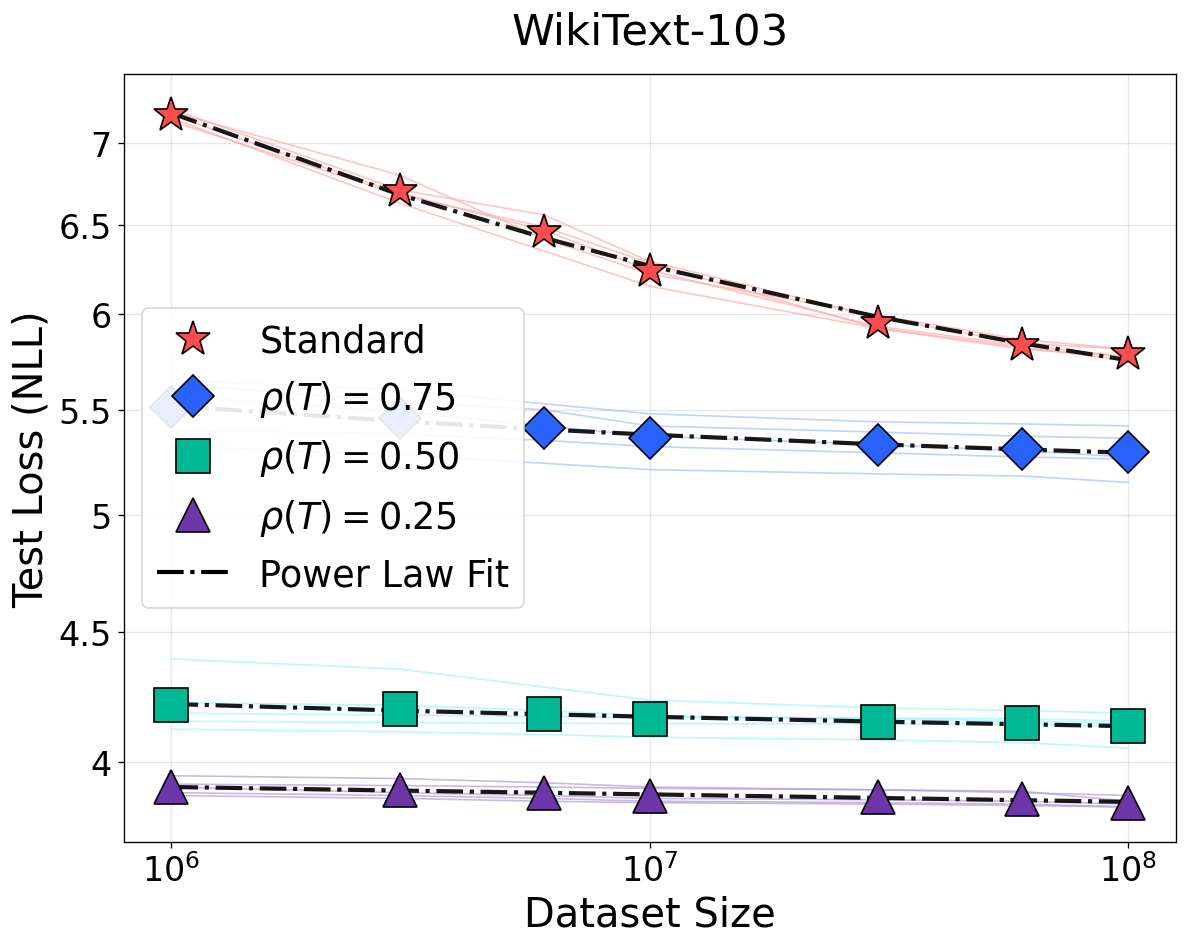}
        \\
        {\small{(d) Non-Bijective}}
        \end{tabular} & 
        \begin{tabular}{c}
        \includegraphics[width=.31\textwidth]{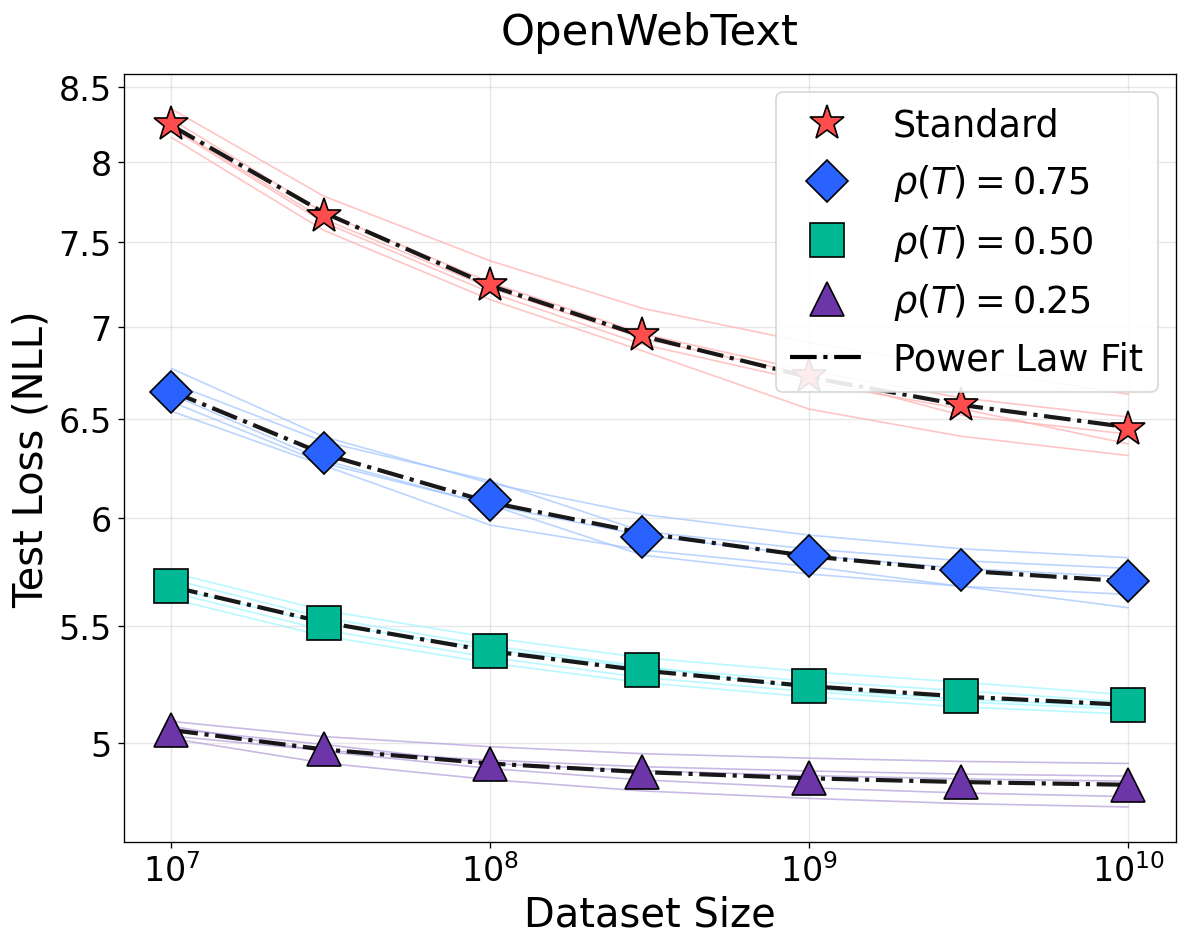} 
        \\
        {\small{(e) Non-Bijective}}
        \end{tabular} &
        \begin{tabular}{c}
        \includegraphics[width=.31\textwidth]{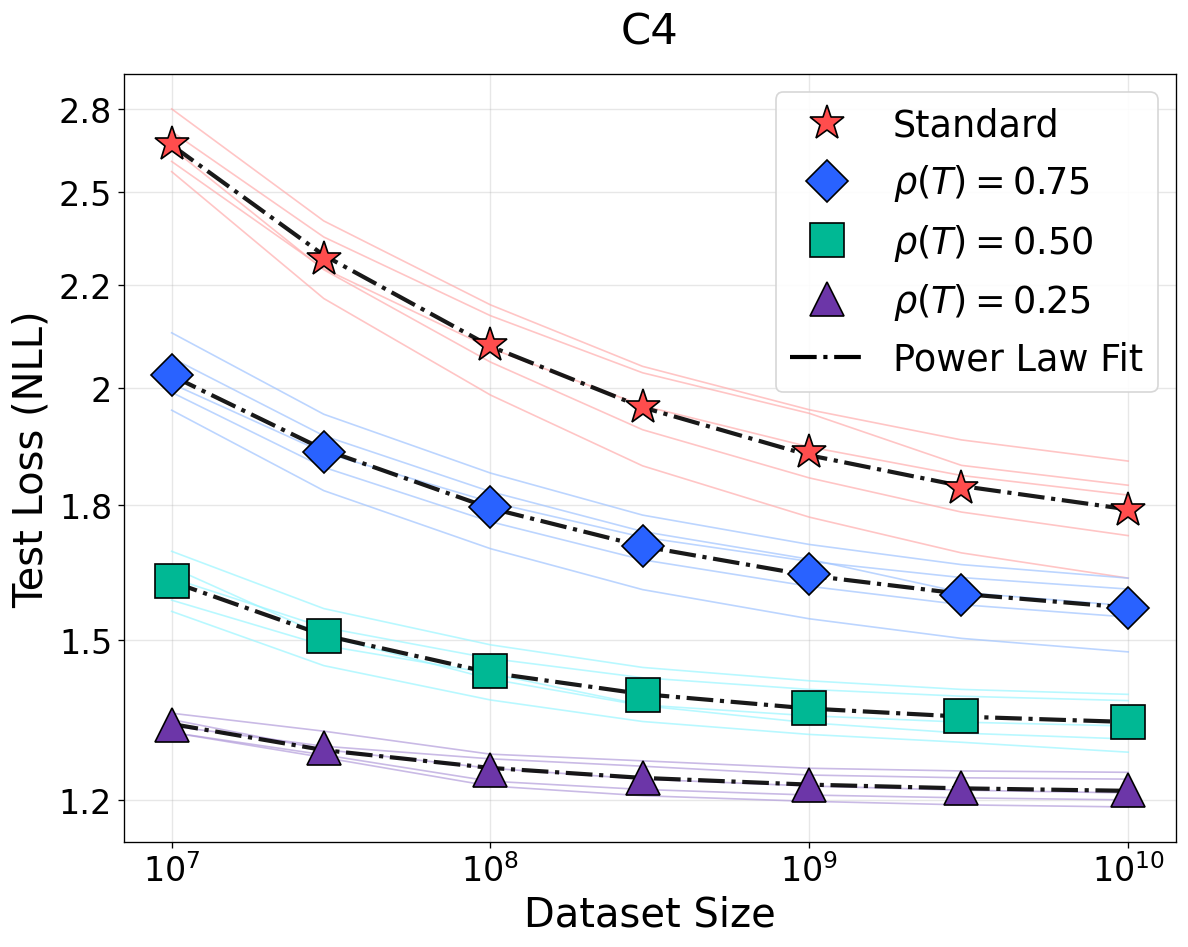} 
        \\
        {\small{(f) Non-Bijective}}
        \end{tabular} \\
        \end{tabular}
    \end{minipage}
    \caption{NSLs for language modeling on the WikiText (left), OpenWebText (middle), and C4 (right) datasets. The first row presents results under bijective transformations, the second row corresponds to non-bijective transformations (quantization). In the figures, the thin colored lines represent the test loss curves for individual parameter sizes, while the scaling law estimates (black dotted lines) are fitted using the average test loss values across parameter sizes. Results on negative log-likelihood (NLL) indicate that bijective transformations do not alter the scaling behavior; in contrast, as the degree of information loss increases, the scaling behavior progressively weakens.}
    \label{fig:lm_results}
\end{figure*}

\section{Primary Empirical Observations}
\label{sec:exp}
We conduct comprehensive empirical evaluations to validate our hypothesis and proposed information–resolution scaling law framework, aiming to answer the following questions: (\textbf{RQ1}) Do standard NSLs remain invariant under bijective transformations? (\textbf{RQ2}) How do non-bijective transformations that alter the information resolution of a dataset affect NSLs, and how effectively does our framework capture these effects? (\textbf{RQ3}) How can these insights be leveraged in real-world cross-domain applications? We evaluate scaling behavior across diverse modality benchmarks, including language modeling, vision, speech, and medical records.

\begin{figure*}[t]
    \begin{minipage}{\textwidth}
    \centering
    \begin{tabular}{@{\hspace{-3.8ex}} c @{\hspace{-2.4ex}} c @{\hspace{-1.5ex}} c @{\hspace{-1.5ex}}}
        \begin{tabular}{c}
        \includegraphics[width=.31\textwidth]{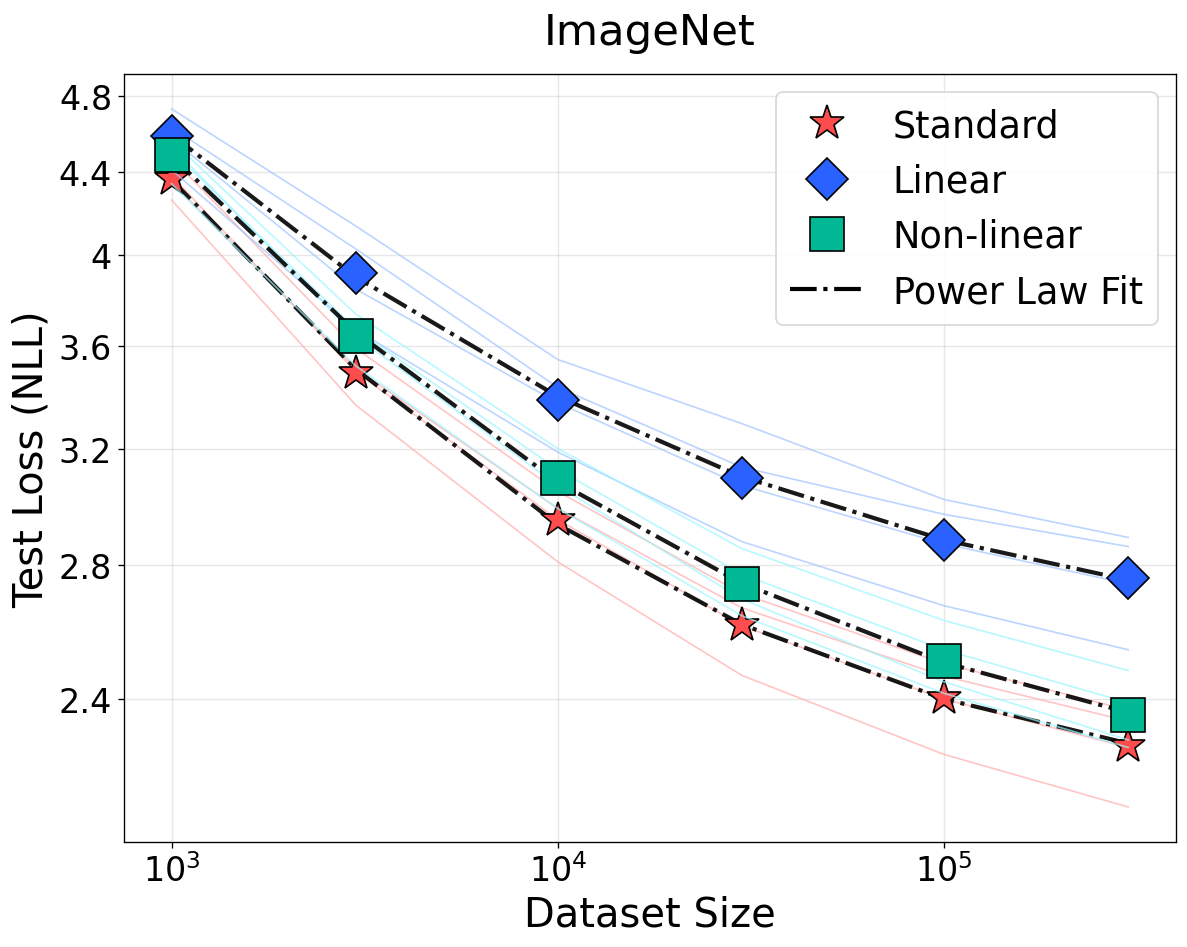}
        \\
        {\small{(a) Bijective}}
        \end{tabular} & 
        \begin{tabular}{c}
        \includegraphics[width=.31\textwidth]{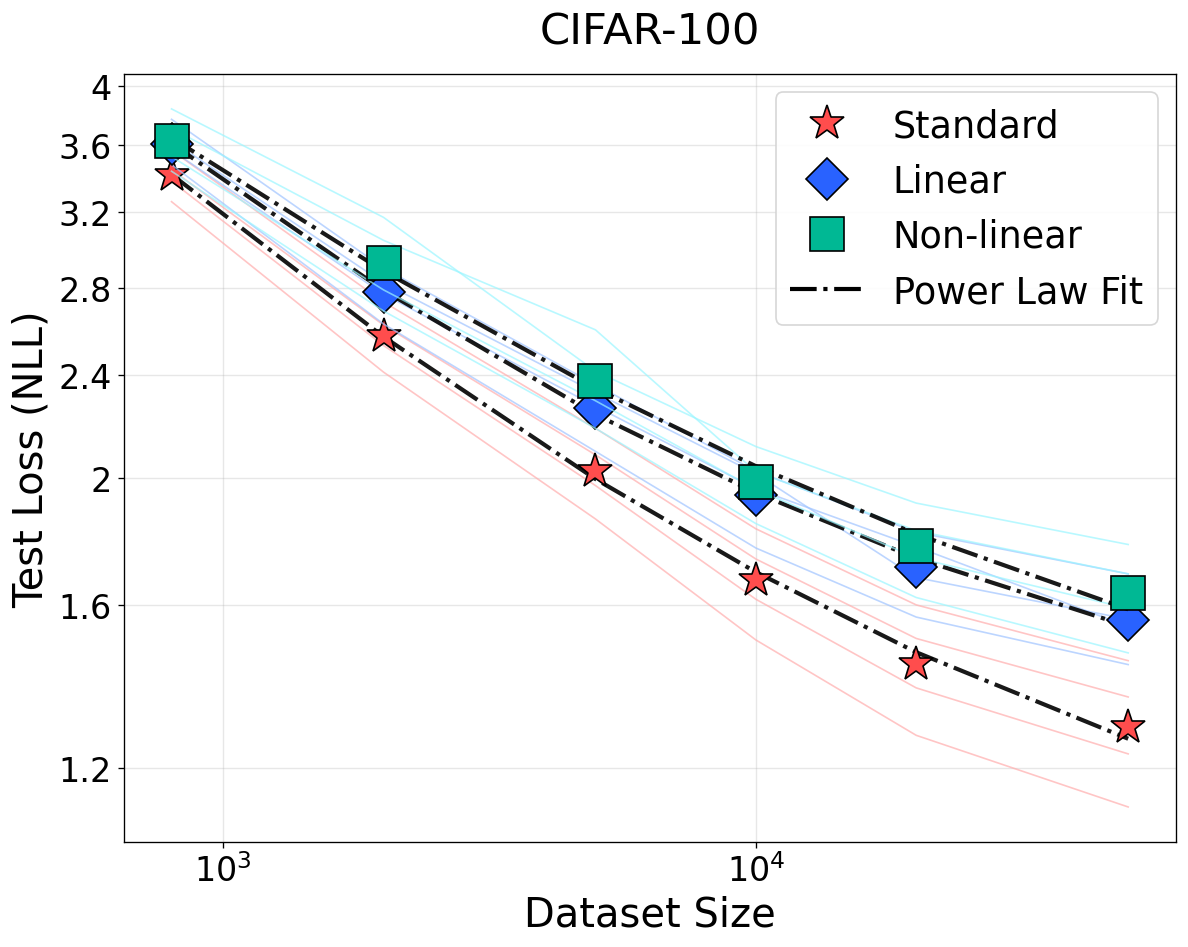} 
        \\
        {\small{(b) Bijective}}
        \end{tabular} &
        \begin{tabular}{c}
        \includegraphics[width=.31\textwidth]{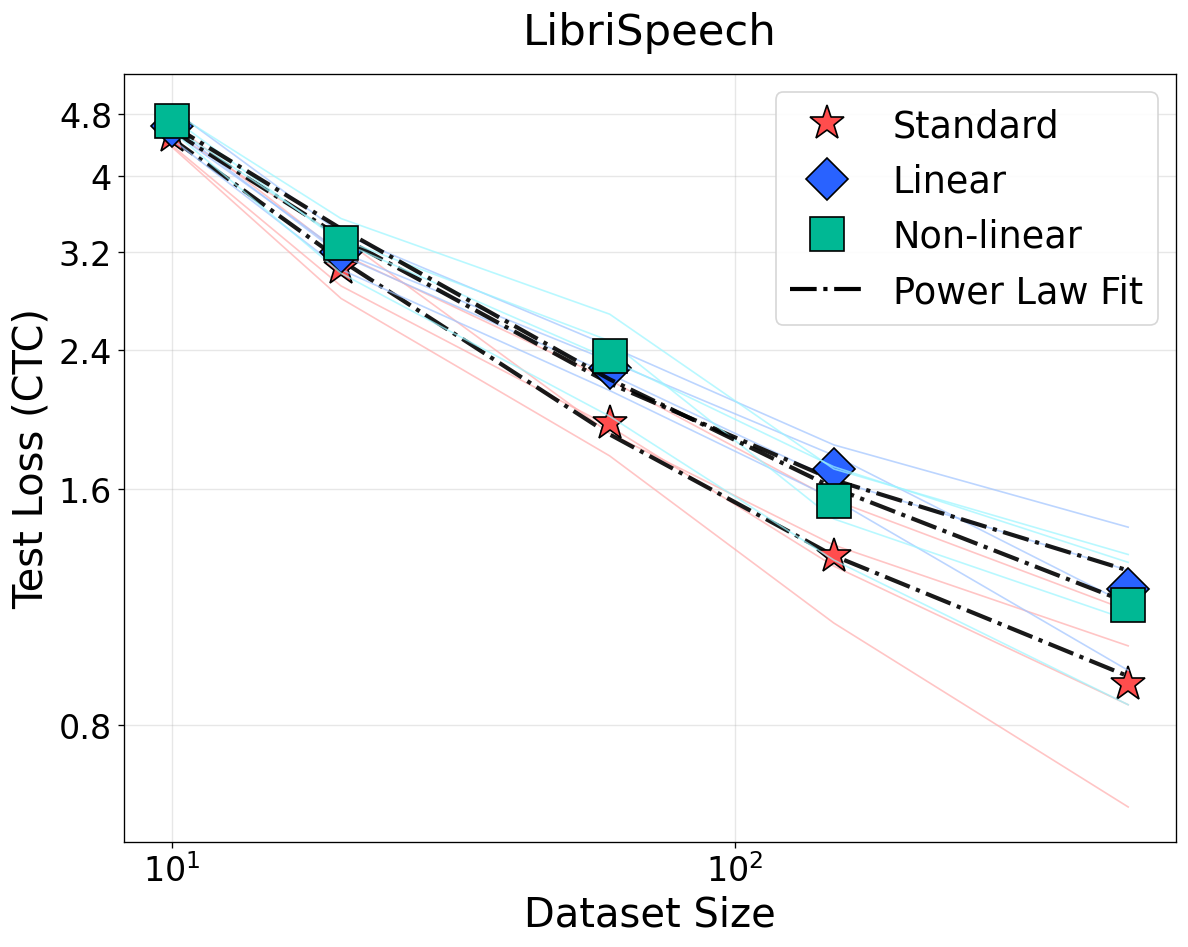} 
        \\
        {\small{(c) Bijective}}
        \end{tabular} \\
        \begin{tabular}{c}
        \includegraphics[width=.31\textwidth]{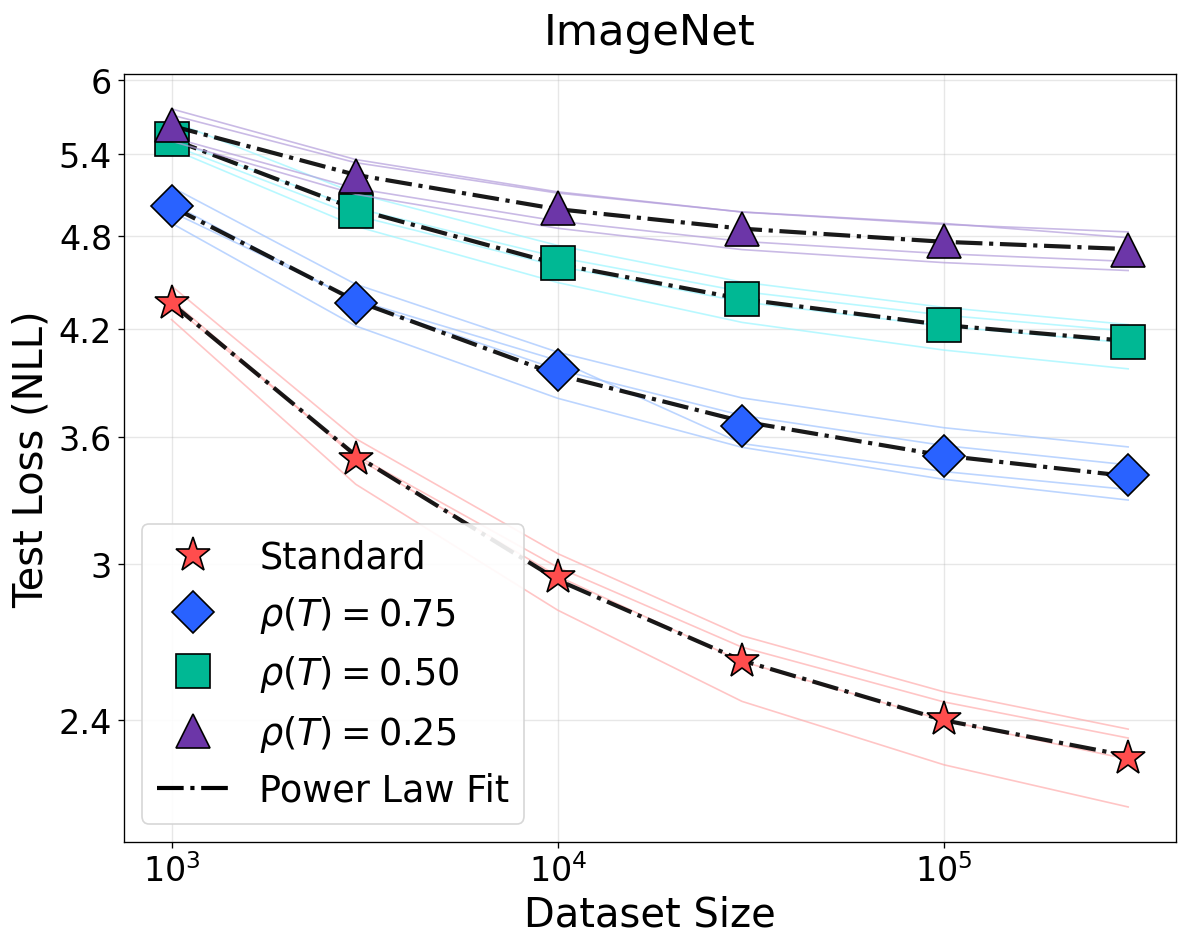}
        \\
        {\small{(d) Non-Bijective}}
        \end{tabular} & 
        \begin{tabular}{c}
        \includegraphics[width=.31\textwidth]{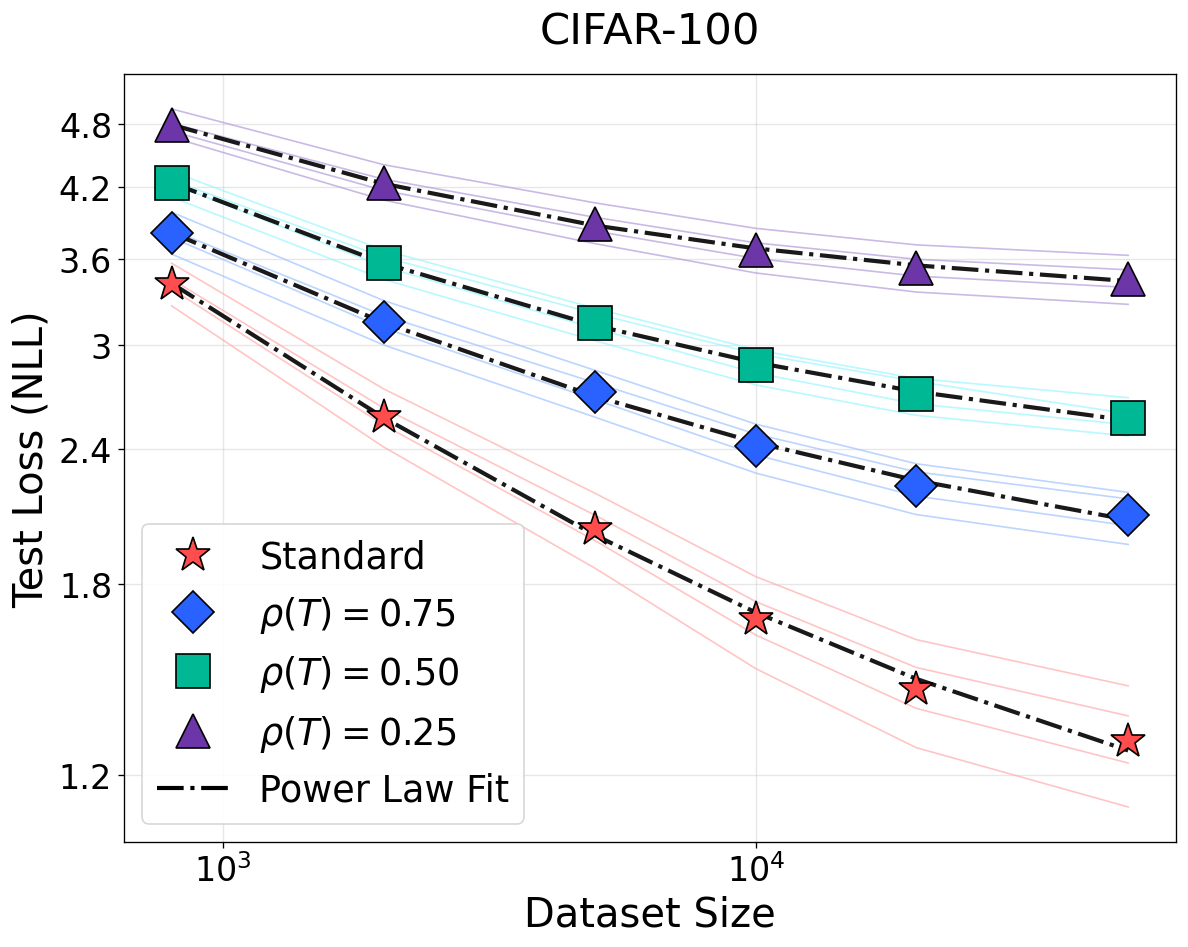} 
        \\
        {\small{(e) Non-Bijective}}
        \end{tabular} &
        \begin{tabular}{c}
        \includegraphics[width=.31\textwidth]{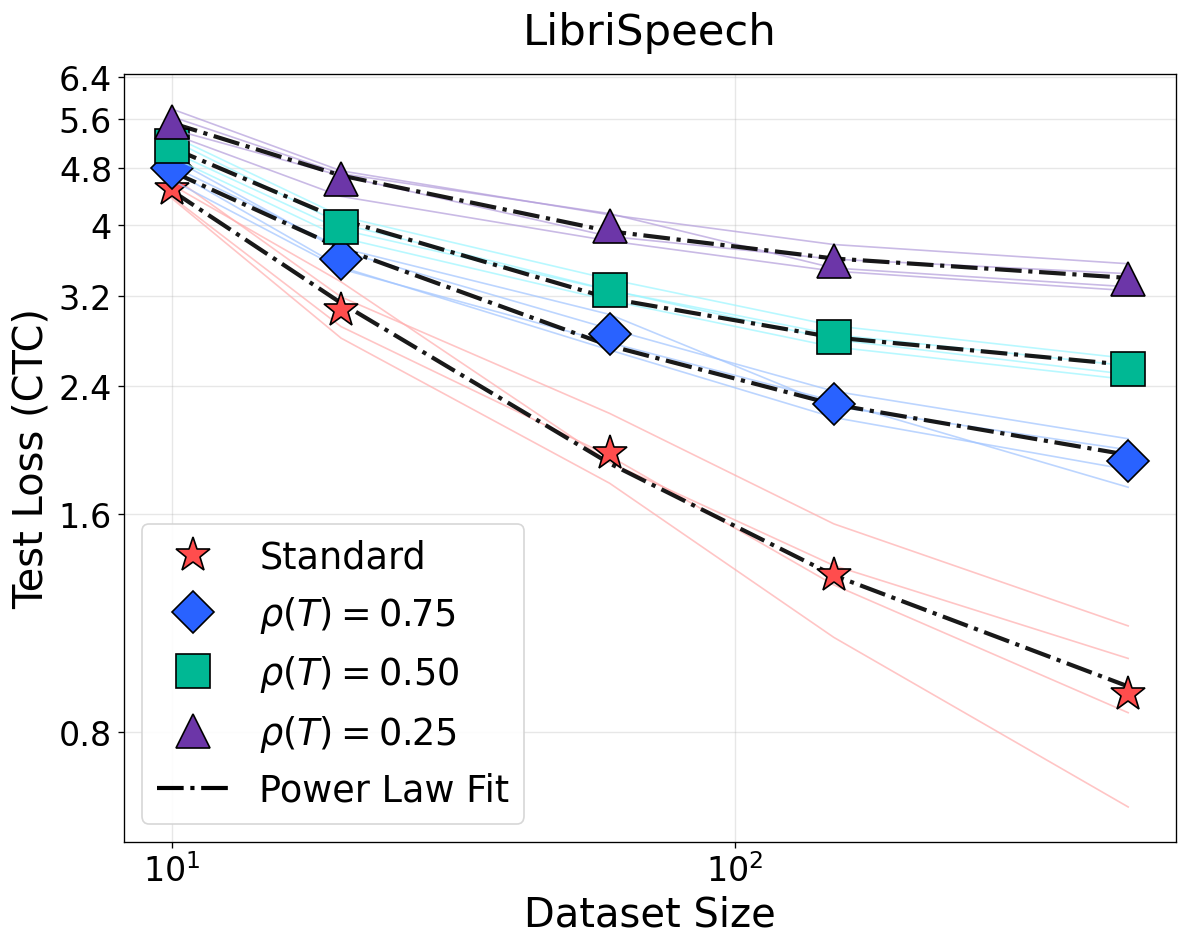} 
        \\
        {\small{(f) Non-Bijective}}
        \end{tabular} \\
        \end{tabular}
    \end{minipage}
    \caption{NSLs for vision and speech tasks on the ImageNet-1K (left column), CIFAR-100 (middle column), and LibriSpeech (right column) datasets. Performance is evaluated using negative log-likelihood (NLL) and Connectionist Temporal Classification (CTC) loss. Similarly, the non-bijective transformation (low-rank) shows a much stronger influence on scaling behavior.}
    \label{fig:other_results}
\end{figure*}

\textbf{Experimental Setup.}
We evaluate our framework across four modalities. Language: causal Llama-3-style Transformers \citep{dubey2024llama} with the GPT-2 tokenizer (context 1024) trained on WikiText-103 \citep{merity2016pointer}, OpenWebText \citep{Gokaslan2019OpenWeb}, and C4 \citep{dodge2021documenting} for next-token prediction and translation. Vision: ViTs \citep{dosovitskiy2020image} on ImageNet-1K \citep{deng2009imagenet} and ResNets \citep{he2016deep} on CIFAR-100 \citep{krizhevsky2009learning}. Speech: Wav2Vec2 \citep{baevski2020wav2vec} on LibriSpeech \citep{panayotov2015librispeech}. Clinical time-series: MIMIC-IV \citep{johnson2023mimic} ICU vitals and labs. Each scaling-law fit uses 4–6 model capacities and 5–6 dataset sizes. We probe two bijective transformation families: linear (full-rank random matrix on token IDs) and nonlinear (affine coupling layers from the normalizing flows literature \cite{dinh2016density}), and two non-bijective families: quantization (Eq. \eqref{eq:quant}) and low-rank projection  (Eq. \eqref{eq:low_rank}) of embeddings at multiple intensity levels. Full details are deferred to Appendix~\ref{app:exp_details}.

\begin{figure*}[h]
    \begin{minipage}{\textwidth}
    \centering
    \begin{tabular}{@{\hspace{-3.8ex}} c @{\hspace{-2.4ex}} c @{\hspace{-1.5ex}} c @{\hspace{-1.5ex}}}
        \begin{tabular}{c}
        \includegraphics[width=.31\textwidth]{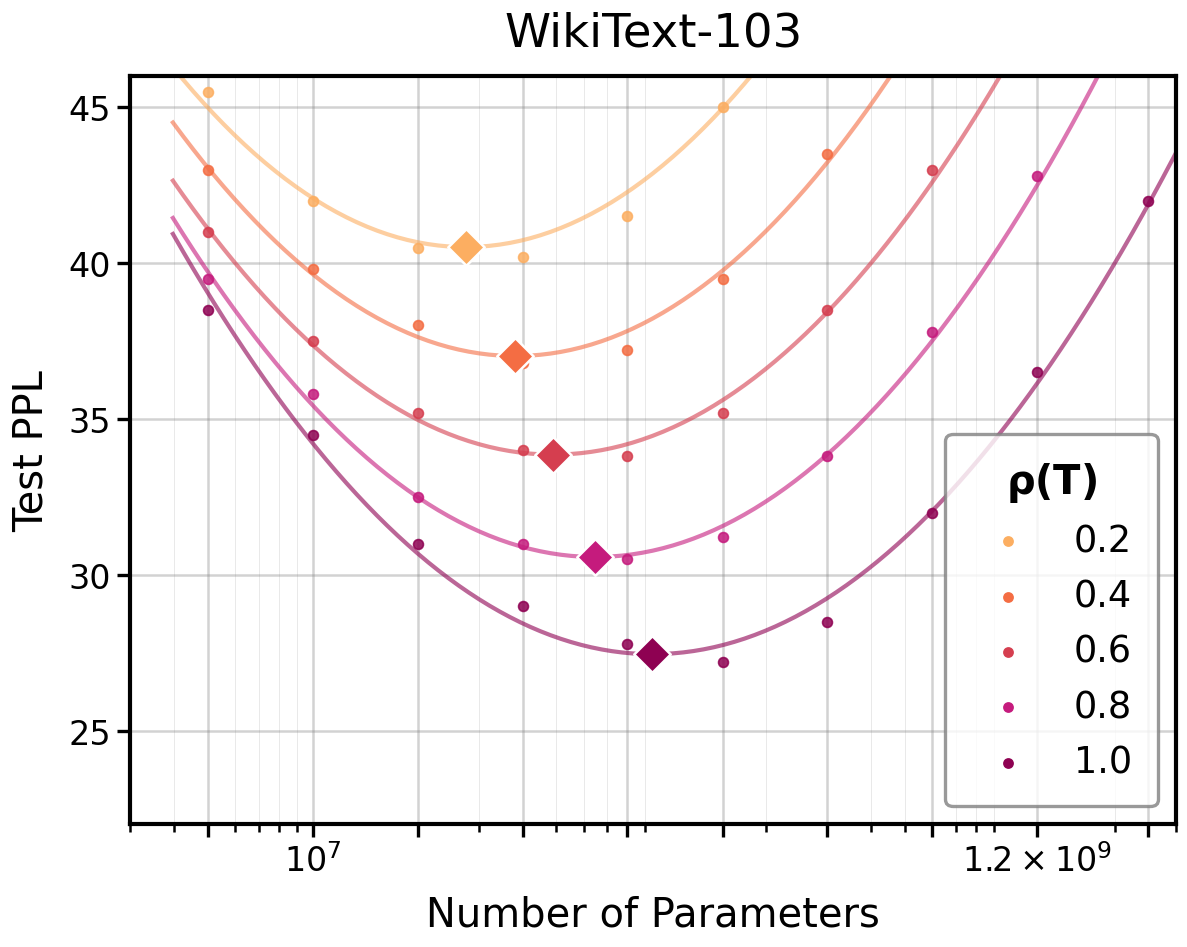}
        \\
        {\small{(a)}}
        \end{tabular} & 
        \begin{tabular}{c}
        \includegraphics[width=.31\textwidth]{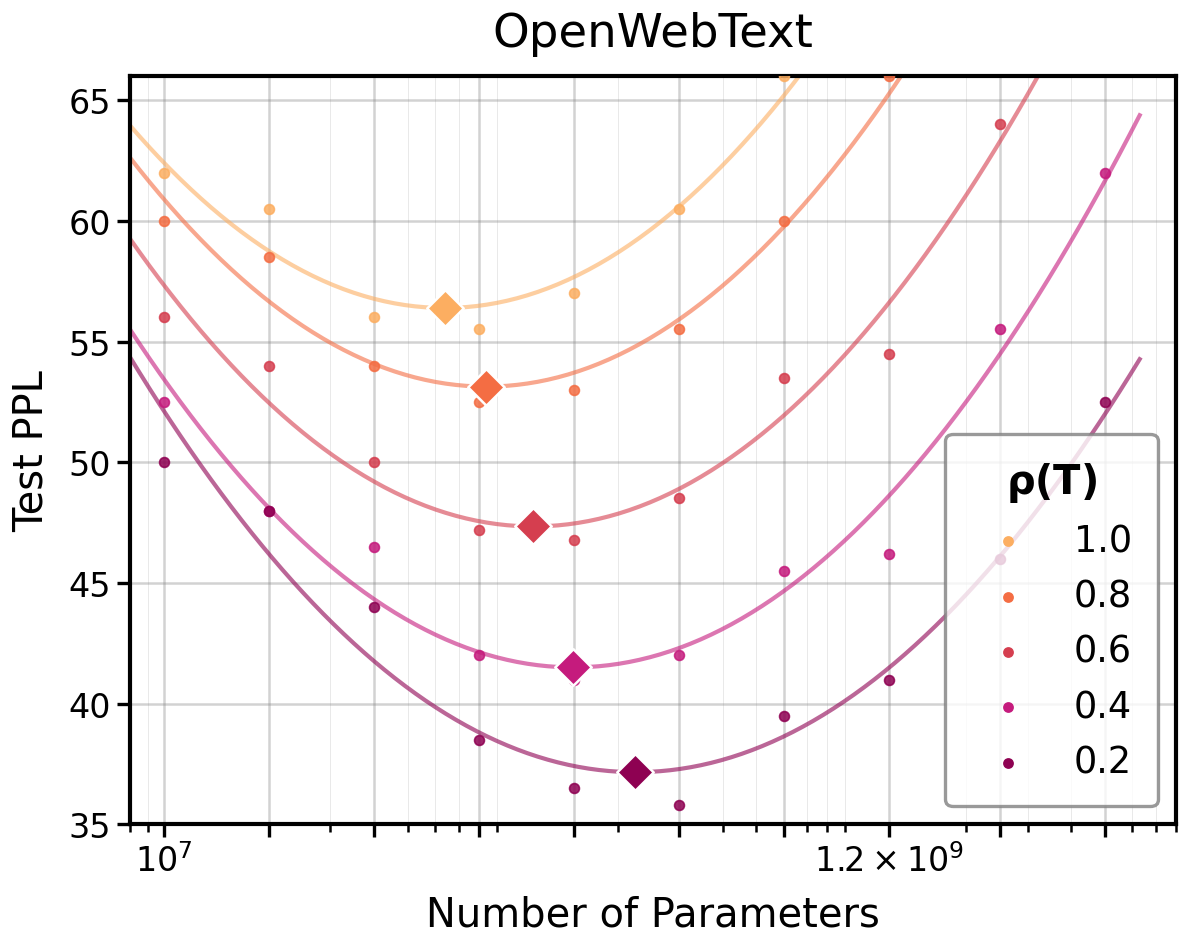} 
        \\
        {\small{(b)}}
        \end{tabular} &
        \begin{tabular}{c}
        \includegraphics[width=.31\textwidth]{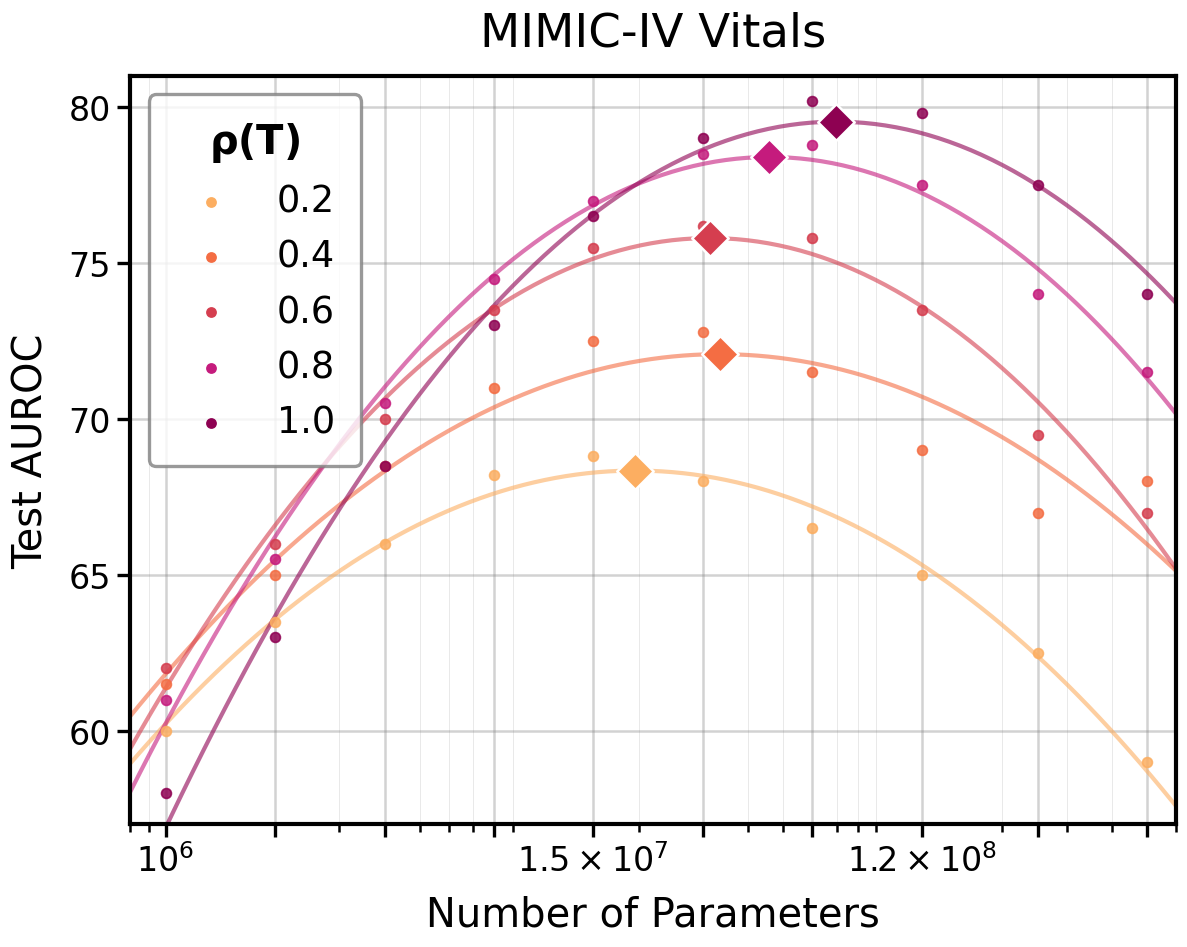} 
        \\
        {\small{(c)}}
        \end{tabular} \\
        \end{tabular}
    \end{minipage}
    \caption{Optimal model capacity as a function of $\rho(T)$. Each curve is an iso-$\rho$ slice: model capacity is swept across ten scales at fixed dataset size and fixed $\rho(T)$, and we plot test loss on the y-axis. The U-shape roughly reflects the underfitting-to-overfitting tradeoff; the minimum of each curve marks the compute-optimal capacity $N^*(\rho)$. As $\rho(T)$ decreases, the optimal model size shrinks, consistent with our prediction from Eq.~(\ref{eq:opt_capacity}). $\rho(T)$ is varied via embedding quantization for the language tasks and quantile binning of clinical time-series for MIMIC-IV.}
    \label{fig:optimal_params}
\end{figure*}

\textbf{NSLs Under Various Transformations.}
Figure~\ref{fig:lm_results} (a)–(c) shows that NSLs remain largely invariant under bijective transformations, with linear and nonlinear re-encodings tracing nearly identical scaling curves to the untransformed baseline. In contrast, Figure~\ref{fig:lm_results} (d)–(f) shows that progressively reducing $\rho(T)$ via quantization weakens the scaling curves and induces a clear saturation effect at each $\rho$ level, with the asymptote tracking the $\rho$-dependent loss floor predicted by Eq.~\eqref{eq:info_capacity}. Note that transformations such as quantization map multiple token embeddings to the same token ID, effectively reducing the overall vocabulary size. This leads to a decrease in NLL, while perplexity correspondingly increases.
Figure~\ref{fig:other_results} shows that the same qualitative pattern holds for vision (ImageNet, CIFAR) and speech (LibriSpeech): bijective transformations of pixel or spectrogram inputs preserve scaling exponents within fitting noise, while non-bijective transformations shift the asymptotic floor in a $\rho$-dependent manner. Notably, the magnitude of degradation is smaller than in language modeling: classification tasks tolerate substantial information loss before scaling visibly suffers, likely because the prediction target is a coarse label rather than a high-entropy next-token distribution, and because these tasks rely less on long-range sequential context that quantization and projection most severely disrupt. The redundancy of natural images and the slot-like structure of speech tokens further mean that much of the discarded information is task-irrelevant. In this moderate-resolution regime, both variance inflation and Bayes-risk shift contribute non-negligibly to the observed scaling behavior.

\begin{algorithm}[t]
\caption{Procedure of Cross-domain Scaling Law Prediction}
\label{alg:predict_scaling}
\begin{algorithmic}[1]
\Require Source-domain losses $\{L_s(N, D, \rho)\}$ across varying $\rho$; transformation $T$.
\Ensure Predicted target-domain exponents $\hat{\beta}_t$ and irreducible loss $\hat{E}_t$.
\State \textbf{Estimate info-resolution parameters from source.} Fit Eq.~\eqref{eq:info_capacity} to source-domain losses across different levels of $\rho$ to obtain the source-domain scaling law and transformation-type parameters $\hat{\nu}, \hat{\mu}, \hat{\kappa}$, together with $A_s, \alpha_s, B_s, \beta_s, E_s$.
\State \textbf{Compute target information resolution analytically.} Estimate $\rho(T)$ via the transformation-specific formula (e.g. Eqs. \eqref{eq:quant}-\eqref{eq:noise}).
\State \textbf{Predict target loss surface.} Substitute $\rho(T)$ and $\hat{\nu}, \hat{\mu}, \hat{\kappa}$ into the info-resolution scaling law:
$
    L_t(N, D) \;=\; \frac{A_s}{N^{\alpha_s}} \;+\; \frac{B_s}{D^{\beta_s}} \cdot \rho(T)^{-\hat{\nu}} \;+\; E_s \;+\; \hat{\kappa}\bigl(1 - \rho(T)\bigr)^{\hat{\mu}}.
$
\State \textbf{Extract effective Chinchilla-form exponents.} Fit
$
    L_t \;=\; \frac{A_t}{N^{\hat{\alpha}_t}} \;+\; \frac{B_t}{D^{\hat{\beta}_t}} \;+\; \hat{E}_t
$
to the analytically predicted surface to obtain $\hat{\beta}_t$ and $\hat{E}_t$.
\State \Return $\hat{\beta}_t,\ \hat{E}_t$.
\end{algorithmic}
\end{algorithm}

\section{Applications of Information-Resolution Scaling Laws}

\subsection{Estimating Optimal Model Capacity}
Determining the optimal model capacity for a given dataset is one of the key insights provided by scaling laws. Here, we investigate how changes in information resolution affect this optimum, offering practical guidance for model design under varying levels of distortion. Our information-resolution scaling law admits a closed-form prediction for this quantity. Under compute-optimal allocation $C = 6ND$~\citep{hoffmann2022training}, minimizing Eq.~(\ref{eq:info_capacity}) over $N$ yields
\begin{equation}
    N^*(C, \rho) \;=\; N^*(C, 1) \cdot \rho^{\,\nu/(\alpha + \beta)},
    \qquad
    D^*(C, \rho) \;=\; D^*(C, 1) \cdot \rho^{-\nu/(\alpha + \beta)},
    \label{eq:opt_capacity}
\end{equation}
where $N^*(C, 1)$ is the standard Chinchilla-optimal model size on clean data. The optimal model size therefore decreases as a power law in $\rho$, while the optimal token budget grows by the inverse factor.
To validate this prediction empirically, we train ten model capacities at each of several $\rho(T)$ levels on three benchmarks: language modeling on WikiText-103 and OpenWebText, and in-hospital mortality prediction on MIMIC-IV. Dataset size is held fixed so that capacity is the sole varying factor. For language modeling, $\rho(T)$ is varied via quantization of token embeddings (consistent with Figure~\ref{fig:lm_results}); for mortality prediction, via quantile binning of clinical time-series at increasing coarseness.

Figure~\ref{fig:optimal_params} confirms a clean monotonic relationship between optimal capacity and $\rho(T)$ across all three tasks, with the empirical curves closely tracking the $\rho^{\,\nu/(\alpha+\beta)}$ form predicted by Eq.~(\ref{eq:opt_capacity}). The direction of the effect is theoretically expected: when each sample carries less usable information, the compute-optimal strategy shifts toward processing more samples with a smaller model, rather than allocating compute to a larger model that cannot recover information already destroyed by the transformation. What is more interesting is the magnitude of the shift: even moderate drops in $\rho$ produce substantial reductions in optimal capacity, suggesting that practitioners working with degraded or compressed data may be systematically over-provisioning model size.

\subsection{Cross-domain Scaling Prediction}
Forecasting scaling behavior in a new domain without running a full sweep from scratch is the central payoff of a transferable scaling law. This is most valuable in medical domain where privacy prevents access to abundant training data and compute resources are scarce.
We apply Algorithm \ref{alg:predict_scaling} to two prediction tasks that sit at different points along the applicability spectrum of our framework. \textbf{(i) Within dataset noise injection:} predicting scaling laws for 48-hour in-hospital mortality on MIMIC-IV time series under varying levels of injected noise, given the law fit on the clean signals. \textbf{(ii) Cross-corpus next-token prediction (more difficult):} predicting scaling laws for MIMIC-IV discharge summaries given the law fit on OpenWebText. The first shares dataset with the source and only varies a known corruption mechanism; the second requires assuming the corpus shift can itself be characterized as a reduction in $\rho(T)$. We discuss the key procedures (steps 2 and 3) below.

\begin{table}[t]
\centering
\caption{Predicted vs. empirical NSL parameters $\beta$ and $E$ across two cross-domain prediction tasks. ``Real'' denotes empirically fitted values; Quality-Aware~\citep{subramanyam2025scaling} and Sub-Optimal~\citep{chen2025revisiting} are baselines. Both baselines (approximately) predict $E_t = E_s$ and therefore systematically underestimate the loss-floor shift, while our framework recovers $E$ within a small margin. See Appendix \ref{app:additional_results} for full results.}
\label{tab:nsl_pred}
\renewcommand{\arraystretch}{0.75}
\setlength{\tabcolsep}{6pt}
\begin{tabular}{l| c| c| l |c |c}
\toprule
\textbf{Task / Transformation} & $\nu$ & $\rho(T)$ & \textbf{Method} & $\beta$ & $E$ \\
\midrule
\multirow{14}{*}{\shortstack[l]{\textbf{(i) Within Dataset:}\\ In-hospital Mortality\\ / Noise Injection}}
  & \multirow{14}{*}{0.12}
  & \multirow{4}{*}{0.75}
    & Real          & 0.2843 & 2.6533 \\
  & & & Quality-Aware~\citep{subramanyam2025scaling} & 0.2975 & 1.8314 \\
  & & & Sub-Optimal~\citep{chen2025revisiting}   & 0.2933 & 1.8285 \\
  & & & \textbf{Ours} & \textbf{0.2961} & \textbf{2.7842} \\
\cmidrule(lr){3-6}
  & & \multirow{4}{*}{0.50}
    & Real          & 0.2441 & 3.7833 \\
  & & & Quality-Aware~\citep{subramanyam2025scaling} & 0.2535 & 1.9768 \\
  & & & Sub-Optimal~\citep{chen2025revisiting}   & 0.2541 & 1.9344 \\
  & & & \textbf{Ours} & \textbf{0.2512} & \textbf{3.7814} \\
\cmidrule(lr){3-6}
  & & \multirow{4}{*}{0.25}
    & Real          & 0.1927 & 5.1371 \\
  & & & Quality-Aware~\citep{subramanyam2025scaling} & 0.2153 & 2.3907 \\
  & & & Sub-Optimal~\citep{chen2025revisiting}   & 0.2221 & 2.2415 \\
  & & & \textbf{Ours} & \textbf{0.1821} & \textbf{4.7839} \\
\midrule
\multirow{4}{*}{\shortstack[l]{\textbf{(ii) Cross Corpus:}\\ Next Token Prediction\\ / Quantization}}
  & \multirow{4}{*}{0.19} & \multirow{4}{*}{0.54}
    & Real          & 0.3254 & 4.4132 \\
  & & & Quality-Aware~\citep{subramanyam2025scaling} & 0.3335 & 3.3854 \\
  & & & Sub-Optimal~\citep{chen2025revisiting}   & 0.3619 & 3.4662 \\
  & & & \textbf{Ours} & \textbf{0.3212} & \textbf{4.0012} \\
\bottomrule
\end{tabular}
\end{table}

\textbf{Step 2: Computing $\rho(T)$.}
For \textit{(i) Noise Injection}, the noise distribution is known by construction, so $\rho(T)$ follows in closed form from Eq.~\eqref{eq:noise} by plugging the resulting signal-to-noise ratio into the SNR-based formula: no estimation required. For \textit{(ii) Corpus-Transfer}, we view MIMIC-IV discharge summaries as a \emph{sublanguage} of OpenWebText: same English grammar but a restricted vocabulary and formulaic clinical templates that collapse the diverse expressions of web text into a smaller set of recurring patterns, which acts as a quantization of OpenWebText's token distribution and lets Eq.~\eqref{eq:quant} apply with $V$ and $q$ corresponding to the source and target effective vocabularies. To estimate $\rho(T)$ in this case, we use a compression-based estimator $\hat{\rho}_{\text{compress}} = (C(\mathcal{D}')/|\mathcal{D}'|)/(C(\mathcal{D})/|\mathcal{D}|)$ with \texttt{gzip}~\citep{chang2024scaling}, which jointly captures vocabulary, frequency, syntax, local semantics, and repetition structure, where $C(\cdot)$ denotes compressed size (in bits), $|\mathcal{D}|$ denotes raw corpus size, and $C(\mathcal{D})/|\mathcal{D}|$ is the compression ratio. Details for this and alternative estimators are deferred to Appendix~\ref{app:method_details}.

\textbf{Step 3: Transferability of NSL Parameters.}
Only $\rho(T)$ is re-estimated per domain; the parameters $\nu, \mu$ are properties of the transformation type, not the dataset. Specifically, $\mu = 1$ for any deterministic transformation, since deterministic information loss induces a linear relationship between lost mutual information and Bayes-risk elevation (Lemma~\ref{lemma:ceiling}); $\nu$ is determined by how transformation-induced distortion propagates through parameter estimation, grounded in the Cramér-Rao bound; and from the proof of Lemma~\ref{lemma:ceiling}, the loss shift takes the form $\Phi\bigl(I(X;Y)(1-\rho)\bigr)$, so $\kappa$ implicitly absorbs the source-domain mutual information $I(X;Y)$. For (i), we reuse $(\hat\nu, \hat\mu, \hat\kappa)$ across noise levels. For (ii), transferring $\kappa$ from OpenWebText is reasonable as both corpora share an autoregressive prediction structure over natural language, making their context--target mutual information comparable. We fit $(\hat\nu, \hat\mu, \hat\kappa)$ once on the source-domain $\rho$-sweep, a leave-one-out fold across noise levels on full MIMIC-IV vital signs for (i), and the OpenWebText quantization sweep for (ii), and reuse them on the target. Besides, the model-capacity exponent $\alpha$ is invariant to data transformations (Prop.~\ref{prop:model_capacity}) and is inherited from the source; only $\beta$ and $E$ require prediction. The resulting values $\hat{\beta}_t$ and $\hat{E}_t$ obtained in Step~4 already account for the effects of variance inflation and optimal-loss shift.

\begin{figure}[t]
    \begin{minipage}{\textwidth}
    \centering
    \begin{tabular}{@{\hspace{-3.8ex}} c @{\hspace{-2.4ex}} c @{\hspace{-1.5ex}} c @{\hspace{-1.5ex}}}
        \begin{tabular}{c}
        \includegraphics[width=.31\textwidth]{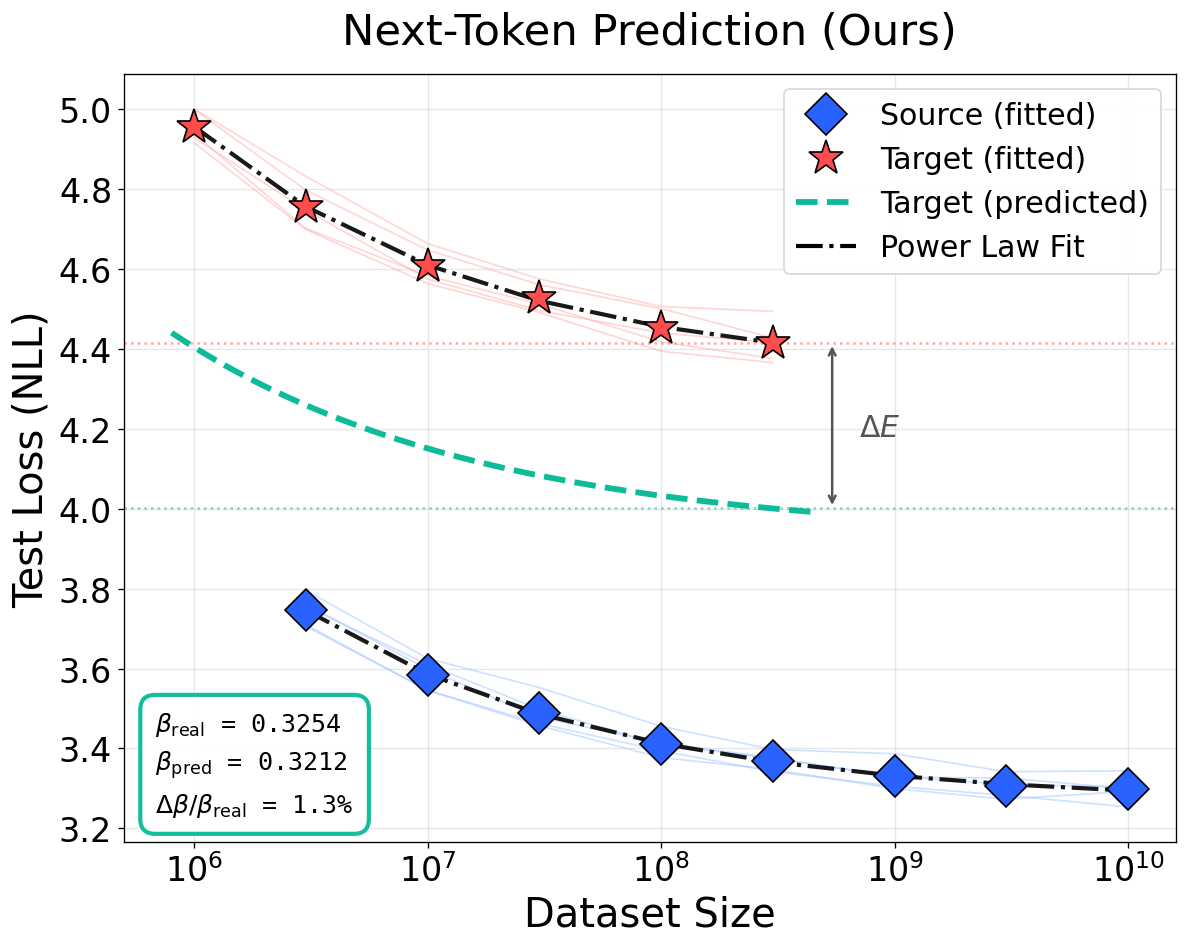}
        \\
        {\small{(a)}}
        \end{tabular} & 
        \begin{tabular}{c}
        \includegraphics[width=.31\textwidth]{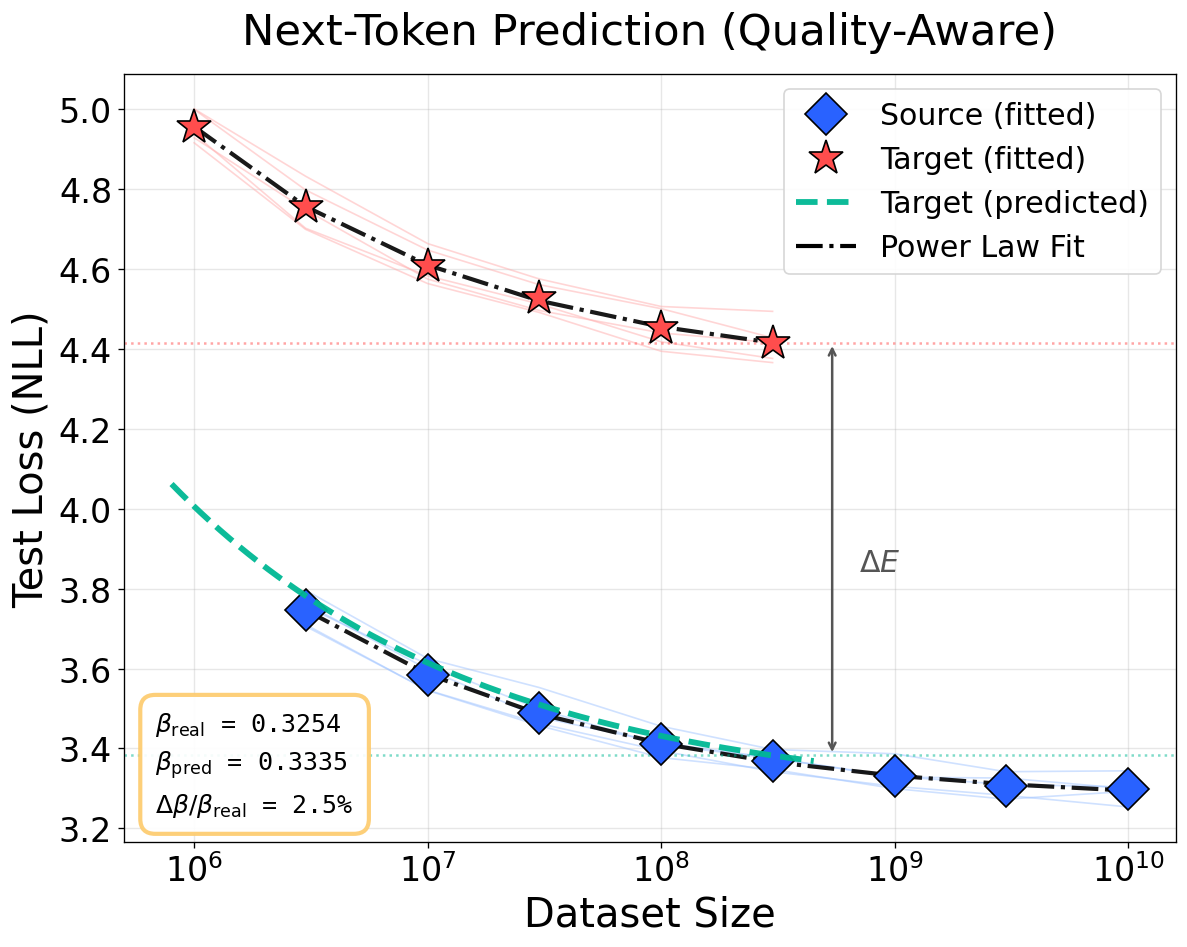} 
        \\
        {\small{(b)}}
        \end{tabular} &
        \begin{tabular}{c}
        \includegraphics[width=.31\textwidth]{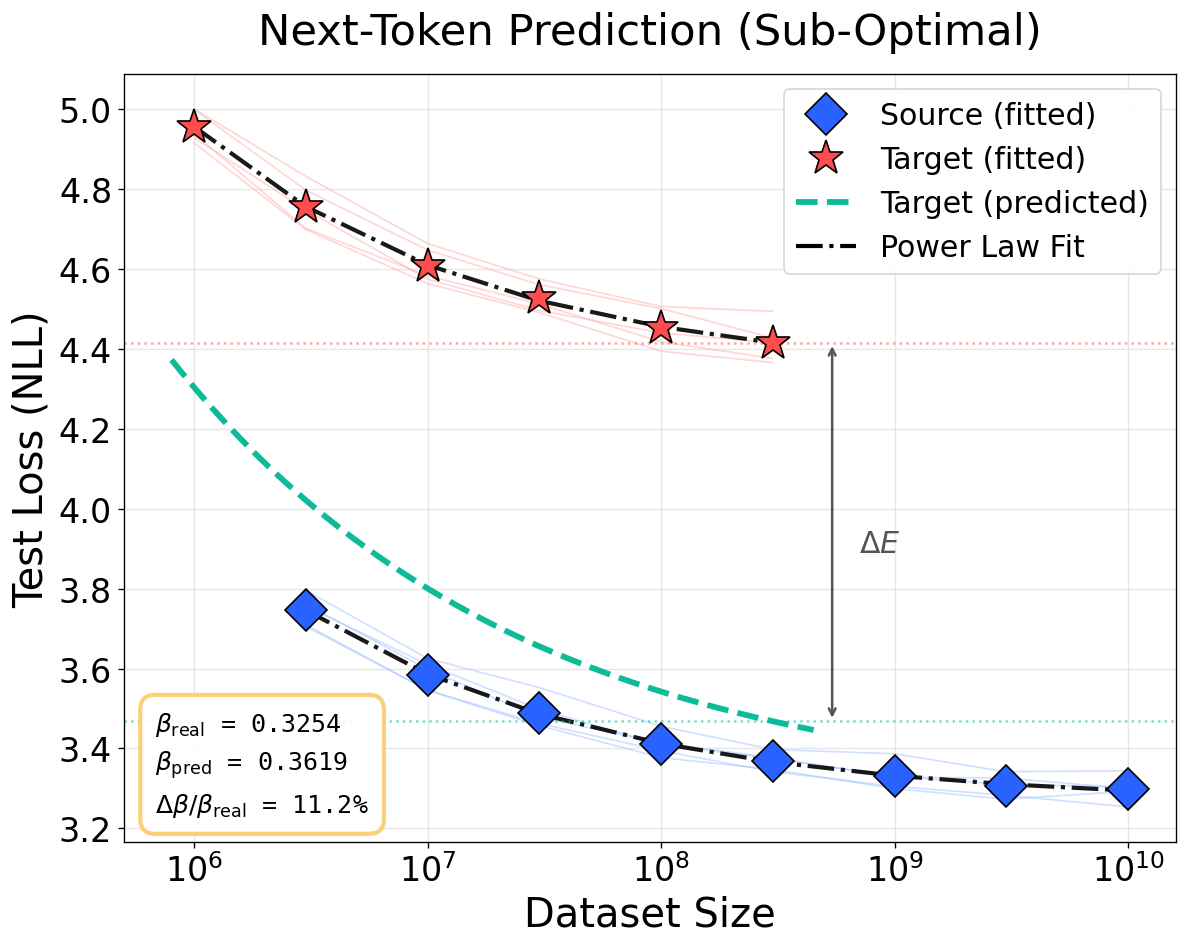} 
        \\
        {\small{(c)}}
        \end{tabular} \\
        \begin{tabular}{c}
        \includegraphics[width=.31\textwidth]{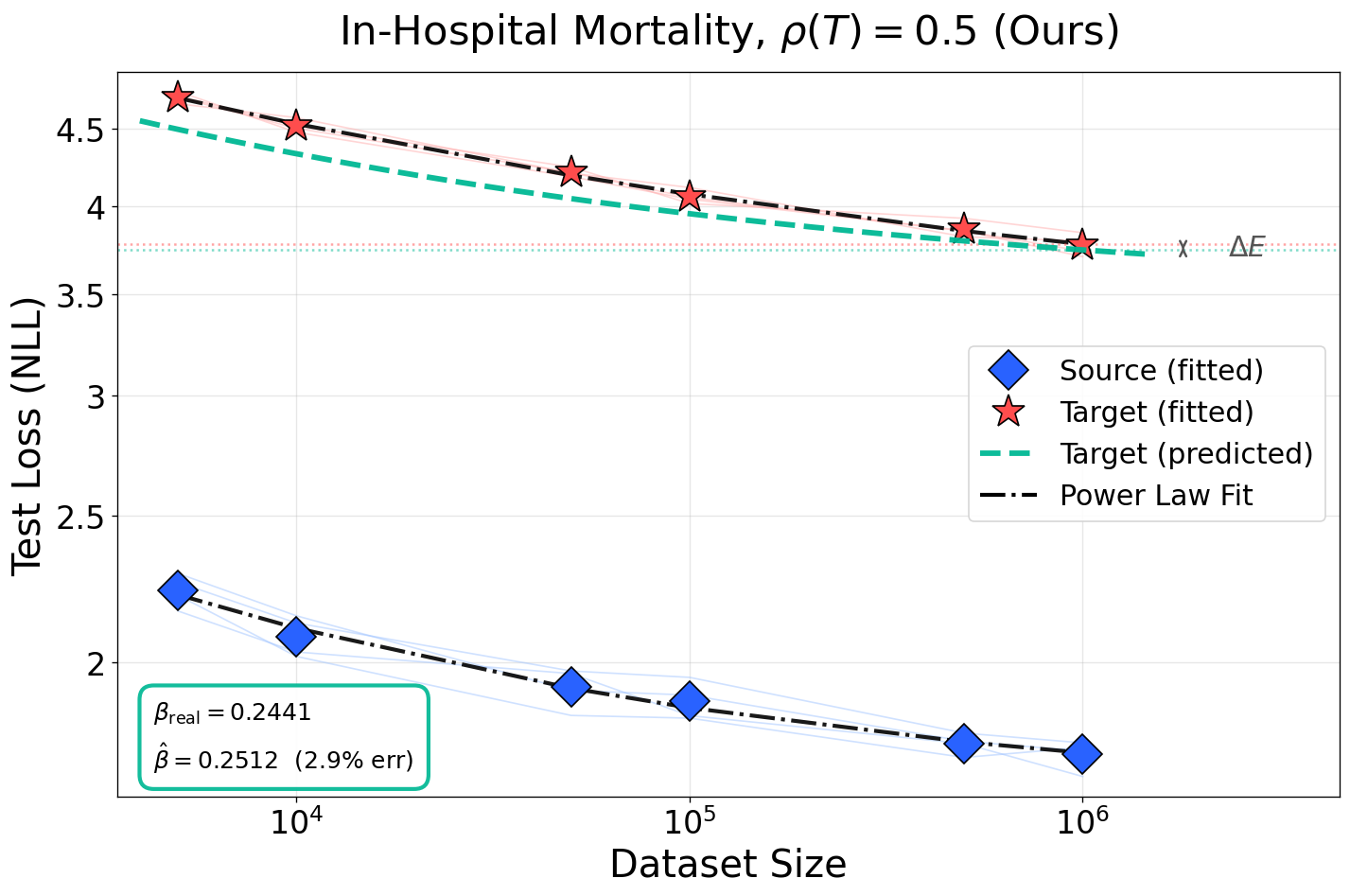}
        \\
        {\small{(d)}}
        \end{tabular} & 
        \begin{tabular}{c}
        \includegraphics[width=.31\textwidth]{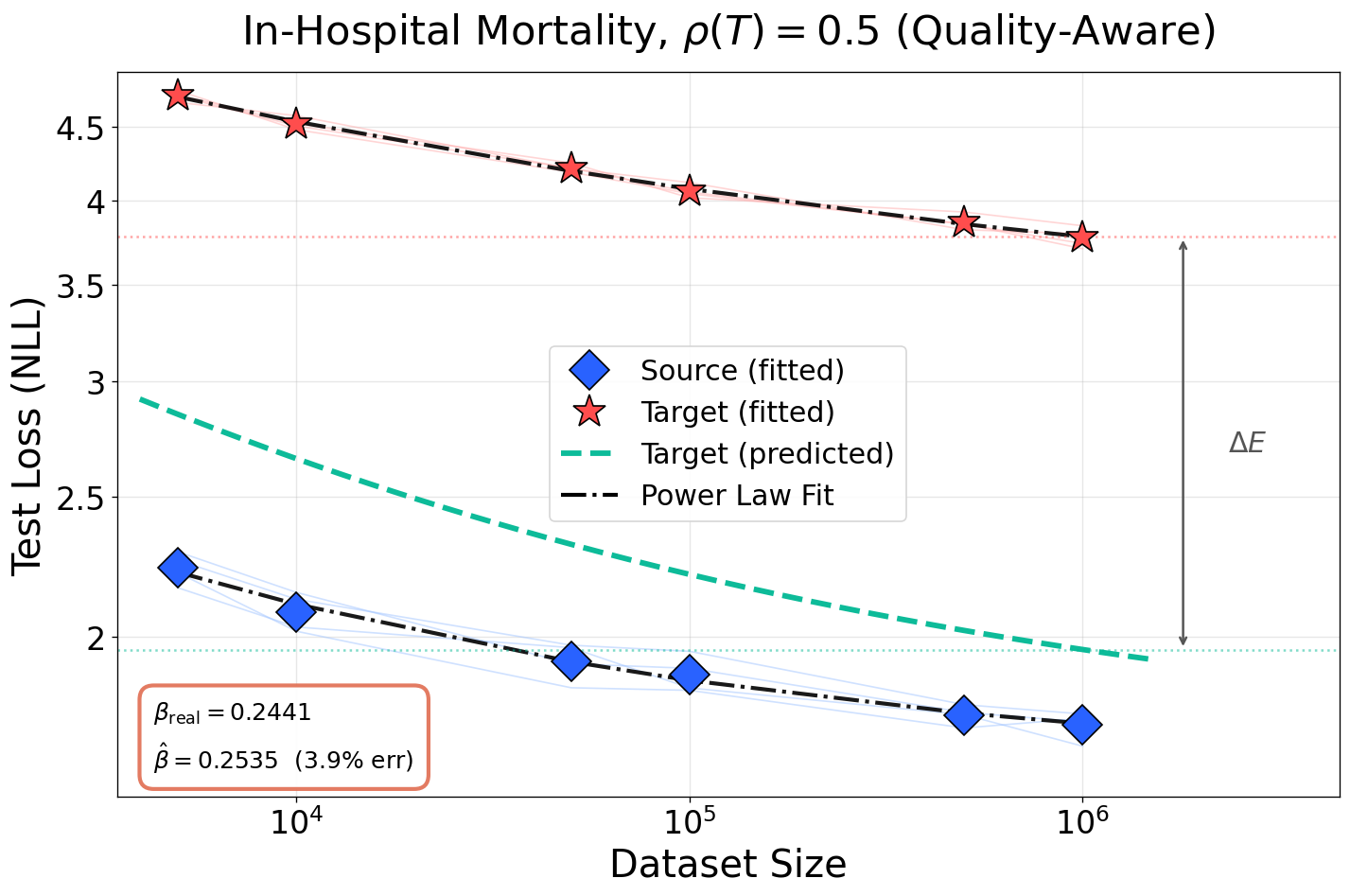} 
        \\
        {\small{(e)}}
        \end{tabular} &
        \begin{tabular}{c}
        \includegraphics[width=.31\textwidth]{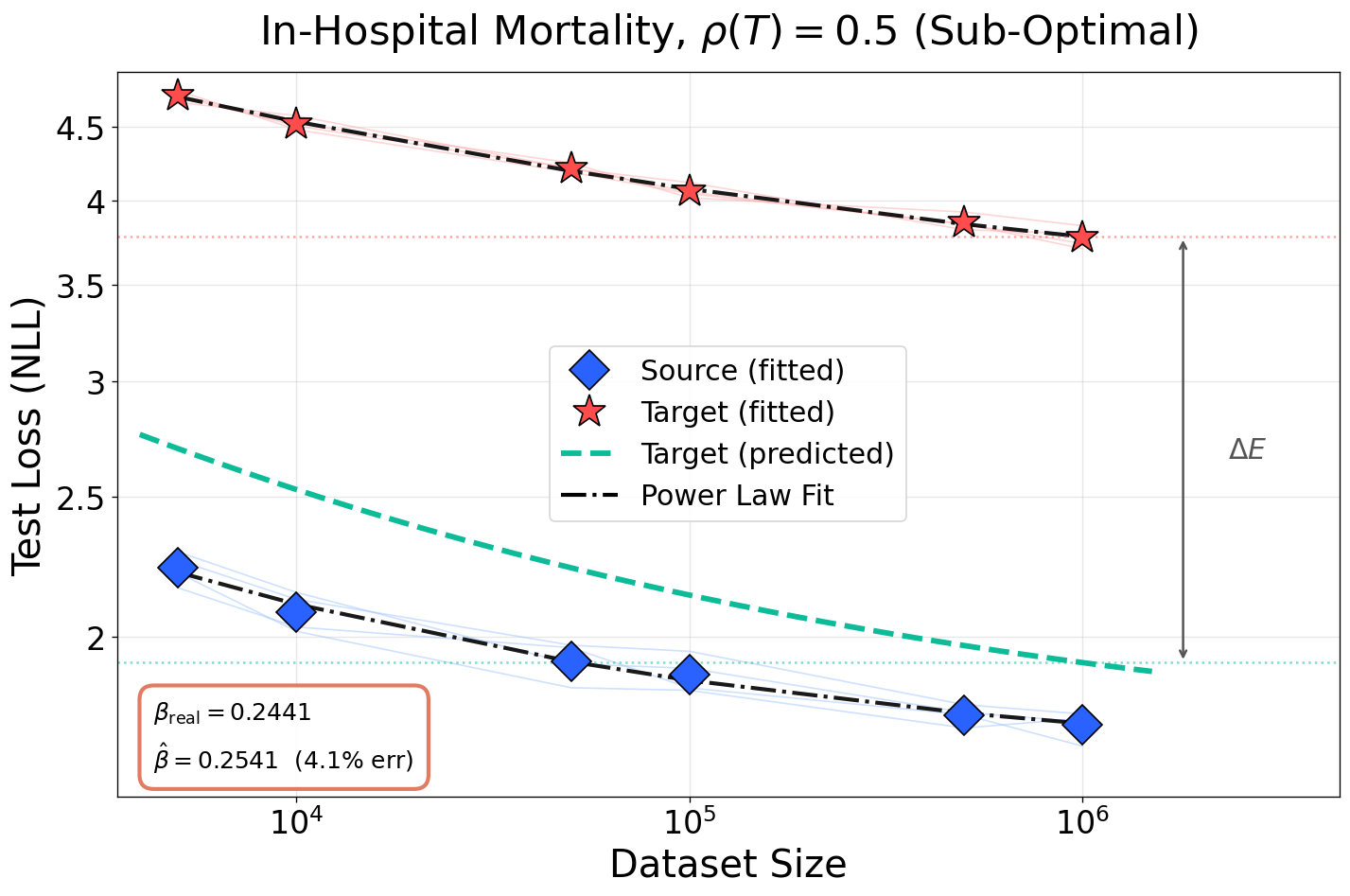} 
        \\
        {\small{(f)}}
        \end{tabular} \\
        \end{tabular}
    \end{minipage}
    \caption{Visualization of cross-domain scaling predictions compared with other baselines. We provide an accurate estimate (within 3\%) of both the data scaling exponent $\beta$ and the irreducible loss $E$ induced by information loss. The Quality-Aware~\citep{subramanyam2025scaling} scaling law partially captures the impact of data quality on the scaling exponent, but fails to account for the loss gap, which results in a larger discrepancy between the fitted and predicted scaling law curves. In contrast, the Sub-Optimal~\citep{chen2025revisiting} baseline shows higher estimation error in both $\beta$ and $E$. Our framework adapts to the loss shift as changing the corpus domain or the noise injection level. In contrast, the baselines rely on a constant-$E$ assumption, making them incapable of capturing such variations.}
    \label{fig:ntp_plots}
\end{figure}

\textbf{Results.}
Table~\ref{tab:nsl_pred} and Figure~\ref{fig:ntp_plots} reports predicted versus fitted $\beta$ and $E$ for both settings, alongside the Quality-Aware~\citep{subramanyam2025scaling} and Sub-Optimal~\citep{chen2025revisiting} baselines. Our framework recovers the data-scaling exponent $\beta$ to within $3\%$ relative error, substantially outperforming both baselines. In addition, neither of baselines model a quality-dependent loss floor and accordingly predict an unchanged $E$.

\section{Conclusion and Future Work}\label{sec:conclusion}
Neural scaling laws are most valuable precisely where they are hardest to fit: in new domains, on new tasks, under data and compute constraints that preclude exhaustive sweeps. This paper addresses that gap by identifying the right invariant. We show that scaling laws are preserved under bijective transformations of the training data and modified under non-bijective ones in information-theoretically grounded ways. Such modifications are through two distinct mechanisms, variance inflation and Bayes-risk elevation, that together yield the \textit{Information-Resolution Scaling Law}. The framework reduces what was previously a per-setting fitting problem to estimating a single scalar $\rho(T)$ on the target, with the remaining scaling law parameters transferred from the source. Empirically, this enables a law fit on general-purpose text or clean physiological signals to predict scaling behavior on clinical notes and noisy mortality data, recovering data-scaling exponents to within $3\%$ and the $\rho$-dependent loss floor that prior quality-aware formulations cannot capture.
Three directions appear most promising: extending $\rho$ estimation to compositional transformations whose effects do not factor cleanly; deriving cross-modality $\rho$ within shared embedding spaces (e.g., CLIP) to bridge the current single-modality boundary; and using the law's compute-optimal predictions $N^*(C, \rho)$ to guide curation policies that explicitly trade data quality for model size.

\bibliography{references}
\bibliographystyle{abbrv}

\newpage
\appendix
\onecolumn

\vspace{1cm}
\centering{\textbf{\Large{Supplementary Material for 
``On the Invariance and Generality of Neural Scaling Laws''}}}

\justifying
\setlength{\parindent}{0pt}
\vspace{0.5cm}

\section{Extended Related Works}
\textbf{Scaling Laws Across Modalities.}~
Neural scaling laws have become foundational to understanding and predicting model behavior, originating from systematic characterizations in language modeling \citep{kaplan2020scaling, hoffmann2022training} and subsequently extending across diverse data modalities. In computer vision, analogous power-law relationships have been established for Vision Transformers, with further refinements showing that compute-optimal configurations require joint optimization of width and depth rather than scaling either dimension in isolation \citep{zhai2022scaling, alabdulmohsin2023getting}. The multimodal setting has received particularly extensive investigation: autoregressive generative modeling across images, video, and image-text data exhibits nearly universal exponents relating optimal model size to compute \citep{henighan2020scaling}, while contrastive language-image pre-training follows reproducible power-law scaling for zero-shot classification and retrieval \citep{cherti2023reproducible}. Mixed-modal scaling laws have been derived across seven modalities, explicitly modeling both synergy and competition between modalities as additive terms to unimodal laws \citep{aghajanyan2023scaling}, with recent work on native multimodal models revealing that early-fusion architectures outperform late-fusion approaches at lower parameter counts \citep{shukor2025scaling}. Beyond vision and language, decoder-only transformers for time series forecasting exhibit power-law scaling analogous to language models across diverse temporal domains \citep{edwards2024scaling}. In this work, we demonstrate the application of our proposed framework across multiple modalities, highlighting its generalizability to diverse domains.

\textbf{Predicting Scaling Laws.}~
A growing body of research has developed methods to predict scaling behavior without exhaustive training. Observational approaches demonstrate that model performance can be characterized through low-dimensional capability spaces, bypassing training entirely \citep{ruan2024observational}. Complementary strategies leverage existing models and training trajectories: fitting to intermediate checkpoints improves prediction accuracy, and scaling parameters often transfer across model families \citep{choshen2024hitchhiker}. Transfer-based prediction has been extended through loss-to-loss strategies using shifted power laws, enabling prediction across pretraining datasets and to downstream tasks \citep{brandfonbrener2024loss}. Beyond model-centric approaches, recent work incorporates data composition, predicting loss as a joint function of model size, training tokens, and domain weights to optimize data mixtures from small-scale runs \citep{shukor2025scaling}.  However, these methods treat data as a monolithic quantity characterized by volume or coarse domain labels, without principled account of why scaling exponents change across settings, or when they should remain invariant. 

\textbf{Data-Dependent Scaling Laws.}~
Recent work has extended classical scaling laws to account for data quality and composition, revealing several key phenomena. Early investigations into noise and architecture effects on data scaling found that quality degradation minimally impacts scaling exponents, suggesting compensation through increased data volume \citep{bansal2022data}. However, this view has been refined by studies demonstrating that data utility is heterogeneous: quality subsets exhibit differing scaling behavior, data repetition yields diminishing returns, and optimal filtering strategies must be tailored to available compute budgets \citep{goyal2024scaling}. Complementary work has leveraged scaling laws as diagnostic tools for data quality itself, using perplexity differences between models of varying sizes to reveal that scaling exponents vary predictably with data characteristics \citep{li2024scalingfilter}. More granular analyses have incorporated effective training tokens based on text diversity and syntheticity \citep{chang2024scaling}, while large-scale empirical studies across hundreds of models have identified high data density and suboptimal resource allocation as primary drivers of sub-scaling phenomena \citep{chen2025revisiting}. Despite these advances, existing formulations lack a unified theoretical framework explaining why different data properties modulate scaling behavior.

\textbf{Data Transformations and Applications}~
Transforming data into different representations has long been a fundamental operation in machine learning. Invertible transformations, which enable exact reconstruction—including linear mappings and bijective nonlinear functions—have been widely exploited in diverse settings such as normalizing flows \citep{rezende2015variational, papamakarios2021normalizing} and independent component analysis \citep{hyvarinen2023nonlinear}. In contrast, non-invertible transformations inherently discard information, as in quantization, clustering, and dimensionality reduction. Such methods are commonly employed to improve efficiency and reduce memory consumption, and have been extensively used in language modeling \citep{hyvarinen2023nonlinear, lang2024comprehensive}, image processing \citep{cosman2002using, lu2019learning}, and time-series forecasting \citep{alqahtani2021deep}. In the following sections, we categorize canonical transformations and analyze how they affect the components of scaling laws.

\section{Proofs of Propositions and Lemma}
\label{app:invariant_proofs}

\begin{proof}[\textbf{Proof of Proposition~\ref{prop:bayes_risk}}]
For any predictor $f$ on original data $X$, define $g = f \circ T^{-1}$ on transformed data $T(X)$. Then:
\begin{equation}
\mathbb{E}[\ell(g(T(X)), Y)] = \mathbb{E}[\ell(f(T^{-1}(T(X))), Y)] = \mathbb{E}[\ell(f(X), Y)]
\end{equation}
Since $T$ is invertible, this mapping $f \mapsto f \circ T^{-1}$ is a bijection between hypothesis classes that preserves risk. Taking infimum over all predictors:
\begin{equation}
R^*(T(X)) = \inf_g \mathbb{E}[\ell(g(T(X)), Y)] = \inf_f \mathbb{E}[\ell(f(X), Y)] = R^*(X)
\end{equation}
Thus the irreducible loss $E = R^*$ is invariant.
\end{proof}

\begin{proof}[\textbf{Proof of Proposition~\ref{prop:sample_complexity}}]
Sample complexity lower bounds depend on mutual information $I(X; Y)$ \citep{xu2017information}:
\begin{equation}
n \geq \Omega\left(\frac{I(X; Y)}{\epsilon^2}\right)
\end{equation}
By Lemma~\ref{thm:mi_invariance}, $I(T(X); Y) = I(X; Y)$ for invertible $T$, so the bound is identical.

For the upper bound, any learning algorithm $A$ on original data can be simulated on transformed data: given samples $\{(T(x_i), y_i)\}$, compute $\{(T^{-1}(T(x_i)), y_i)\} = \{(x_i, y_i)\}$ and run $A$. This simulation has identical sample complexity, so:
\begin{equation}
n^*(\epsilon; T(X)) = n^*(\epsilon; X)
\end{equation}
Thus the data scaling exponent $\beta$ is invariant.
\end{proof}


\begin{proof}[\textbf{Proof of Proposition~\ref{prop:model_capacity}}]
The Bayes optimal predictor for transformed data is $g^* = f^* \circ T^{-1}$.

\textit{Linear case} ($T(x) = Wx$): For a neural network approximating $f^*$, the composition $f^* \circ W^{-1}$ can be realized by replacing the first layer weights $W_1 \to W_1 W^{-1}$. This requires identical parameter count.

\textit{Nonlinear case} ($T$ a smooth diffeomorphism): If $f^* \in C^s$ (s-times differentiable), then $g^* = f^* \circ T^{-1} \in C^s$ since composition preserves smoothness. Neural network approximation rates depend only on smoothness \citep{yarotsky2017error}:
\begin{equation}
\inf \|f - \mathcal{N}\|_\infty = \Theta(N^{-s/d})
\end{equation}
Since $f^*$ and $g^*$ have identical smoothness, they require identical model capacity. Thus the parameter scaling exponent $\alpha$ is invariant.
\end{proof}

\begin{proof}[\textbf{Proof of Lemma~\ref{lemma:ceiling}}]\label{prf:lemma2}
We prove the result in two steps: deriving the gap for log-loss, then establishing irreversibility.

\textit{Step 1: Bayes-optimal loss gap.}
For log-loss, the Bayes-optimal loss given observation $Z$ equals the conditional entropy: $L^*_Z = H(Y|Z)$. Using $H(Y|Z) = H(Y) - I(Z;Y)$:
\begin{align}
L^*_{X'} - L^*_X &= H(Y|X') - H(Y|X) \nonumber \\
&= [H(Y) - I(X';Y)] - [H(Y) - I(X;Y)] \nonumber \\
&= I(X;Y) - I(X';Y) = I(X;Y)(1 - \rho(T))
\end{align}
For general losses, Fano's inequality \citep{scarlett2019introductory} implies $L^*_{X'} - L^*_X \geq \Phi(I(X;Y) - I(X';Y))$ where $\Phi$ is monotonically increasing with $\Phi(0) = 0$.

\textit{Step 2: Irreversibility via Data Processing Inequality.}
The DPI guarantees $I(X';Y) \leq I(X;Y)$ for any transformation, with strict inequality when $T$ is non-invertible. Crucially, no post-processing can recover lost information: for any function $g$, $I(g(X');Y) \leq I(X';Y)$. Therefore, no learning algorithm on $X'$ can achieve loss below $L^*_{X'}$, and the gap $\Phi(I(X;Y)(1-\rho))$ persists regardless of dataset size $D$ or model capacity $N$.

This distinguishes the ceiling from statistical efficiency loss: efficiency loss vanishes as $D \to \infty$, while the ceiling is a hard information-theoretic limit that no amount of data can overcome.
\end{proof}

\begin{proof}[\textbf{Proof of Proposition \ref{prop:info_capacity}}]\label{prf:prop2}
The result follows from decomposing the expected loss into three components.

\textit{(1) Bayes-optimal loss shift.}
By proof of Lemma~\ref{lemma:ceiling}, the Bayes-optimal loss on transformed data is:
\begin{equation}
L^*_{X'} = L^*_X + \Phi(I(X;Y)(1-\rho)) = E + \kappa(1-\rho)^\mu
\end{equation}
where $E = L^*_X$ is the clean-data Bayes risk and $\Phi(\cdot) = \kappa(\cdot)^\mu$ for appropriate $\kappa, \mu$.

\textit{(2) Statistical efficiency degradation.}
Information loss inflates estimator variance. By the Cramér-Rao bound \citep{van2004detection}, variance scales inversely with Fisher information, which is proportional to $I(X';Y) = \rho \cdot I(X;Y)$. For $D$ samples:
\begin{equation}
\text{Var}(\hat{f}) \propto \frac{1}{D \cdot \rho^\nu}
\end{equation}
where $\nu$ depends on corruption type ($\nu = 1$ for additive noise, $\nu = 2$ for label noise).

\textit{(3) Combining terms.}
Standard bias-variance decomposition gives $\mathbb{E}[L] = L^* + \text{estimation error} + \text{approximation error}$. With sufficient model capacity, approximation error scales as $A/N^\alpha$. Estimation error scales as $(B/D^\beta) \cdot \rho^{-\nu}$. The Bayes risk is $L^*_{X'} = E + \kappa(1-\rho)^\mu$. Combining:
\begin{equation}
L(N, D, \rho) = \frac{A}{N^\alpha} + \frac{B}{D^\beta} \cdot \rho^{-\nu} + E + \kappa(1-\rho)^\mu
\end{equation}

The boundary conditions follow directly: $\rho = 1$ gives $\rho^{-\nu} = 1$ and $(1-\rho)^\mu = 0$; $\rho \to 0$ gives $\rho^{-\nu} \to \infty$ (but dominated by ceiling for large $D$) and $(1-\rho)^\mu \to 1$.
\end{proof}

\section{Additional Methodological Details}\label{app:method_details}
\subsection{Information Resolution for Specific Transformations}\label{subsec:info_resolution}

We discuss practical usages of our proposed framework and its potential applications for predicting scaling laws in different domains. We first derive the corresponding scaling parameters for two common non-bijective transformations. More detailed discussions can be found in Appendix. 

\textbf{Quantization.}
Consider $q$-level quantization of a discrete source with vocabulary $\mathcal{V}$, $|\mathcal{V}| = V$. If the original tokens are approximately uniformly distributed, then $H(X) \approx \log V$. After quantization to $q$ centroids, the maximum entropy is $H(X') \leq \log q$. By the data processing inequality applied to the Markov chain $Y \to X \to X'$:
\begin{equation}
\rho_{\text{quant}} = \frac{I(X';Y)}{I(X;Y)} \leq \frac{H(X')}{H(X)} \approx \frac{\log q}{\log V}.
\label{eq:quant}
\end{equation}
The variance inflation term is $\phi_{\text{VI}} = \rho^{-\nu}$ with the exponent $\nu$ determined by how quantization error propagates through learning. The optimal loss shift arises because quantization irreversibly merges $V/q$ tokens per cluster on average, discarding fine-grained distinctions:
\begin{equation}
\phi_{\text{LS}} = \kappa \left(1 - \frac{\log q}{\log V}\right) = \kappa(1 - \rho).
\end{equation}

\paragraph{Low-Rank Projection.}
Let $X \in \mathbb{R}^d$ with covariance $\Sigma = \sum_{i=1}^d \lambda_i u_i u_i^\top$ (eigenvalues sorted descending). Projection onto the top-$k$ subspace, $X' = U_k U_k^\top X$, discards variance in the trailing $(d$-$k)$ directions. For jointly Gaussian $(X, Y)$, mutual information decomposes across independent components, yielding:
\begin{equation}
\rho_{\text{rank}} = \frac{I(X';Y)}{I(X;Y)} = \frac{\sum_{i=1}^k \lambda_i \cdot r_i^2}{\sum_{i=1}^d \lambda_i \cdot r_i^2}
\label{eq:low_rank}
\end{equation}
where $r_i^2$ is the squared correlation between the $i$-th principal component and $Y$. When $Y$ loads uniformly across components, this simplifies to the variance ratio $\sum_{i=1}^k \lambda_i / \sum_{i=1}^d \lambda_i$. The optimal loss shift is exactly the information in discarded dimensions:
\begin{equation}
\phi_{\text{LS}} = \kappa(1 - \rho) = \kappa \cdot \frac{\sum_{i=k+1}^d \lambda_i}{\sum_{i=1}^d \lambda_i}
\end{equation}
Since projection is a linear operation, the variance inflation exponent is $\nu = 1$.

\textbf{Additive Noise.}
For $X' = X + \epsilon$ with $\epsilon \sim \mathcal{N}(0, \sigma_\epsilon^2 I)$ independent of $X$, the mutual information under a joint Gaussian $(X, Y)$ satisfies:
\begin{equation}
I(X';Y) = \frac{1}{2}\log\left(1 + \frac{\sigma_X^2}{\sigma_\epsilon^2}\right) = \frac{1}{2}\log(1 + \text{SNR})
\label{eq:noise}
\end{equation}
Thus $\rho_{\text{noise}} = \log(1 + \text{SNR})/\log(1 + \text{SNR}_0)$ where $\text{SNR}_0$ is the baseline. In the high-SNR regime, $\rho \approx \text{SNR}/\text{SNR}_0$. From the Cramér-Rao bound, estimator variance scales inversely with Fisher information \citep{van2004detection}, which is proportional to SNR: this gives $\phi_{\text{VI}} = \rho^{-1}$. Unlike deterministic transformations, additive noise creates a soft ceiling: information is obscured rather than deleted, so $\phi_{\text{LS}}$ grows more gradually, scaling as $\kappa(1 - \rho)^\mu$ with $\mu < 1$ reflecting partial recovery through averaging.

\subsection{Information Resolution Estimators for Cross-domain Scaling Predictions.}\label{app:rho_estimator}

The naive vocabulary-size estimator $\hat{\rho}_{\text{vocab}}(T) = \log q / \log V$ from Eq.~\eqref{eq:quant} provides a useful starting point but rests on two unrealistic assumptions: (i) tokens are approximately uniformly distributed, and (ii) vocabulary size is the dominant axis of cross-corpus variation. Real corpora violate both: word frequencies are heavy-tailed, and shifts between domains involve syntax, local semantics, and repetition structure in addition to vocabulary. Because $\hat{\rho}_{\text{vocab}}$ provides an \emph{upper bound} on the true $\rho(T)$, it systematically underestimates the loss shift $\hat{E}_t = E_s + \hat{\kappa}(1-\hat{\rho})^{\hat{\mu}}$, propagating bias into downstream predictions. We therefore consider two refinements.

\paragraph{Estimator 1: Empirical $n$-gram entropy ratio.}
Rather than counting vocabulary, we estimate the entropy of the empirical token distribution directly:
\begin{equation}
\hat{\rho}_{\text{emp}}(T) = \frac{H(\hat{p}_{\mathcal{D}'})}{H(\hat{p}_{\mathcal{D}})},
\end{equation}
where $\hat{p}_{\mathcal{D}}$ and $\hat{p}_{\mathcal{D}'}$ are unigram distributions on the source and target corpora. To capture sequential dependencies, we extend this to the 3-gram conditional entropy:
\begin{equation}
\hat{\rho}_{\text{3-gram}}(T) = \frac{\hat{H}_3(\mathcal{D}')}{\hat{H}_3(\mathcal{D})}, \quad \hat{H}_3(\mathcal{D}) = -\!\!\sum_{w_1,w_2,w_3}\!\! \hat{p}(w_1,w_2,w_3)\log \hat{p}(w_3 \mid w_1 w_2).
\label{eq:rho_3gram}
\end{equation}

\textbf{Pros:} captures frequency skew, token diversity, and local syntactic structure simultaneously, so a clinical corpus with formulaic templates registers as lower-$\rho$ even when its raw vocabulary size matches another domain's. In practice, 3- or 4-gram statistics already account for most local syntactic effects.
\textbf{Cons:} requires computing joint distributions over $|\mathcal{V}|^n$ cells, which becomes memory- and sample-bound for large vocabularies; estimates degrade for rare $n$-grams and require smoothing; and it remains blind to long-range semantic structure beyond the $n$-gram window.

\paragraph{Estimator 2: Compression-based.}
Inspired by recent work on data quality in language models~\citep{chang2024scaling}, we use a general-purpose compressor (gzip) as a model-free probe of information content. The compression ratio is
\begin{equation}
\hat{\rho}_{\text{compress}}(T) = \frac{C(\mathcal{D}')/|\mathcal{D}'|}{C(\mathcal{D})/|\mathcal{D}|},
\label{eq:rho_compress}
\end{equation}
where $C(\cdot)$ denotes compressed size in bits and $|\cdot|$ denotes raw corpus size. A more compressible corpus is, by Kolmogorov-complexity intuition, lower in usable information per token and hence lower in $\rho$.

\textbf{Pros:} simultaneously reflects vocabulary, frequency, syntax, local semantics, and repetition structure without explicitly modeling any of them; computationally cheaper than constructing high-order $n$-gram tables; entirely non-parametric, so no smoothing or vocabulary-size hyperparameter is required.
\textbf{Cons:} the compressor's inductive bias is fixed and may not align with task-relevant information: gzip rewards local repetition but is insensitive to long-range semantic structure that a downstream language model would actually exploit; absolute compression ratios depend on corpus size and the compressor's dictionary, so only \emph{ratios} are interpretable.

\paragraph{Empirical comparison.}
Table~\ref{tab:rho_estimators} reports $\hat{\rho}(T)$ from all three estimators on the OpenWebText$\to$MIMIC-IV discharge-summary transfer, alongside the resulting predicted loss shift $\hat{E}_t$ and its gap to the empirical $E_t$. The vocabulary-based estimator yields $\hat{\rho}_{\text{vocab}} \approx 0.71$ and the largest $\hat{E}_t$ gap, consistent with its upper-bound bias: by inflating $\rho$, it shrinks $(1-\hat{\rho})^{\hat{\mu}}$ and understates the loss-floor elevation. The 3-gram estimator returns a lower $\hat{\rho}$: clinical text's templated structure makes its conditional entropy meaningfully smaller than OpenWebText's, and tightens the $\hat{E}_t$ gap accordingly. The compression-based estimator produces the lowest $\hat{\rho}$ and the smallest gap, reflecting its sensitivity to the formulaic repetition that dominates discharge summaries (e.g., section headers, recurring drug names and abbreviations), exactly the redundancy that $n$-gram statistics partially miss because such patterns span variable distances. Across estimators, the ordering of $\hat{E}_t$ accuracy tracks the ordering of $\hat{\rho}$ richness: the more dimensions of the cross-corpus shift the estimator captures, the closer the predicted loss floor sits to the empirical one. The remaining gap under $\hat{\rho}_{\text{compress}}$ is small enough to be consistent with our theoretical framework's assumption that $\nu, \mu, \kappa$ transfer cleanly between deterministic transformations on natural-language corpora.

\section{Limitations}
\label{sec:limitations}

Our framework is most reliable when the source-to-target shift can be meaningfully cast as a reduction in information resolution, and its assumptions weaken as one moves outside this regime. We highlight three main limitations. \textbf{(i) Cross-domain transferability of $\nu, \mu, \kappa$.} The framework treats these as transformation-specific properties: $\nu$ from how distortion propagates through estimator variance (Cramér-Rao), $\mu = 1$ for any deterministic transformation, and $\kappa$ as a loss-shift coefficient that implicitly absorbs the source-domain mutual information $I(X;Y)$. Transferring $\kappa$ across corpora therefore assumes comparable task-relevant mutual information between source and target: reasonable for OpenWebText$\to$MIMIC-IV (both natural language, both autoregressive prediction) but increasingly tenuous for unrelated subdomains (e.g., natural text $\to$ source code), where shared information is minimal. \textbf{(ii) Estimating $\rho(T)$.} When the transformation is known by construction (noise injection, explicit quantization), $\rho(T)$ admits a closed form. For implicit cross-corpus shifts, $\rho(T)$ must be estimated from data, and our compression- and $n$-gram-based estimators only approximate the true $\rho_{\text{true}}$: they may fail when the target introduces new information absent from the source, or when information loss is non-uniform and concentrated on task-critical features. \textbf{(iii) Cross-modality scaling.} Our framework currently assumes a single shared modality between source and target; cross-modality prediction (e.g., text$\to$image) lacks a natural transformation $T$ within the present formulation, and the underlying loss functions are likely incomparable. A promising extension is to define $\rho(T)$ within a shared embedding space such as CLIP, but this remains future work.

\section{Additional Results} \label{app:additional_results}
In this section, we present additional experimental results supplementary to the main experiments. Figures~\ref{fig:ihm_plots_075} and \ref{fig:ihm_plots_025} illustrate the cross-corpus next-token prediction experiments when $\rho(T)$ is set to $0.75$ and $0.25$, respectively. Tables~\ref{tab:pixel_prediction} and \ref{tab:snr_prediction_real_075} present scaling law prediction results on the ImageNet-1K dataset under two transformation types: image quantization and noise injection. Tables~\ref{tab:rho_estimators} through \ref{tab:snr_prediction_025} report the complete information-resolution scaling law parameter results for the scaling law prediction experiments supplementary to Table~\ref{tab:nsl_pred}.

\begin{figure}[h]
    \begin{minipage}{\textwidth}
    \centering
    \begin{tabular}{@{\hspace{-3.8ex}} c @{\hspace{-2.4ex}} c @{\hspace{-1.5ex}} c @{\hspace{-1.5ex}}}
        \begin{tabular}{c}
        \includegraphics[width=.31\textwidth]{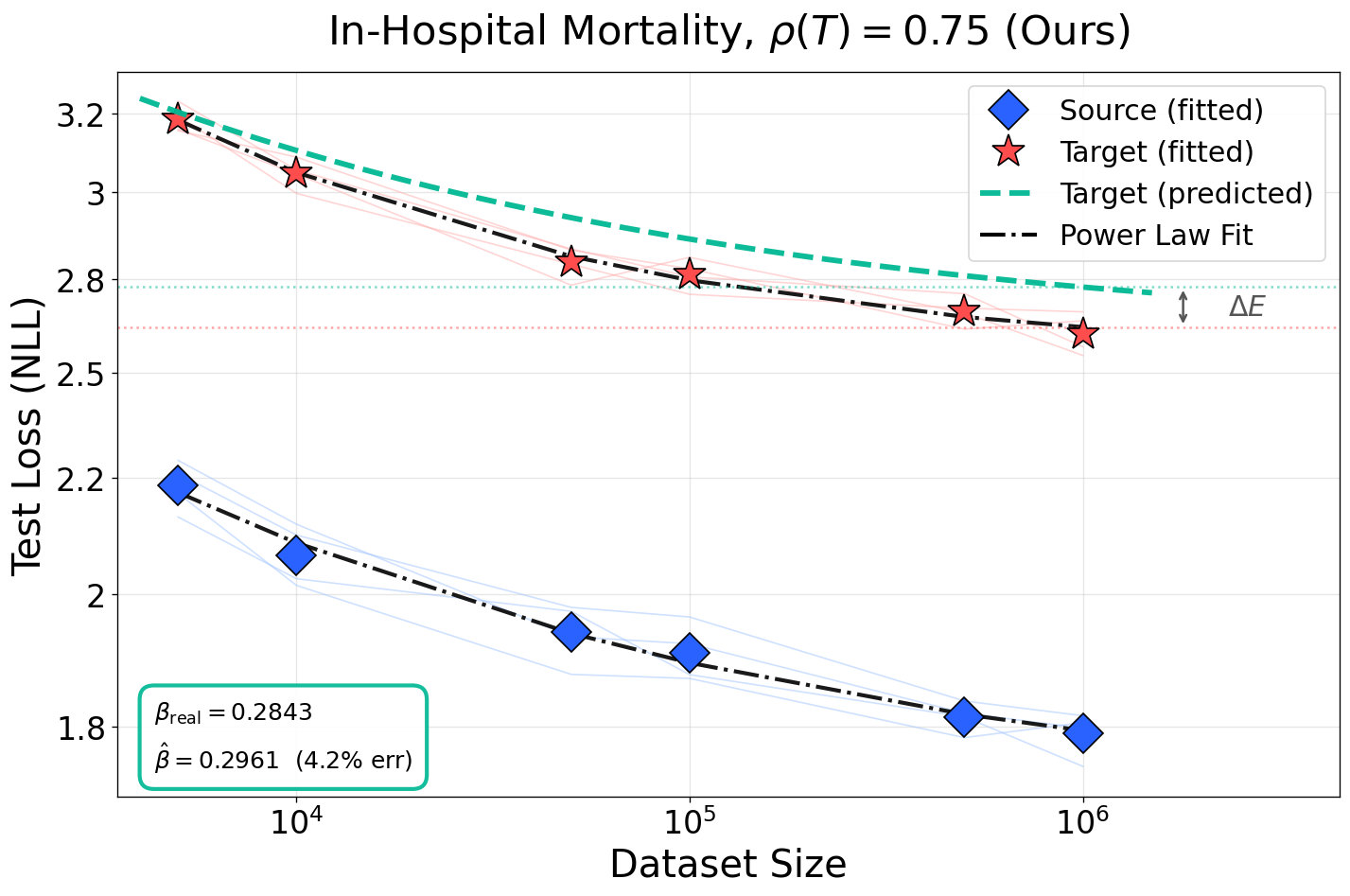}
        \\
        {\small{(a)}}
        \end{tabular} & 
        \begin{tabular}{c}
        \includegraphics[width=.31\textwidth]{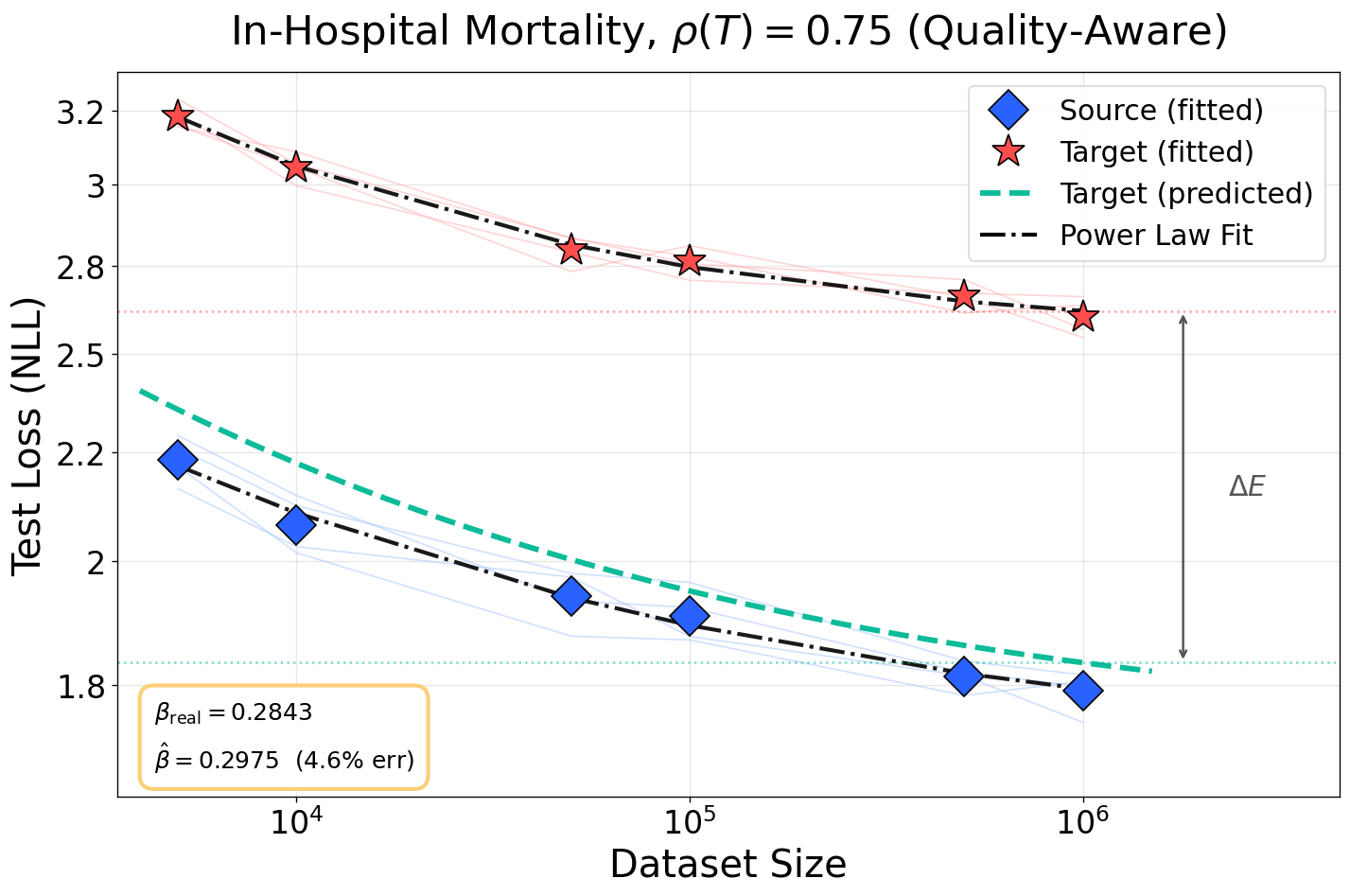} 
        \\
        {\small{(b)}}
        \end{tabular} &
        \begin{tabular}{c}
        \includegraphics[width=.31\textwidth]{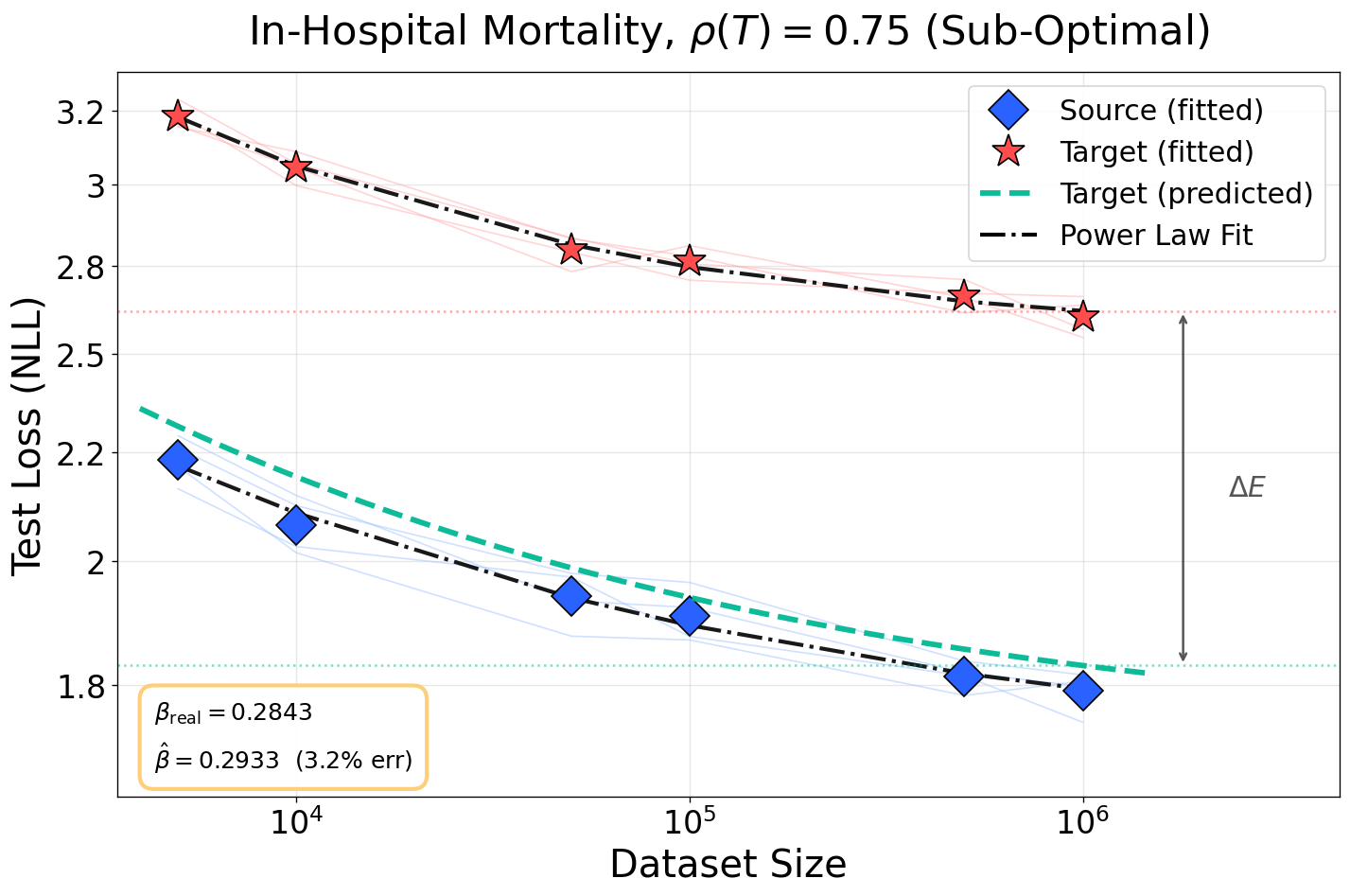} 
        \\
        {\small{(c)}}
        \end{tabular} \\
        \end{tabular}
    \end{minipage}
    \caption{We compare our method with baselines under different noise injection levels for the in-hospital mortality prediction task on MIMIC-IV. The reported results for our method follow the same experimental settings as in Table \ref{tab:nsl_pred} of the main paper. Aside from the gap induced by the irreducible loss, all three frameworks achieve similarly strong estimation performance on $\beta$.}
    \label{fig:ihm_plots_075}
\end{figure}

\begin{figure}[h]
    \begin{minipage}{\textwidth}
    \centering
    \begin{tabular}{@{\hspace{-3.8ex}} c @{\hspace{-2.4ex}} c @{\hspace{-1.5ex}} c @{\hspace{-1.5ex}}}
        \begin{tabular}{c}
        \includegraphics[width=.31\textwidth]{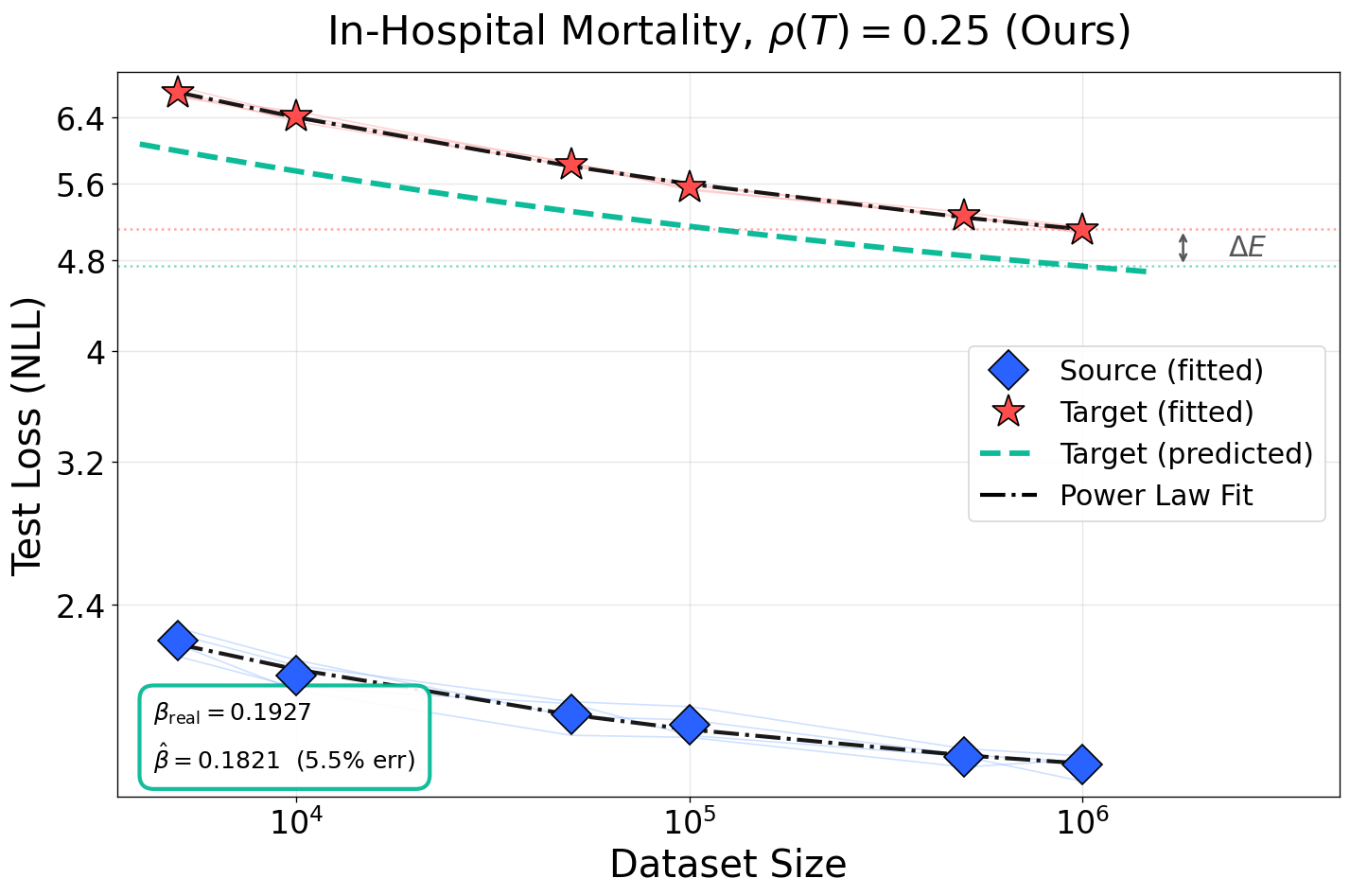}
        \\
        {\small{(a)}}
        \end{tabular} & 
        \begin{tabular}{c}
        \includegraphics[width=.31\textwidth]{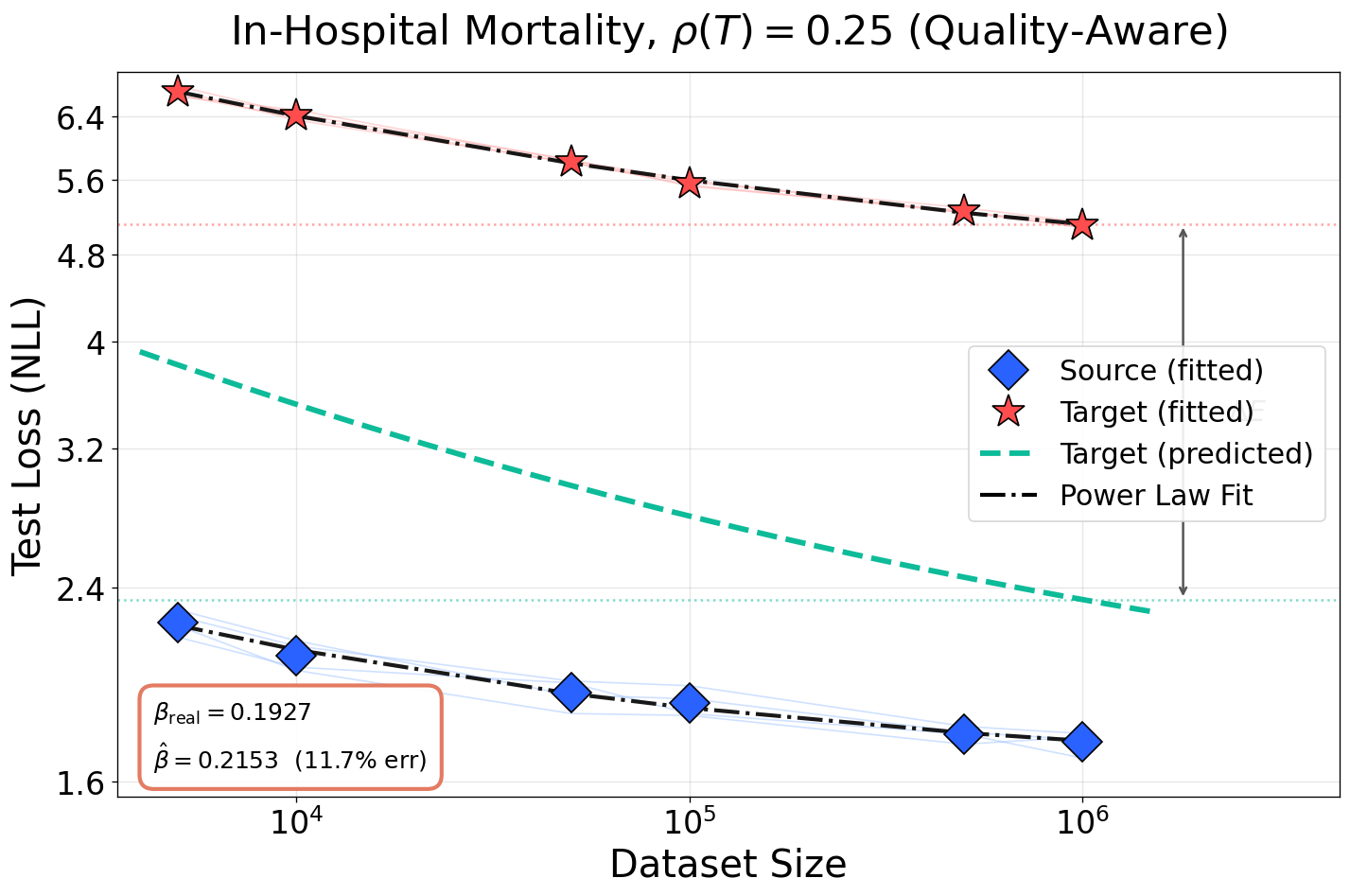} 
        \\
        {\small{(b)}}
        \end{tabular} &
        \begin{tabular}{c}
        \includegraphics[width=.31\textwidth]{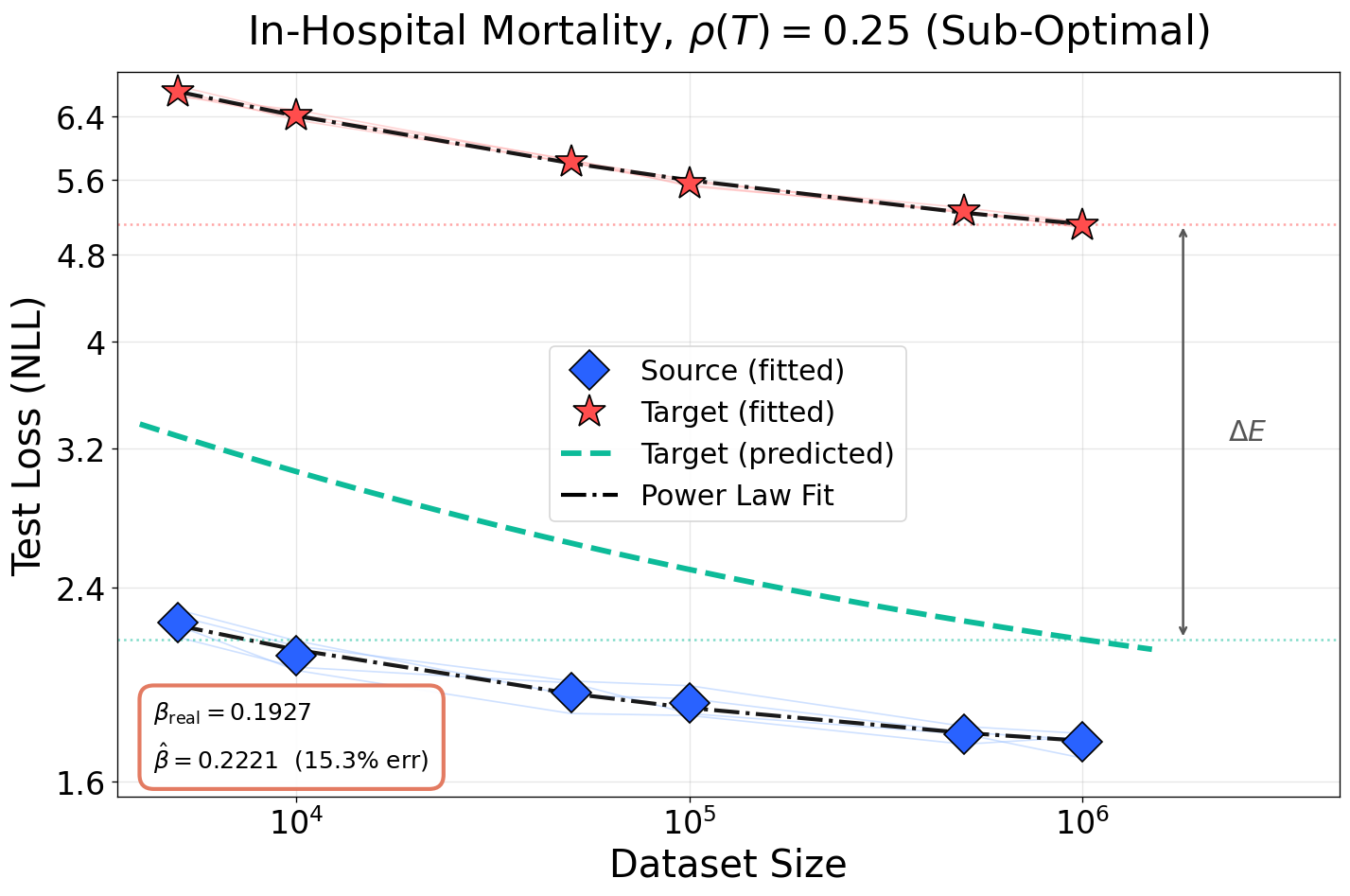} 
        \\
        {\small{(c)}}
        \end{tabular} \\
        \end{tabular}
    \end{minipage}
    \caption{At higher noise injection levels, our method still provides reasonable estimates (within an acceptable error range) for the key scaling law exponents. In contrast, for the baselines, not only does the irreducible loss persist, but the estimation of $\beta$ also shifts under these conditions.}
    \label{fig:ihm_plots_025}
\end{figure}

\begin{table}[h]
\centering
\caption{Cross-domain prediction results on the ImageNet-1K dataset. We evaluate scaling law parameter estimation under image quantization transformations and report all relevant parameters in the corresponding results.}
\label{tab:pixel_prediction}
\begin{tabular}{lcccc}
\toprule
\rowcolor{gray!25}
\textbf{Parameter} & \textbf{Formula} & \textbf{Values} & \textbf{Ground Truth} \\
\midrule
$\rho$               & Pixel number to point          & 0.25   & ---   \\
$\rho^{-\nu}$        & ---                            & 1.233  & ---   \\
$B_{\text{eff}}$     & $B_s \cdot \rho^{-\nu}$        & 43.16  & ---   \\
$\kappa(1-\rho)^\mu$ & ---                            & 2.025  & ---   \\
$\hat{E}_t$          & $E_s + \kappa(1-\rho)^\mu$     & 4.315  & 4.46  \\
$\hat{\beta}_t$      & refitted                       & 0.225  & 0.197 \\
\midrule
\rowcolor{gray!25}
\multicolumn{4}{c}{\textbf{Fixed parameters (from source domain fitting)}} \\
\multicolumn{4}{c}{$E_s = 2.29$ (irreducible loss)} \\
\multicolumn{4}{c}{$\beta_s = 0.28$ (source data-scaling exponent)} \\
\multicolumn{4}{c}{$\nu = 0.15$, $\kappa = 2.70$, $\mu = 1.0$ (transformation-type parameters)} \\
\multicolumn{4}{c}{$\alpha_s = 0.31$, $A_s = 20.03$, $B_s = 34.87$} \\
\bottomrule
\end{tabular}
\end{table}

\begin{table}[h]
\centering
\caption{Cross-domain prediction results on the ImageNet-1K dataset. We evaluate scaling law parameter estimation under noise injection and report all relevant parameters in the corresponding results. These experiments demonstrate that Info-Resolution Scaling Law provides reasonably accurate predictions for the key scaling law exponents under both types of transformations.}
\label{tab:snr_prediction_real_075}
\begin{tabular}{lcccc}
\toprule
\rowcolor{gray!25}
\textbf{Parameter} & \textbf{Formula} & $\boldsymbol{\rho}$ & \textbf{Real} \\
\midrule
$\rho$               & SNR from Eq.~\eqref{eq:noise}                & 0.75   & ---   \\
$\rho^{-\nu}$        & ---                            & 1.044  & ---   \\
$B_{\text{eff}}$     & $B_s \cdot \rho^{-\nu}$        & 29.95  & ---   \\
$\kappa(1-\rho)^\mu$ & ---                            & 0.588  & ---   \\
$\hat{E}_t$          & $E_s + \kappa(1-\rho)^\mu$     & 2.928  & 2.732 \\
$\hat{\beta}_t$      & refitted                       & 0.277  & 0.252 \\
\midrule
\rowcolor{gray!25}
\multicolumn{4}{c}{\textbf{Fixed parameters (from source domain fitting)}} \\
\multicolumn{4}{c}{$E_s = 2.34$ (irreducible loss)} \\
\multicolumn{4}{c}{$\beta_s = 0.31$ (source data-scaling exponent)} \\
\multicolumn{4}{c}{$\nu = 0.13$, $\kappa = 2.35$, $\mu = 1.0$ (transformation-type parameters)} \\
\multicolumn{4}{c}{$\alpha_s = 0.27$, $A_s = 21.25$, $B_s = 28.69$} \\
\bottomrule
\end{tabular}
\end{table}

\begin{table}[h]
\centering
\caption{Cross-corpus scaling prediction (setting ii) results for the next-token prediction task, supplementary to Table~\ref{tab:nsl_pred}. We additionally compare different $\rho$ estimators against the ground truth.}
\label{tab:rho_estimators}
\begin{tabular}{lccccc}
\toprule
\rowcolor{gray!25}
\textbf{Parameter} & \textbf{Formula} & $\boldsymbol{\rho}$ \textbf{vocab} & $\boldsymbol{\rho}$ \textbf{3-gram} & $\boldsymbol{\rho}$ \textbf{compress} & \textbf{Ground Truth} \\
\midrule
$\rho$              & estimated                       & 0.71  & 0.57  & 0.54  & ---   \\
$\rho^{-\nu}$       & ---                             & 1.066 & 1.112 & 1.124 & ---   \\
$B_{\text{eff}}$    & $B_s \cdot \rho^{-\nu}$         & 47.97 & 50.04 & 50.58 & ---   \\
$\kappa(1-\rho)^\mu$ & ---                            & 0.757 & 1.122 & 1.201 & ---   \\
$\hat{E}_t$         & $E_s + \kappa(1-\rho)^\mu$      & 3.557 & 3.922 & 4.001 & 4.413 \\
$\Delta E$          & $|\hat{E} - E_{\text{real}}|$   & 0.856 & 0.491 & 0.412 & ---   \\
$\hat{\beta}_t$     & refitted                        & 0.298 & 0.319 & 0.321 & 0.325 \\
\midrule
\rowcolor{gray!25}
\multicolumn{6}{c}{\textbf{Fixed parameters (from source domain fitting)}} \\
\multicolumn{6}{c}{$A_s = 24.96$, $\alpha_s = 0.35$ (model-capacity terms, invariant)} \\
\multicolumn{6}{c}{$B_s = 45.02$, $\beta_s = 0.33$ (source data-scaling)} \\
\multicolumn{6}{c}{$E_s = 2.80$ (source irreducible loss)} \\
\multicolumn{6}{c}{$\nu = 0.19$, $\kappa = 2.61$, $\mu = 1.0$ (transformation-type parameters)} \\
\bottomrule
\end{tabular}
\end{table}

\begin{table}[h]
\centering
\caption{Within dataset noise injection ($\rho(T) = 0.75$) scaling prediction (setting i) results for the in-hospital mortality task, supplementary to Table~\ref{tab:nsl_pred}.}
\label{tab:snr_prediction}
\begin{tabular}{lcccc}
\toprule
\rowcolor{gray!25}
\textbf{Parameter} & \textbf{Formula} & \textbf{Values} & \textbf{Ground Truth} \\
\midrule
$\rho$               & SNR from Eq.~16                & 0.75  & ---   \\
$\rho^{-\nu}$        & ---                            & 1.035 & ---   \\
$B_{\text{eff}}$     & $B_s \cdot \rho^{-\nu}$        & 12.42 & ---   \\
$\kappa(1-\rho)^\mu$ & ---                            & 1.05  & ---   \\
$\hat{E}_t$          & $E_s + \kappa(1-\rho)^\mu$     & 2.54  & 2.36  \\
$\Delta E$           & $|\hat{E} - E_{\text{real}}|$  & 0.18  & ---   \\
$\hat{\beta}_t$      & refitted                       & 0.296 & 0.284 \\
\midrule
\rowcolor{gray!25}
\multicolumn{4}{c}{\textbf{Fixed parameters (from source domain fitting)}} \\
\multicolumn{4}{c}{$E_s = 1.491$ (irreducible loss)} \\
\multicolumn{4}{c}{$\beta_s = 0.322$ (source data-scaling exponent)} \\
\multicolumn{4}{c}{$\nu = 0.121$, $\kappa = 3.102$, $\mu = 0.730$ (transformation-type parameters)} \\
\bottomrule
\end{tabular}
\end{table}

\begin{table}[h]
\centering
\caption{Within dataset noise injection ($\rho(T) = 0.5$) scaling prediction (setting i) results for the in-hospital mortality task, supplementary to Table~\ref{tab:nsl_pred}.}
\label{tab:snr_prediction_05}
\begin{tabular}{lcccc}
\toprule
\rowcolor{gray!25}
\textbf{Parameter} & \textbf{Formula} & \textbf{Values} & \textbf{Ground Truth} \\
\midrule
$\rho$               & SNR from Eq.~16                & 0.5    & ---   \\
$\rho^{-\nu}$        & ---                            & 1.087  & ---   \\
$B_{\text{eff}}$     & $B_s \cdot \rho^{-\nu}$        & 13.04  & ---   \\
$\kappa(1-\rho)^\mu$ & ---                            & 1.878  & ---   \\
$\hat{E}_t$          & $E_s + \kappa(1-\rho)^\mu$     & 3.427  & 3.378 \\
$\Delta E$           & $|\hat{E} - E_{\text{real}}|$  & 0.049  & ---   \\
$\hat{\beta}_t$      & refitted                       & 0.251  & 0.244 \\
\midrule
\rowcolor{gray!25}
\multicolumn{4}{c}{\textbf{Fixed parameters (from source domain fitting)}} \\
\multicolumn{4}{c}{$E_s = 1.493$ (irreducible loss)} \\
\multicolumn{4}{c}{$\beta_s = 0.320$ (source data-scaling exponent)} \\
\multicolumn{4}{c}{$\nu = 0.124$, $\kappa = 3.101$, $\mu = 0.730$ (transformation-type parameters)} \\
\bottomrule
\end{tabular}
\end{table}

\begin{table}[h]
\centering
\caption{Within dataset noise injection ($\rho(T) = 0.25$) scaling prediction (setting i) results for the in-hospital mortality task, supplementary to Table~\ref{tab:nsl_pred}.}
\label{tab:snr_prediction_025}
\begin{tabular}{lcccc}
\toprule
\rowcolor{gray!25}
\textbf{Parameter} & \textbf{Formula} & \textbf{Values} & \textbf{Ground Truth} \\
\midrule
$\rho$               & SNR from Eq.~16                & 0.25   & ---   \\
$\rho^{-\nu}$        & ---                            & 1.178  & ---   \\
$B_{\text{eff}}$     & $B_s \cdot \rho^{-\nu}$        & 14.14  & ---   \\
$\kappa(1-\rho)^\mu$ & ---                            & 2.479  & ---   \\
$\hat{E}_t$          & $E_s + \kappa(1-\rho)^\mu$     & 3.933  & 4.146 \\
$\Delta E$           & $|\hat{E} - E_{\text{real}}|$  & 0.213  & ---   \\
$\hat{\beta}_t$      & refitted                       & 0.182  & 0.193 \\
\midrule
\rowcolor{gray!25}
\multicolumn{4}{c}{\textbf{Fixed parameters (from source domain fitting)}} \\
\multicolumn{4}{c}{$E_s = 1.487$ (irreducible loss)} \\
\multicolumn{4}{c}{$\beta_s = 0.323$ (source data-scaling exponent)} \\
\multicolumn{4}{c}{$\nu = 0.119$, $\kappa = 3.104$, $\mu = 0.730$ (transformation-type parameters)} \\
\bottomrule
\end{tabular}
\end{table}

\section{Experiment Details}\label{app:exp_details}
All experiments are run on $8\times$ NVIDIA A5500 GPUs with $24$~GB memory per GPU.

\subsection{Dataset and Task Information.} \label{subapp:data_details}
We first evaluate scaling behavior on standard language modeling tasks, including next-token prediction and neural machine translation. For next-token prediction, we use two widely adopted benchmarks: WikiText-103 \citep{merity2016pointer} and OpenWebText \citep{Gokaslan2019OpenWeb}. WikiText-103 comprises over 100 million tokens drawn from verified “Good” and “Featured” Wikipedia articles. OpenWebText is substantially larger, containing over 10 billion tokens across more diverse domains, and consists of high-quality web content curated from a large collection of URLs. For the neural machine translation task, we use subsets of the Colossal Clean Crawled Corpus (C4) \citep{dodge2021documenting}, which contains over 156 billion tokens collected from more than 365 million domains and has been used to train models such as T5 and the Switch Transformer \citep{fedus2022switch}. We evaluate scaling behavior for vision models using image classification tasks on standard benchmarks such as ImageNet-1K \citep{deng2009imagenet} and CIFAR-100 \citep{krizhevsky2009learning}. In addition, we evaluate scaling behavior on a speech recognition task using the LibriSpeech dataset \citep{panayotov2015librispeech}, a large-scale corpus derived from audiobooks that contains approximately 1,000 hours of English speech. We also study a healthcare-domain time-series classification task using the MIMIC-IV dataset \citep{johnson2023mimic}, a comprehensive database of de-identified electronic health records from patients admitted to medical centers. Our focus is on clinical time-series data, including high-frequency vital signs (e.g., heart rate and blood pressure) and laboratory measurements recorded during ICU stays.

\subsection{Implementation Details.}
For language modeling, we conduct pretraining experiments using a Llama 3 style causal Transformer architecture \citep{dubey2024llama}. We employ the GPT-2 tokenizer with a context length of 1024. For each scaling law study, we consider 6 dataset sizes and 5 model sizes for each corresponding dataset. We focus on two types of bijective transformations: linear transformations, which apply a random matrix to token IDs, and nonlinear transformations, which apply a multi-layer stochastic neural network to the resulting token embeddings before they are fed into the model; note that, the transformation network is not trained jointly with the model. For non-bijective transformations, we apply quantization and low-rank projection at varying levels to the token embeddings. For ImageNet-1K, we use Vision Transformer \citep{dosovitskiy2020image} models with four different capacities and six dataset sizes. For CIFAR-100, we instead use ResNet architectures \citep{he2016deep} with four different capacities and six dataset sizes. For the LibriSpeech dataset, we employ Wav2Vec2 \citep{baevski2020wav2vec} models with four different model sizes and five dataset sizes. 

\subsection{Hyper-parameters and Model Configurations}
Tables~\ref{tab:lm_datasets} and \ref{tab:vision_speech_datasets} summarize the dataset configurations across different modalities used in the scaling law experiments. Table~\ref{tab:model_configs} provides comprehensive details of the model architectures and training configurations employed across modalities.

\begin{table}[h]
\centering
\caption{Language modeling dataset configurations.}
\label{tab:lm_datasets}
\begin{tabular}{l p{9cm}}
\toprule
\textbf{Property} & \textbf{Value} \\
\midrule
\rowcolor{gray!25}
\multicolumn{2}{l}{\textbf{WikiText-103}} \\
Source              & HuggingFace \texttt{wikitext-103-v1} \\
Tokenizer           & GPT-2 \\
Vocabulary size     & 50,257 \\
Sequence length     & 1,024 tokens (chunked, non-overlapping) \\
Train sizes tested  & 100K, 300K, 1M, 3M, 10M, 30M, 100M, 300M, 1B, 3B tokens \\
Train batch size    & 32 \\
\midrule
\rowcolor{gray!25}
\multicolumn{2}{l}{\textbf{OpenWebText}} \\
Source              & HuggingFace \texttt{openwebtext} \\
Split               & 98\% train / 1\% val / 1\% test \\
Tokenizer           & GPT-2 \\
Vocabulary size     & 50,257 \\
Sequence length     & 1,024 tokens \\
Train sizes tested  & 3M, 10M, 30M, 100M, 300M, 1B, 3B, 10B tokens \\
\bottomrule
\end{tabular}
\end{table}

\begin{table}[h]
\centering
\caption{Vision and speech dataset configurations.}
\label{tab:vision_speech_datasets}
\begin{tabular}{l p{9cm}}
\toprule
\textbf{Property} & \textbf{Value} \\
\midrule
\rowcolor{gray!25}
\multicolumn{2}{l}{\textbf{ImageNet}} \\
Classes             & 1,000 \\
Image size          & $224 \times 224$ \\
Patch size (ViT)    & $16 \times 16 \rightarrow 196$ patches $+ 1$ CLS $= 197$ tokens \\
Normalization       & mean $= [0.485, 0.456, 0.406]$, std $= [0.229, 0.224, 0.225]$ \\
Train transform     & RandomResizedCrop(224) + RandomHorizontalFlip + Normalize \\
Val/test transform  & Resize(256) + CenterCrop(224) + Normalize \\
Train sizes tested  & 1K, 3K, 10K, 30K, 100K, 300K samples \\
Batch size          & 128 (train and eval) \\
Epochs              & 100 \\
Sampling            & Stratified, nested (smaller sets are true subsets of larger) \\
\midrule
\rowcolor{gray!25}
\multicolumn{2}{l}{\textbf{LibriSpeech}} \\
Train splits        & train-clean-100 + train-clean-360 + train-other-500 \\
Dev / test splits   & dev-clean / test-clean \\
Sample rate         & 16,000 Hz \\
Max audio length    & 20.0 seconds (truncated) \\
Avg utterance duration & 6.5 seconds \\
CTC vocabulary      & 32 (character-level) \\
CNN downsampling    & $320\times$ \\
Train sizes tested  & 10h $\approx$ 5,538 samples; 50h $\approx$ 27,692; 100h $\approx$ 55,384; 500h $\approx$ 276,923 \\
Batch size          & 32 \\
Metric              & WER + CTC loss \\
\bottomrule
\end{tabular}
\end{table}

\begin{table}[h]
\centering
\caption{Model architecture and training configurations across modalities.}
\label{tab:model_configs}
\small
\begin{tabular}{l p{3.6cm} p{3.6cm} p{3.6cm}}
\toprule
\textbf{Parameter / Property} & \textbf{Text (LM)} & \textbf{Image (ViT)} & \textbf{Speech (Wav2Vec2)} \\
\midrule
\rowcolor{gray!25}
\multicolumn{4}{l}{\textbf{Model Architecture}} \\
Size Index 0 & Hidden 16, Layers 2, Heads 2, FFN 64, Params $\sim$827K & ViT-Tiny: Embed 192, Depth 12, Heads 3, FFN 768, Params $\sim$5.5M & wav2vec2-tiny: Hidden 256, Layers 4, Heads 4, FFN 1024, Params $\sim$4M \\
Size Index 1 & Hidden 32, Layers 4, Heads 4, FFN 128, Params $\sim$1.7M & ViT-Small: Embed 384, Depth 12, Heads 6, FFN 1536, Params $\sim$22M & wav2vec2-small: Hidden 384, Layers 6, Heads 6, FFN 1536, Params $\sim$13M \\
Size Index 2 & Hidden 64, Layers 6, Heads 8, FFN 256, Params $\sim$3.6M & ViT-Base: Embed 768, Depth 12, Heads 12, FFN 3072, Params $\sim$86M & wav2vec2-medium: Hidden 512, Layers 8, Heads 8, FFN 2048, Params $\sim$30M \\
Size Index 3 & Hidden 128, Layers 8, Heads 8, FFN 512, Params $\sim$8.1M & ViT-Large: Embed 1024, Depth 24, Heads 16, FFN 4096, Params $\sim$307M & wav2vec2-large: Hidden 640, Layers 10, Heads 10, FFN 2560, Params $\sim$55M \\
Size Index 4 & Hidden 256, Layers 10, Heads 8, FFN 1024, Params $\sim$21M & --- & wav2vec2-xlarge: Hidden 768, Layers 12, Heads 12, FFN 3072, Params $\sim$91M \\
Size Index 5 & Hidden 512, Layers 12, Heads 12, FFN 2048, Params $\sim$64M & --- & --- \\
Size Index 6 & Hidden 640, Layers 24, Heads 16, FFN 2560, Params $\sim$151M & --- & --- \\
Size Index 7 & Hidden 768, Layers 36, Heads 16, FFN 3072, Params $\sim$294M & --- & --- \\
Size Index 8 & Hidden 1024, Layers 48, Heads 32, FFN 4096, Params $\sim$657M & --- & --- \\
\midrule
\rowcolor{gray!25}
\multicolumn{4}{l}{\textbf{Training Hyperparameters}} \\
Optimizer           & AdamW & AdamW & AdamW \\
Learning rate       & $5\!\times\!10^{-4}$ & $1\!\times\!10^{-4}$ & $5\!\times\!10^{-4}$ \\
$\beta_1, \beta_2$  & 0.9, 0.95 & 0.9, 0.999 & 0.9, 0.95 \\
Weight decay        & 0.01 & 0.05 & 0.01 \\
Grad clip           & 1 & 1 & 1 \\
LR schedule         & Linear warmup + linear decay to 0 & Linear warmup + linear decay to 0 & Linear warmup + linear decay to 0 \\
Warmup steps        & 1,000 & 1,000 & 1,000 \\
Precision           & FP16 & FP16 & FP16 \\
Max steps / epochs  & 1000 epochs & 100 epochs & 1000 epochs \\
\bottomrule
\end{tabular}
\end{table}

\end{document}